\documentclass{article}

\usepackage{geometry}
\geometry{
    left=1in,
    right=1in,
    top=1in,
    bottom=1in,
}
\usepackage{preamble}
\usepackage{macros}
\usepackage{natbib}
\usepackage{setspace}


\setlength{\parskip}{4pt}
\addtolength{\skip\footins}{8pt}
\allowdisplaybreaks


\title{Thompson Sampling with Diffusion Generative Prior}
\author{
    Yu-Guan Hsieh
    \thanks{Work done during internship at Amazon.}\\
    Université Grenoble Alpes\\
    \texttt{yu-guan.hsieh@univ-grenoble-alpes.fr}
    \and
    Shiva Prasad Kasiviswanathan\\
    Amazon\\
    \texttt{kasivisw@gmail.com}
    \and
   \hspace*{0.5in}  Branislav Kveton\\
    \hspace*{0.5in} AWS AI Labs\\
   \hspace*{0.5in} \texttt{bkveton@amazon.com}
    \and \and 
   \hspace*{0.4in} Patrick Blöbaum\\
   \hspace*{0.4in}  Amazon\\
    \hspace*{0.4in} \texttt{bloebp@amazon.com}
}
\predate{}
\date{}
\postdate{}

\begin{document}

\maketitle




\begin{abstract}
    In this work, we initiate the idea of using denoising diffusion models to learn priors for online decision making problems. 
    We specifically focus on the meta-learning for bandit framework, aiming to learn a strategy that performs well across bandit tasks of a same class.
    To this end, we train a diffusion model that learns the underlying task distribution and combine Thompson sampling with the learned prior to deal with new tasks at test time. Our posterior sampling algorithm is designed to carefully balance between the learned prior and the noisy observations that come from the learner's interaction with the environment. To capture realistic bandit scenarios, we also propose a novel diffusion model training procedure that trains even from incomplete and noisy data, which could be of independent interest. Finally, our extensive experimental evaluations clearly demonstrate the potential of the proposed approach.
\end{abstract}

\newpage
\begin{spacing}{0.85}
\tableofcontents
\end{spacing}
\newpage

\section{Introduction}
\label{sec:introduction}
Uncertainty quantification is an integral part of online decision making and forms the basis of various online algorithms that trade-off exploration against exploitation~\citep{puterman94markov,sutton98reinforcement}.
Among these methods, Bayesian approaches allow us to quantify the uncertainty using probability distributions, with the help of the powerful tools of Bayesian inference.
Nonetheless, their performance is known to be sensitive to the choice of prior~\citep{murphy2022probabilistic}. 

For concreteness, let us consider the problem of stochastic \acp{MAB} \citep{BCB12,LS20}, in which a learner repeatedly pulls one of the $\nArms$ arms from a given set $\arms=\intinterval{1}{\nArms}$ and receives rewards that depend on the learner's choices.
More precisely, when arm $\vt[\arm]$ is pulled at round $\run$, the learner receives reward $\vt[\reward]\in\R$ drawn from an arm-dependent distribution $\va[\distribution][\vt[\arm]]$.
The goal of the learner is either to
\begin{enumerate*}[\itshape i\upshape)]
    \item accumulate the highest possible reward over time 
    (\aka regret-minimization)
    or to
    \item find the arm with the highest expected reward within a prescribed number of rounds
    (\aka best-arm identification, see \citealp{evendar06action,bubeck09pure,audibert10best}).
\end{enumerate*} 




For both purposes, we need to have a reasonable estimate of the arms' mean rewards $\va[\meanreward]=\ex_{\va[\reward]\sim\va[\distribution]}[\va[\reward]]$. In general, this would require us to pull each arm a certain number of times, which becomes inefficient when $\nArms$ is large.
While the no-free-lunch principle prevents us from improving upon this bottleneck in general situations,
it is worth noticing that the bandit instances (referred as tasks hereinafter) that we encounter in most practical problems are far from arbitrary.
To name a few examples, in recommendation systems, each task corresponds to a user with certain underlying preferences that affect how much they like each item; in online shortest path routing, we operate in real-world networks that feature specific characteristics. 
In this regard, introducing such \emph{inductive bias} to the learning algorithm would be beneficial.
In Bayesian models, this can be expressed through the choice of the prior distribution.
Moreover, as suggested by the meta-learning paradigm, the prior itself can also be learned from data, which often leads to superior performance \citep{rothfuss2021meta, hospedales2021meta}. This has led to the idea of meta-learning a prior for bandits \citep{peleg2022metalearning,cella2020meta,basu2021no}.









On the other hand, we have recently witnessed the success of deep generative modeling in producing high-quality synthetic data across various modalities~\citep{saharia2022photorealistic,wu2021protein,brown2020language}.
The impressive results show that these models come out as a powerful tool for modeling complex distributions.
While different models have their own strengths and weaknesses, diffusion models~\citep{sohl2015deep,ho2020denoising} are particularly appealing for our use case because their iterative sampling scheme makes them more flexible to be applied on a downstream task.
In this regard, this paper attempts to answer the following question:
\begin{center}
    \textit{
    Can diffusion models provide better priors to address the exploration-exploitation trade-off in bandits?
    }
\end{center}

\paragraph{Our Contributions.}
In this work, we initiate the idea of using diffusion models to meta-learn a prior for bandit problems. 
Our focus is on designing algorithms that have good empirical performance while being mathematically meaningful.
Working towards this direction, we make the following contributions:
\vspace{-0.2em}
\begin{list}{{\bf (\alph{enumi})}}{\usecounter{enumi}
\setlength{\leftmargin}{3ex}
\setlength{\listparindent}{0ex}
\setlength{\parsep}{0pt}}
    \item We propose a new Thompson sampling scheme that incorporates a prior represented by a diffusion model.
    The designed algorithm strikes a delicate balance between the learned prior and bandit observations, bearing in mind the importance of having an accurate uncertainty estimate. 
    In particular, the deployment of the diffusion model begins with a variance calibration step.
    Then, in each round of the interaction, we summarize the interaction history by a masked vector of dimension $\nArms$, and perform posterior sampling with a modified iterative sampling process that makes use of this vector.
    \item Standard diffusion model training uses noise-free samples. 
    Such data are however nearly impossible to obtain in most bandit applications. To overcome this limitation, we propose a novel diffusion model training procedure which succeeds with even incomplete and noisy data. 
    Our method alternates between sampling from the posterior distribution and minimizing a tailored loss function that is suited to imperfect data. 
    We believe that this training procedure could be of interest beyond its use in bandit problems.
    \item We perform experimental evaluations on various synthetic and real datasets to demonstrate the benefit of the considered approach against several baseline methods, including Thompson sampling with Gaussian     prior~\citep{thompson1933likelihood}, Thompson sampling with \ac{GMM} prior~\citep{hong2022thompson}, and UCB1~\citep{auer2002using}.
    The results confirm that the use of diffusion prior consistently leads to improved performance.
    The improvement is especially significant when the underlying problem is complex.
\end{list}

\paragraph{Related Work.} 
Prior to our work, the use of diffusion models in decision making has been explored by \citet{janner2022planning,ajay2022conditional}, who used conditional diffusion models to synthesize trajectories in offline decision making. Their approaches demonstrated good performance on various benchmarks. In contrast, our focus is on online decision making, where exploration is crucial for the success of the algorithm. Additionally, we use diffusion models to learn a task prior, rather than a distribution specific to a single task.

More generally, diffusion models have been used as priors in various areas, primarily for the goal of inverse problem solving.
From a high-level perspective, one of the most common approach for diffusion model posterior sampling is to combine each unconditional sampling step with a step that ensures coherence with the observation.
This approach was taken by \citet{sohl2015deep,song2022solving,chung2022diffusion} and the posterior sampling algorithm that we propose can also be interpreted in this way.
Alternatively, close form expression for the conditional score function and the conditional reverse step can be derived if we assume the observed noise is carved from the noise of the diffusion process, as shown in \citet{kawar2021snips,kawar2022denoising}.
Yet another solution is to approximate the posterior with a Gaussian distribution \citep{graikos2022diffusion}.
In this case, samples are reconstructed by minimizing a weighted sum of the denoising loss and a constraint loss, rather than using an iterative sampling scheme.
For sake of completeness, in \cref{apx:related} we provide a more thorough comparison between our method and existing diffusion model posterior sampling algorithms. 

Regarding the algorithmic framework, we build upon the well-known Thompson sampling idea introduced by \citet{thompson1933likelihood} nearly a century ago.
It has reemerged as one of the most popular algorithms for bandit problems in the last decade  due to its simplicity and generality \citep{chapelle11empirical,russo14learning,russo2018tutorial}.
Nonetheless, it is only until more recently that a series of work \citep{lu19informationtheoretic,simchowitz2021bayesian} provides a through investigation into the influence of the algorithm's prior, and confirms the benefit of learning a meta-prior in  bandits via both empirical and theoretical evidence \citep{cella2020meta,basu2021no,kveton2021meta,peleg2022metalearning}.
The main difference between our work and the above is the use of a more complex prior, which also goes beyond the previously studied mixture prior \citep{hong2022thompson} and multi-layered Gaussian prior \citep{hong22deep}.
On a slightly different note, a large corpus of work have investigated other ways to encode prior knowledge, including the use of arm hierarchy \citep{sen21topk}, graphs \citep{valko14spectral}, or more commonly a latent parameter shared by the arms \citep{lattimore14bounded,maillard14latent,hong20latent,gupta2020unified}.
The use of neural network for contextual bandits was specifically studied by \citet{riquelme18deep}, where the authors compared a large number of methods that perform Thompson sampling of network models and found that measuring uncertainty with simple models (\eg linear models) on top of learned representations often led to the best results.
Instead, we focus on non-contextual multi-armed bandits and use neural networks to learn a prior rather than using it to parameterize actions.
This viewpoint is thus complementary to the above more standard usage of neural networks in bandits.

\paragraph{Notation.} 
All the variables are multi-dimensional unless otherwise specified.
For a vector $x$, $x^a$ represents its $a$-th coordinate, 
$x^2$ represents its coordinate-wise square, and $\diag(x)$ represents the diagonal matrix with the elements of $x$ on the diagonal.
A sequence of vectors $(\vdiff[x][l])_{l\in\intinterval{l_1}{l_2}}$ is written as $\vdiff[x][\intintervalalt{l_1}{l_2}]$. 
To distinguish random variables from their realization, we represent the former with capital letters and the latter with the corresponding lowercase letters.
Conditioning on $X=x$ is then abbreviated as $\cdot\given x$.
A Gaussian distribution centered at $\mu\in\R^{d}$ with covariance $\Sigma\in\R^{d\times d}$ is written as $\gaussian(X;\mu,\Sigma)$ or simply $\gaussian(\mu,\Sigma)$ if the random variable in question is clear from the context.
Finally, $\oneto{n}$ denotes the sequence of integers $\intinterval{1}{n}$.





\section{Preliminaries and Problem Description}
\label{sec:preliminaries}

In this section, we briefly review denoising diffusion models and introduce our meta-learning for bandits framework.

\subsection{Denoising Diffusion Probabilistic Model}
\label{subsec:diffusion}

First introduced by \citet{sohl2015deep} and recently popularized by \citet{ho2020denoising} and \citet{song2019generative}, denoising diffusion models (or the closely related score-based models) have demonstrated state-of-the-art performance in various data generation tasks.
A large number of variants of these models have been proposed since then.
In this paper, we primarily follow the notation and formulation of \citet{ho2020denoising}, with minor modifications to suit our purposes.

Intuitively speaking, diffusion models learn to approximate a distribution $\vdiff[\distributionalt][0]$ over $\R^{\vdim}$ by training a series of denoisers with samples drawn from this distribution.
Writing $\densityalt$ for the probability density function (assume everything is Lebesgue measurable for simplicity) and  $\vt[\rvlat][0]$ for the associated random variable,
we define the forward diffusion process with respect to a sequence of scale factors $(\vdiff[\diffscaling])\in(0,1)^{\nDiffsteps}$ by
\[
\densityalt(\vdiff[\latent][\intintervalalt{1}{\nDiffsteps}]\given\vdiff[\latent][0])
= \prod_{\diffstep=0}^{\nDiffsteps-1} \densityalt(\vdupdate[\latent]\given\vdiff[\latent]),
~~~~~
\densityalt(\vdiff[\rvlat][\diffstep+1]\,\vert\, \vdiff[\latent])
= \gaussian(\vdiff[\rvlat][\diffstep+1];
\sqrt{\vdiff[\diffscaling][\diffstep+1]}\vdiff[\latent], (1-\vdiff[\diffscaling][\diffstep+1])\Id_{\vdim}).
\] 
%
The first equality suggests that the forward process forms a Markov chain that starts at $\vdiff[\latent][0]\in\R^{\vdim}$,
while the second equality implies that the transition kernel is Gaussian.
Further denoting the product of the scale factors by $\vdiff[\diffscalingprod]=\prod_{\indg=1}^{\diffstep}\vdiff[\diffscaling][\indg]$, we then have
$\densityalt(\vdiff[\rvlat][\diffstep]\,\vert\, \vdiff[\latent][0])
= \gaussian(\vdiff[\rvlat][\diffstep];
\sqrt{\vdiff[\diffscalingprod][\diffstep]}\vdiff[\latent][0], (1-\vdiff[\diffscalingprod][\diffstep])\Id_{\vdim})$.

The sequence $(\vdiff[\diffscaling])\in(0,1)^{\nDiffsteps}$ is chosen to be decreasing and such that $\vdiff[\diffscalingprod][\nDiffsteps] \approx 0$. We thus expect $\densityalt(\vdiff[\rvlat][\diffstep]) \approx \gaussian(0, \Id_{\vdim})$.
A denoising diffusion model learns to reverse the diffusion process by optimizing a certain parameter $\param$ that defines a distribution $\distribution_{\param}$
over random variables $\vdiff[\alt{\rvlat}][\intintervalalt{0}{\nDiffsteps}]$.
The hope is that the marginal distribution $\distribution_{\param}(\vdiff[\alt{\rvlat}][0])$ would be a good approximation of $\vdiff[\distributionalt][0]$.
In practice, this is achieved by setting $\density_{\param}(\vdiff[\rvlat][\diffstep]) = \gaussian(0, \Id_{\vdim})$, enforcing the learned reverse process to be Markovian, 
and modeling $\density_{\param}(\vdiff[\rvlat]\given\vdupdate[\latent])$ as a Gaussian parameterized by\footnote{With a slight abuse of notation, we drop the prime from $\vdiff[\alt{\rvlat}][\intintervalalt{0}{\nDiffsteps}]$ in the remaining of the work, but one should keep in mind that the distributions of $\vdiff[\rvlat][\intintervalalt{0}{\nDiffsteps}]$ induced by the forward process and of $\vdiff[\alt{\rvlat}][\intintervalalt{0}{\nDiffsteps}]$ modeled by the diffusion model are distinct.}
\begin{equation}
\label{eq:DDPM-reverse}
\begin{aligned}[b]
    \density_{\param}(\vdiff[\rvlat]\given\vdupdate[\latent])
    &=\densityalt(\vdiff[\rvlat]\given\vdupdate[\latent],\vdiff[\rvlat][0]
    =
    \underbrace{\denoiser_{\param}(\vdupdate[\latent],\diffstep+1))}_{\vdiff[\est{\latent}][0]}\\
    &=
    \gaussian\left(
    \vdiff[\rvlat];
    \frac{\sqrt{\vdiff[\diffscalingprod]}(1-\vdupdate[\diffscaling])}{
    1-\vdupdate[\diffscalingprod]}
    \vdiff[\est{\latent}][0]
    +
    \frac{\sqrt{\vdupdate[\diffscaling]}(1-\vdiff[\diffscalingprod])}{
    1-\vdupdate[\diffscalingprod]}
    \vdupdate[\latent]
    ;
    \frac{1-\vdiff[\diffscalingprod]}{1-\vdupdate[\diffscalingprod]}
    (1-\vdupdate[\diffscaling])\Id_{\vdim}
    \right).
    \vspace*{-0.5em}
\end{aligned}
\end{equation}
%
%
In the above $\denoiser_{\param}$ is the learned denoiser
and $\vdiff[\est{\latent}][0]=\denoiser_{\param}(\vdupdate[\latent],\diffstep+1)$ is the denoised sample 
obtained by taking in as input both the diffused sample $\vdupdate[\latent]$ and the diffusion step count $\diffstep+1$.\footnote{%
To obtain $\denoiser_{\param}$ we typically train a neural network with a U-Net architecture.
In \citep{ho2020denoising}, this network is trained to output the predicted noise $\vdiff[\bar{\noise}]
=(\vdiff[\latent]
-\sqrt{\vdiff[\diffscalingprod]}
\denoiser_{\param}(\vdiff[\latent], \diffstep))
/\sqrt{1-\vdiff[\diffscalingprod]}$.}
To see why the second equality of \eqref{eq:DDPM-reverse} holds, we write
\begin{equation}
    \notag
    \begin{aligned}
    \densityalt(\vdiff[\rvlat]\given\vdupdate[\latent],\vdiff[\rvlat][0]=\vdiff[\est{\latent}][0])
    \propto
    \densityalt(\vdupdate[\latent]\given\vdiff[\rvlat],\vdiff[\rvlat][0]=\vdiff[\est{\latent}][0])
    \densityalt(\vdiff[\rvlat]\given\vdiff[\vdiff[\rvlat][0]=\est{\latent}][0])
    = \densityalt(\vdupdate[\latent]\given\vdiff[\rvlat])
    \densityalt(\vdiff[\rvlat]\given\vdiff[\vdiff[\rvlat][0]=\est{\latent}][0]).
    \end{aligned}
\end{equation}
By the definition of the forward process, we have $\densityalt(\vdupdate[\latent]\given \vdiff[\rvlat])
= \gaussian(\vdupdate[\latent];
\sqrt{\vdupdate[\diffscaling]}\vdiff[\rvlat], (1-\vdupdate[\diffscaling])\Id_{\vdim})$
and
$
\densityalt(\vdiff[\rvlat]\given\vdiff[\rvlat][0]=\vdiff[\est{\latent}][0])
=
\gaussian(\vdiff[\rvlat];\sqrt{\vdiff[\diffscalingprod]}\vdiff[\est{\latent}][0],(1-\vdiff[\diffscalingprod])\Id_{\vdim}
)$.
The equality then follows immediately.

\subsection{Meta-Learning of Bandit Tasks}

Our work focuses on meta-learning problems in which the tasks are bandit instances drawn from an underlying distribution that we denote by $\taskdistribution$.
As in standard meta-learning, the goal is to learn an inductive bias from the meta training set that would improve the overall performance of an algorithm on new tasks drawn from the same distribution.
In the context of this paper, the inductive bias is encoded in the form of a prior distribution that would be used by the Thompson sampling algorithm  when the learner interacts with new bandit instances.

For the sake of simplicity, we restrict our attention to the multi-armed bandit scenario presented in \cref{sec:introduction}, with the additional assumption that the noise in the rewards are Gaussian with known variance $\noisedevbandit^2\in\R$.\footnote{
We make this assumption as we are using diffusion prior.
As far as we are aware, all the existing diffusion model posterior sampling algorithms for the case of Gaussian noise either rely on this assumption or circumvent it by adding some adjustable hyperparameter.
How to extend these algorithms to cope with unknown noise variance properly is an interesting open question.}
The only unknown information is thus the vector of the mean rewards $\meanreward=(\va[\meanreward])_{\arm\in\arms}$.
In this specific situation,
Thompson sampling takes as input a prior distribution over $\R^{\nArms}$, samples a guess $\vt[\sampled{\meanreward}]$ of the mean reward vector from the posterior distribution at each round $\run$, and pulls arm $\vt[\arm]\in\argmax_{\arm\in\arms}\vta[\sampled{\meanreward}]$ in that round. 
The posterior distribution itself is determined by both the prior and the interaction history, \ie the sequence of the action-reward pairs $\vt[\history][\run-1]=(\vt[\arm][\runalt],\vt[\reward][\runalt])_{\runalt\in\intinterval{1}{\run-1}}$.

As for the meta-training phase, we consider two situations that are distinguished by whether the learner has access to \emph{perfect} data or not.
In the former case, the meta-training set is composed of the exact means $\trainset=\{{\meanreward_{\task}}\}_{\task}$ of training tasks $\task$ drawn from the distribution $\taskdistribution$, whereas in the latter case the training set is composed of incomplete and noisy observations of these vectors (see \cref{sec:training} for details). We use the term {\em imperfect} data to informally refer to the scenario where the data is incomplete and noisy. The entire algorithm flow is summarized in \cref{fig:framework-overview} and \cref{algo:MetaBandits}. The model training and the variance calibration blocks together define the diffusion prior, which is then used by Thompson sampling in the deployment phase, as we will immediately see in \cref{sec:post-sampling}.
\begin{figure*}[t]
\centering
\begin{minipage}[b]{.72\textwidth}
    \centering
    \includegraphics[width=0.94\textwidth]{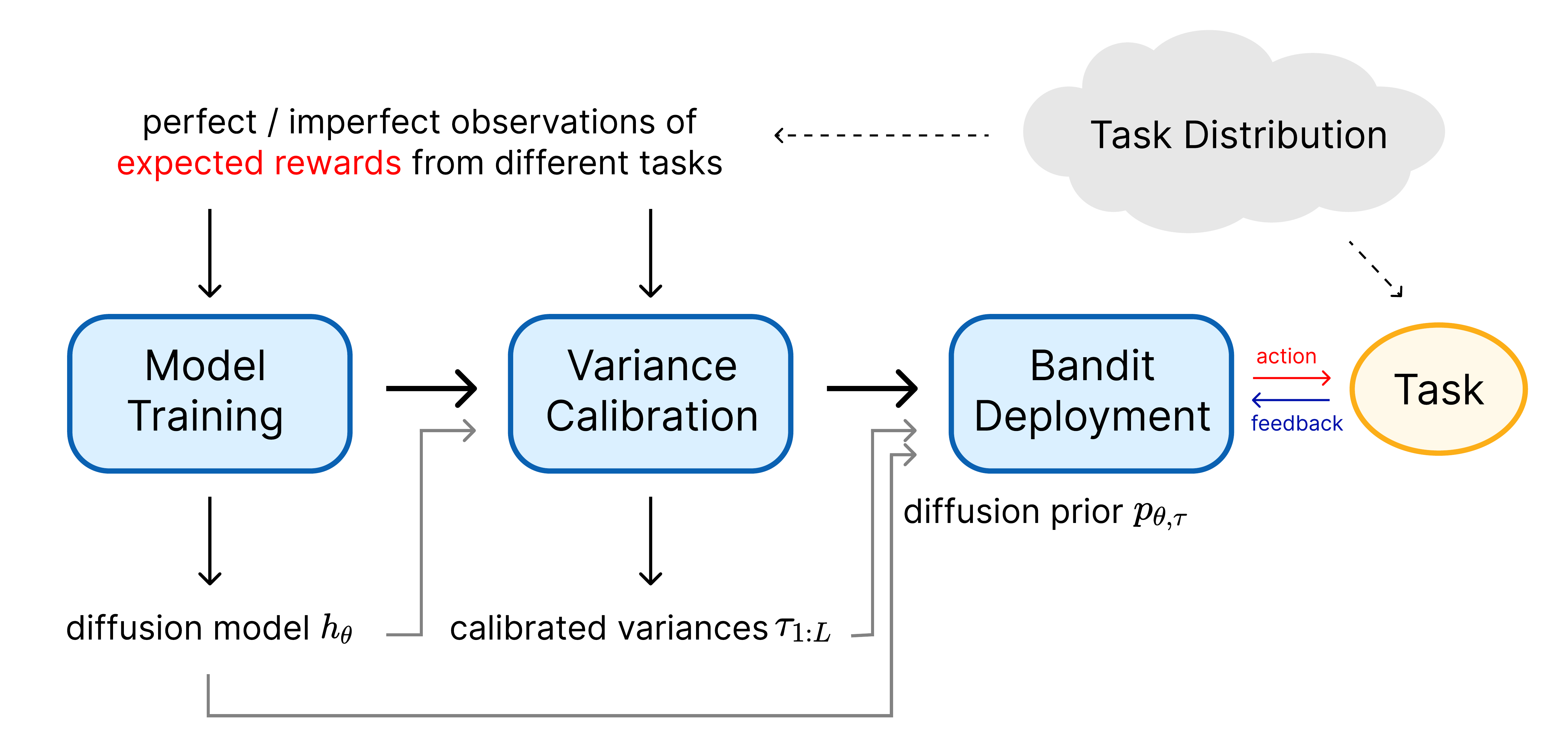}
    \captionof{figure}{Overview of the meta-learning for bandits with diffusion prior framework.}
  \label{fig:framework-overview}
\end{minipage}
\end{figure*}
\begin{algorithm}[htb]
    \caption{Meta-learning for Bandits with Diffusion Models}
    \label{algo:MetaBandits}
\begin{algorithmic}[1]
    \STATE \uline{\texttt{Meta-Training Phase a): Diffusion Model Training}}
    \vspace{0.1em}
    \STATE {\bfseries Input:}
     Training set containing reward observations from different tasks
    \STATE Train a diffusion model $\denoiser_{\param}$ to model the distribution of the mean rewards (in case of imperfect data use \cref{algo:training-imperfect})
    \vspace{0.1em}
    \STATE \uline{\texttt{Meta-Training Phase b): Variance Calibration}}
    \vspace{0.1em}
    \STATE {\bfseries Input:}
     Diffusion model $\denoiser_{\param}$ and calibration set containing reward observations from different tasks
    \STATE Use~\cref{algo:var-calib} to estimate the mean squared reconstruction errors     $\vdiff[\diffbackdev][\intintervalalt{1}{\nDiffsteps}]$ of the model $\denoiser_{\param}$ from different diffusion steps to calibrate the variance of each reverse step (in case of imperfect data use  \cref{algo:var-calib-imperfect})
    \vspace{0.1em}
    \STATE \uline{\texttt{Meta-Deployment Phase}}
    \vspace{0.1em}
    \STATE {\bfseries Input:}
    Diffusion model $\denoiser_{\param}$, reconstruction error         $\vdiff[\diffbackdev][\intintervalalt{1}{\nDiffsteps}]$, and assumed noise level $\est{\noisedev}$
    \STATE For any new task, run Thompson sampling with diffusion prior (\cref{algo:DiffTS}) with provided parameters 
\end{algorithmic}
\end{algorithm}




\section{Using Trained Diffusion Models in Thompson Sampling}
\label{sec:post-sampling}
In this section, we describe how a learned diffusion model can be incorporated as a prior for Thompson sampling.
To begin, we first revisit the probability distribution defined by the diffusion model by introducing an additional variance calibration step.
After that, we present our Thompson sampling algorithm that uses this new prior.


\subsection{Variance Calibration}
\label{subsec:diff-prior}

While \citet{ho2020denoising} fixed the variance of $\density_{\param}(\vdiff[\rvlat]\given\vdupdate[\latent])$ to that of $\densityalt(\vdiff[\rvlat]\given\vdupdate[\latent],\vdiff[\latent][0])$ as expressed by \cref{eq:DDPM-reverse}, it was recently shown by  \citet{bao2021analytic} that this choice was sub-optimal.
This is critical when we use diffusion model as prior in online decision problems, as it prevents us from quantifying the right level of uncertainty. 
To remedy this, we follow \citet{bao2022estimating} and calibrate the variances of the reverse process with a calibration set $\calset=\{\vdiffs[\latent][0]\}_{\indsample\in\oneto{\nSamples_{\calibration}}}$.
In the above, $\nSamples_{\calibration}$ is the number of calibration samples and we use subscript $\indsample$ to denote a particular data point.

Our calibration step starts by quantifying the uncertainty of the denoiser output.
Concretely, we model the distribution of $\vdiff[\rvlat][0]\given\vdiff[\latent]$ by a Gaussian distribution centered at  the denoised sample $\denoiser_{\param}(\vdiff[\latent],\diffstep)$.
As for the covariance of the distribution, we take it as the diagonal matrix $\diag(\vdiff[\diffbackvar])$ whose entries are given by 
%
\[
\vadiff[\diffbackdev]
= \sqrt{
\frac{1}{\nSamples_{\calibration}}
\sum_{\indsample=1}^{\nSamples_{\calibration}}
\norm{\vadiffs[\latent][\arm][0]-\denoiser_{\param}^{\arm}(\vdiffs[\latent],\diffstep)}^2}
\]
In words, for each sample $\vdiffs[\latent][0]$, we first diffuse it through the forward process to obtain $\vdiffs[\latent]$.
Then we compute the coordinate-wise mean squared error between the predicted $\vdiff[\est{\latent}][0]$ and the actual $\vdiff[\latent][0]$.
Pseudo-code of the above procedure is provided in \cref{algo:var-calib}.

Having introduced the above elements, we next define the calibrated reverse step as
\vspace{-0.5em}
\begin{equation}
    \label{eq:DDPM-reverse-uncertain}
    \density_{\param,\diffbackdev}(\vdiff[\rvlat]\given\vdupdate[\latent])
    =
    \int \densityalt(\vdiff[\rvlat]\given\vdupdate[\latent],\vdiff[\latent][0])
    \alt{\density}_{\param,\diffbackdev}(\vdiff[\latent][0]\given\vdupdate[\latent])\dd \vdiff[\latent][0],
    \vspace{-0.1em}
\end{equation}
where $\diffbackdev=\vdiff[\diffbackdev][\intintervalalt{1}{\nDiffsteps}]$ is the estimated variance parameter
and $\alt{\density}_{\param,\diffbackdev}(\vdiff[\rvlat][0]\given\vdupdate[\latent])=\gaussian(\denoiser_{\param}(\vdupdate[\latent],\diffstep+1),\diag(\vdupdate[\diffbackvar]))$ is the aforementioned Gaussian approximation of $\vdiff[\rvlat][0]\given\vdupdate[\latent]$.
Compared to \eqref{eq:DDPM-reverse}, the variance of the reverse step gets slightly enlarged in a ways that
reflects the uncertainty in the denoiser output. 
Note that we opt for a simple model here in which the covariance matrices are the same at all points, whereas \citet{bao2022estimating} fit a neural network to predict the mean squared residual at every $\vdiff[\latent]$.

\begin{algorithm}[htb]
    \caption{Diffusion Model Variance Calibration}
    \label{algo:var-calib}
\begin{algorithmic}[1]
    \STATE {\bfseries Input:} Diffusion model $\denoiser_{\param}$, calibration set $\calset=\{\vdiffs[\latent][0]\}_{\indsample\in\oneto{\nSamples_{\calibration}}}$
    \STATE {\bfseries Output:} Variance parameters $\vdiff[\diffbackdev][\intintervalalt{1}{\nDiffsteps}]$
    \vspace*{0.2em}
    \FOR{$\diffstep = 1 \ldots \nDiffsteps$}
    \STATE for all $\indsample$, sample $\vdiffs[\latent]$ from $\vdiff[\rvlat]\given\vdiffs[\latent][0]$
    \STATE for all $\arm$, set
     $\vadiff[\diffbackdev]
    \subs \sqrt{\frac{1}{\nSamples_{\calibration}}
    \sum_{\indsample=1}^{\nSamples_{\calibration}}
    \norm{\vadiff[\latent][\arm][0]-\denoiser_{\param}^{\arm}(\vdiff[\latent],\diffstep)}^2}$
    \ENDFOR
\end{algorithmic}
\end{algorithm}


\subsection{DiffTS: Thompson Sampling with Diffusion Prior}
\label{subsec:DiffTS}
We next introduce our Thompson sampling with diffusion prior algorithm, abbreviated as DiffTS.
Let $\density_{\param,\diffbackdev}$ be the prior as defined in \cref{subsec:diff-prior} and $\obs$ be an evidence 
with known $\densityalt(\obs\given\vdiff[\latent][0])$.
At each step, the algorithm should sample from the posterior $\vdiff[\rvlat][0]\given\obs$.
However, as an exact solution does not exist in general, we approximate this by a strategy that 
gradually guides the sample towards the evidence during the iterative sampling process.
This is achieved by conditioning the reverse Markovian process on $\rvobs=\obs$ and seeks an approximation for each conditional reverse step.

%
%

In the case of multi-armed bandits, the evidence is the interaction history $\vt[\history]=(\vt[\arm][\runalt],\vt[\reward][\runalt])_{\runalt\in\intinterval{1}{\run}}$ up to time $\run$ (suppose we are in round $\run+1$) and $\vdiff[\latent][0]=\meanreward\in\R^{\nArms}$ is the mean reward vector of the task.
It holds that
\begin{equation}
    \label{eq:dependence-bandit-observation}
    \densityalt(\vt[\history]\given\vdiff[\latent][0])
    \propto
    \prod_{\runalt=1}^{\run}
    \densityalt(\vt[\reward][\runalt]\given\meanreward, \vt[\arm][\runalt])
    = \prod_{\runalt=1}^{\run}
    \gaussian(\vt[\reward][\runalt]; \va[\meanreward][\vt[\arm][\runalt]], \noisedevbandit^2)
    .
\end{equation} 
In the above formula, we treat $\densityalt(\vt[\history]\given\vdiff[\latent][0])$ as a function of $\vdiff[\latent][0]$ and 
and in this way we can ignore all the randomness in the learner's actions that appears in $\densityalt(\vt[\history]\given\vdiff[\latent][0])$ via the proportionality.
This is because the learner's actions only depend on the mean reward vector via their interaction history with the environment, \ie
$\densityalt(\vt[\arm][\runalt]
\given \vt[\arm][1], \vt[\reward][1], \dots, \vt[\arm][\runalt-1], \vt[\reward][\runalt-1], \meanreward)
= \densityalt(\vt[\arm][\runalt] \given \vt[\arm][1], \vt[\reward][1], \dots, \vt[\arm][\runalt-1], \vt[\reward][\runalt-1])$.
The initialization and the recursive steps of our 
conditional sampling scheme tailored to this situation are then provided below.
Detailed derivation behind the algorithm is provided in \cref{apx:post-sampling}.

\noindent\textbf{Sampling from $\vdiff[\rvlat][\nDiffsteps]\given\vt[\history]$\afterhead}
For this part, we simply ignore $\vt[\history]$ and sample from $\gaussian(0,\Id_{\vdim})$ as before.

\newcommand\vertarrowbox[3][0ex]{%
  \begin{array}[t]{@{}c@{}} #2 \\
  \left\uparrow\vcenter{\hrule height #1}\right.\kern-\nulldelimiterspace\\
  \makebox[0pt]{\scriptsize#3}
  \end{array}%
}

\noindent\textbf{Sampling from $\vdiff[\rvlat]\given \vdupdate[\latent], \vt[\history]$\afterhead}
%
We have 
\[\densityalt(\vdiff[\rvlat]\given \vdupdate[\latent], \vt[\history])
    \propto
    \densityalt(\vdiff[\rvlat]\given \vdupdate[\latent])\densityalt(\vt[\history]\given\vdiff[\rvlat]).\]
It is thus sufficient to approximate the two terms on the right hand side by Gaussian distributions.
For the first term, we use directly the learned prior $\density_{\param,\diffbackdev}$ and write $\vadiff[\noisedevalt][\arm][\diffstep, \latent]$ for the standard deviation of the $\arm$-th coordinate.
As for the second term, we approximate it by adjusting the known $\densityalt(\vt[\history]\given\vdiff[\latent][0])$ (see \eqref{eq:dependence-bandit-observation}) with the help of an approximation of $\densityalt(\vdiff[\latent][0]\given\vdiff[\latent])$.
Finally, we employ the perturbation sampling algorithm \citep{papandreou2010gaussian} to sample from the posterior of the two Gaussians.

Concretely, we first create an \emph{unconditional} latent variable $\vdiff[\alt{\latent}]$ by sampling from the unconditional reverse process $\density_{\param,\diffbackdev}(\vdiff[\rvlat]\given\vdupdate[\latent])$. 
We then perform coordinate-wise operation by distinguishing between the following two situations.
\begin{itemize}[leftmargin=*]
    \item Arm $\arm$ has never been pulled in the first $\run$ rounds:
    In this case we just set $\vadiff[\latent]$ to be $\vadiff[\alt{\latent}]$. 
    
    \item Arm $\arm$ has been pulled in the first $\run$ rounds:
    Let $\vta[\pullcount]$ be the number of times that arm $\arm$ has been pulled up to time $\run$ (included), $\vta[\est{\meanreward}]=\sum_{\runalt=1}^{\run}\vt[\reward][\runalt]\one\{\vt[\arm][\runalt]=\arm\}/\vta[\pullcount]$ be the empirical mean of arm $\arm$'s reward, $\vta[\noisedev]=\noisedevbandit/\sqrt{\vta[\pullcount]}$ be the corresponding adjusted standard deviation,
    and 
    \[\vdupdate[\bar{\noise}]
    =(\vdupdate[\latent]
    -\sqrt{\vdupdate[\diffscalingprod]}
    \denoiser_{\param}(\vdupdate[\latent], \diffstep+1))
    /\sqrt{1-\vdupdate[\diffscalingprod]}\]
    be the predicted noise from diffusion step $\diffstep+1$
    ;
    we sample a diffused observation \ 
    \begin{equation}
    \label{eq:diff-obs}
    \vadiff[\tilde{\obs}]
    \sim
    \gaussian(   \sqrt{\vdiff[\diffscalingprod]}\vta[\est{\meanreward}]
    + \sqrt{1-\vdiff[\diffscalingprod]}\vadupdate[\bar{\noise}],
     (\vadiff[\noisedevalt][\arm][\diffstep, \obs])^2).
    \end{equation} 
    The standard deviation of the Gaussian is
    \begin{equation}
    \label{eq:noisedevalt2}
    \vadiff[\noisedevalt][\arm][\diffstep, \obs]
    =  
    \sqrt{\vdiff[\diffscalingprod]
    \left(
    (\vta[\noisedev])^2 +
    \frac{\vdupdate[\diffscalingprod](1-\vdiff[\diffscalingprod])}
    {\vdiff[\diffscalingprod](1-\vdupdate[\diffscalingprod])}
    (\vadupdate[\diffbackdev])^2
    \right)}.
    \end{equation}
    It takes into account both the uncertainty $\vta[\noisedev]$ in observation and the uncertainty $\vadupdate[\diffbackdev]$ of predicting the clean sample $\vdiff[\latent][0]$ from $\vdupdate[\latent]$.
    As for the mean of the Gaussian, we mimic the forward process that starts from $\vta[\est{\meanreward}]$ but use the predicted noise instead of a randomly sampled noise vector.
    We discuss how this influences the behavior of the posterior sampling algorithm in Appendices~\ref{apx:ablation-post-sampling} and~\ref{apx:exp-post-sampling}.

    Similar to \citep{papandreou2010gaussian}, 
    performing a weighted average of the diffused observation and the unconditional latent variable then gives the output of the conditional reverse step
    \[
    \vadiff[\latent]
    = \frac{
    (\vadiff[\noisedevalt][\arm][\diffstep, \latent])^{-2}
    \vadiff[\alt{\latent}]
    + (\vadiff[\noisedevalt][\arm][\diffstep, \obs])^{-2}
    \vadiff[\tilde{\obs}]}
    {(\vadiff[\noisedevalt][\arm][\diffstep, \latent])^{-2}
    + (\vadiff[\noisedevalt][\arm][\diffstep, \obs])^{-2}}.
    \]
\end{itemize}

\begin{figure*}[t]
\centering
\begin{minipage}[b]{.48\textwidth}
    \centering
    \includegraphics[width=0.94\textwidth]{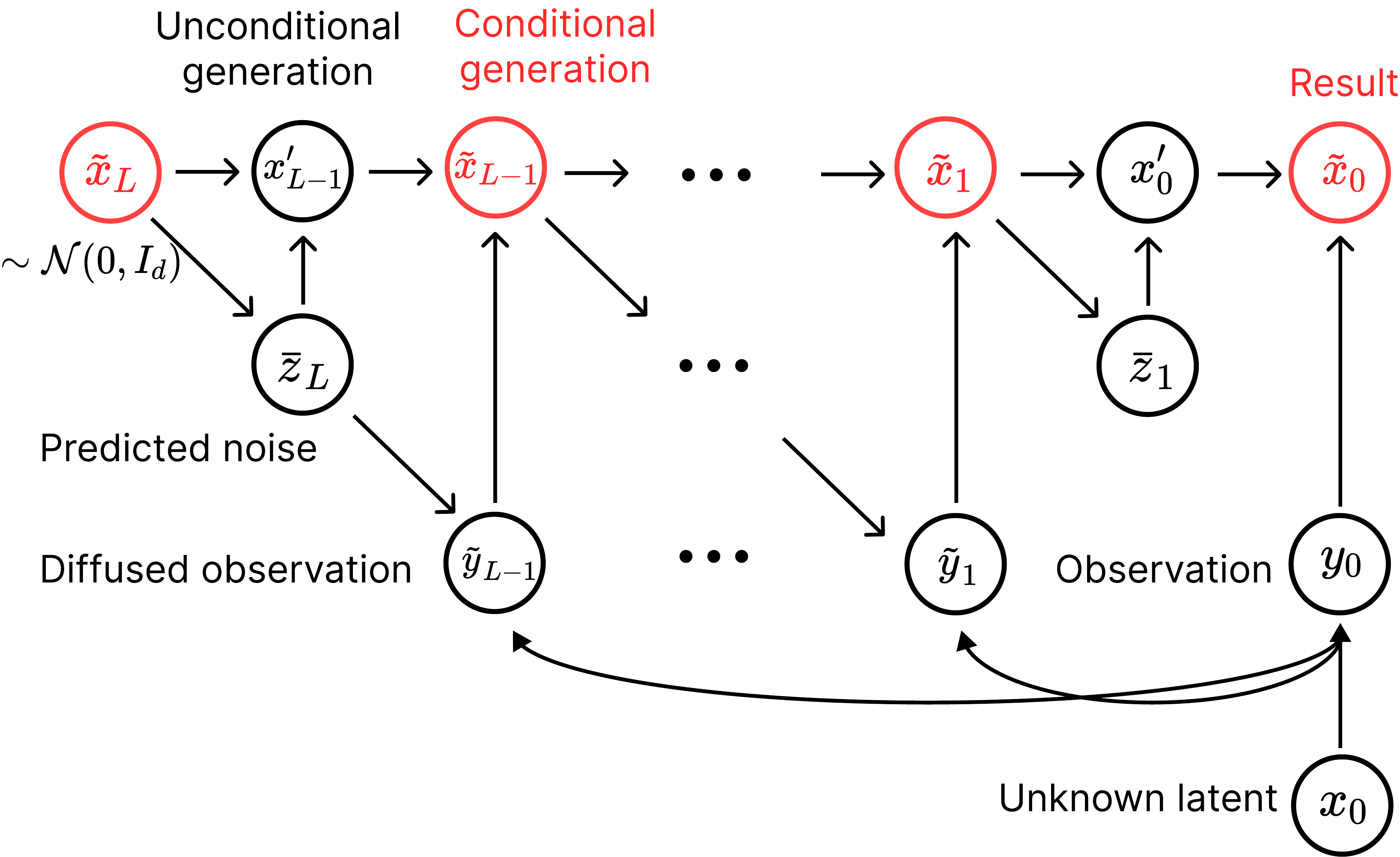}
    \captionof{figure}{Illustration of the proposed posterior sampling with diffusion prior algorithm (\cref{algo:post-sampling}).}
  \label{fig:post-sampling}
\end{minipage}
\hfill
\begin{minipage}[b]{.48\textwidth}
    \centering
    \includegraphics[width=0.94\textwidth]{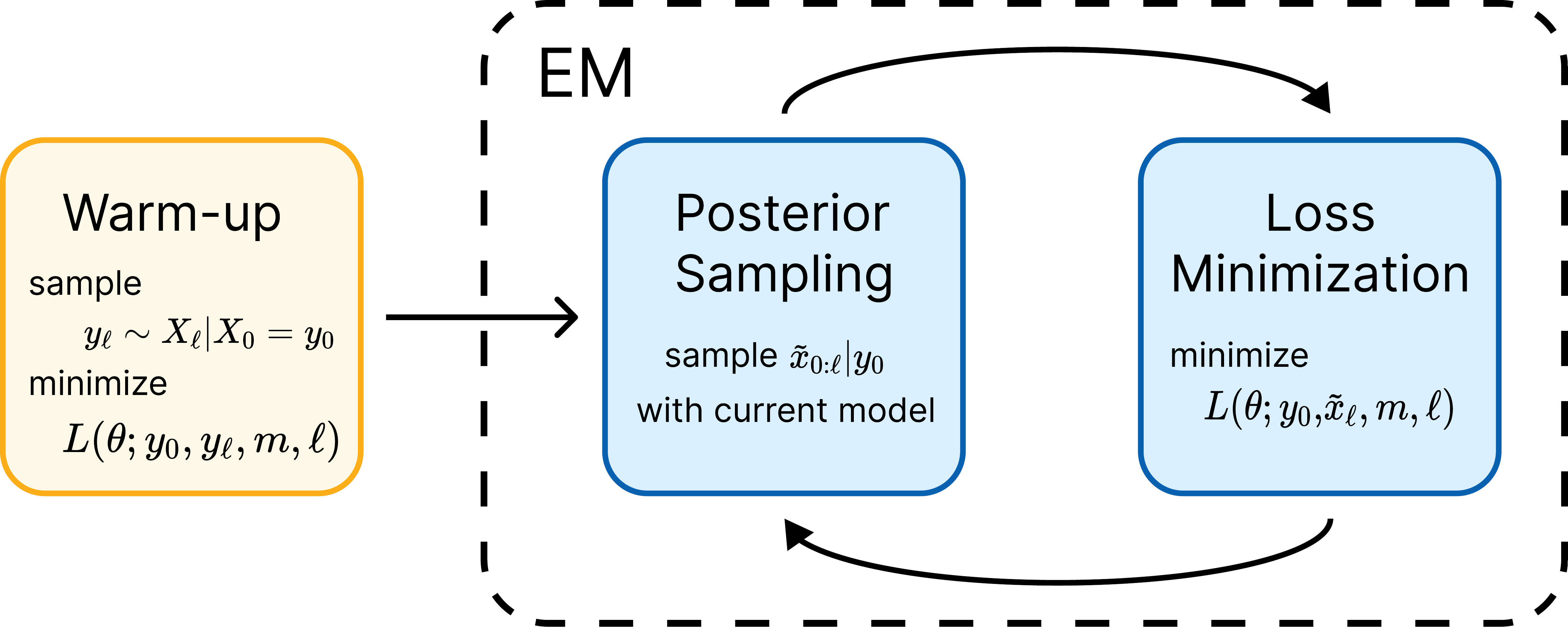}
    \vspace{1.6em}
    \captionof{figure}{Overview of the proposed training procedure to deal with incomplete and/or noisy data.}
  \label{fig:training-imperfect}
\end{minipage}
\vspace{-0.6em}
\end{figure*}

\paragraph{From Bandit to Partially Observed Vector\afterhead}
It is worth noticing that the algorithm that we introduce above only utilizes the interaction history $\vt[\history]$ via empirical mean $\vt[\est{\meanreward}]$ and adjusted standard deviation $\vt[\noisedev]$.
Therefore, in each round of interaction it is sufficient to compute the empirical mean vector $\obs=\vt[\est{\meanreward}]$ (we set $\va[\obs]=0$ if arm $\arm$ has never been pulled), the vector of adjusted standard deviation $\noisedev=\vt[\noisedev]$, and a binary mask vector $\mask$ such that $\va[\mask]=1$ if and only if arm $\arm$ has been pulled in the first $\run$ rounds; then we can apply \cref{algo:post-sampling} that performs 
posterior sampling by taking as evidence the vector $\obs$ and treating it as if any observed entry $\va[\obs]$ (indicated by $\va[\mask]=1 $) were sampled from $\gaussian(\vadiff[\latent][\arm][0],(\va[\noisedev])^2)$ (see also~\cref{fig:post-sampling}).\footnote{%
This algorithm is in a sense just a special case of the algorithm that takes bandit interaction as evidence but can also be used as a subroutine of the more general algorithm.
}
Written in this form allows the algorithm to be directly applied in the situation that we address in the next section.
DiffTS is recovered by plugging the correct quantity into this algorithm as shown in \cref{algo:DiffTS} (the assumed noise level $\est{\noisedev}$ plays the role of $\noisedevbandit$).

\begin{algorithm}[htb]
    \caption{Posterior Sampling with Diffusion Prior}
    \label{algo:post-sampling}
\begin{algorithmic}[1]
    \STATE {\bfseries Input:} Evidence $\obs\in\R^{\nArms}$, 
    noise standard deviation $\noisedev\in\R^{\nArms}$,
    binary mask $\mask\in\{0,1\}^{\nArms}$,
    diffusion model $\denoiser_{\param}$ and associated variance parameters
    $\vdiff[\diffbackdev][\intintervalalt{1}{\nDiffsteps}]$    
    \STATE {\bfseries Output:}
    Posterior sample $\vdiff[\latent][0]$ (resp. $\vdiff[\latent][\intintervalalt{0}{\nDiffsteps}]$) approximately sampled from $\vdiff[\rvlat][0]\given\obs$
    (resp. $\vdiff[\rvlat][\intintervalalt{0}{\nDiffsteps}]\given\obs$)
    \vspace{0.2em}
    \STATE Sample initial state $\vdiff[\latent][\nDiffsteps]\sim\gaussian(0,\Id_{\vdim})$
    \FOR{$\diffstep \in \nDiffsteps-1, \ldots, 0$}
    \STATE Predict clean sample $\vdiff[\est{\latent}][0]\subs\denoiser_{\param}(\vdupdate[\latent],\diffstep+1)$, and associated noise $\vdupdate[\bar{\noise}]$
    \STATE Sample unconditional latent $\vdiff[\latent]
    \sim\density_{\param,\diffbackdev}(\vdiff[\rvlat]\given\vdupdate[\latent])$
    \FOR{$\arm\in\arms$ such that $\va[\mask]=1$}
    \STATE Compute $\vadiff[\noisedevalt][\arm][\diffstep, \obs]$ following \eqref{eq:noisedevalt2} \Comment{use $\noisedev$ in the place of $\vt[\noisedev]$}
    \STATE Sample diffused observation
    $\vadiff[\tilde{\obs}]
    \sim
    \gaussian(\sqrt{\vdiff[\diffscalingprod]}
    \obs
    + \sqrt{1-\vdiff[\diffscalingprod]}\vadupdate[\bar{\noise}],
     \vadiff[\noisedevalt][\arm][\diffstep, \obs])$
    \STATE
    Set
    $\vadiff[\latent]
    \subs \frac{
    (\vadiff[\noisedevalt][\arm][\diffstep, \latent])^{-2}
    \vadiff[\latent]
    + (\vadiff[\noisedevalt][\arm][\diffstep, \obs])^{-2}
    \vadiff[\tilde{\obs}]}
    {(\vadiff[\noisedevalt][\arm][\diffstep, \latent])^{-2}
    + (\vadiff[\noisedevalt][\arm][\diffstep, \obs])^{-2}}$
    \Comment{$\vadiff[\noisedevalt][\arm][\diffstep, \latent]$ is the standard deviation of $\density_{\param,\diffbackdev}(\vadiff[\rvlat]\given\vdupdate[\latent])$}
    \ENDFOR
    \ENDFOR
\end{algorithmic}
\end{algorithm}

\vspace{-0.75em}
\begin{algorithm}[htb]
    \caption{Thompson Sampling with Diffusion Prior (DiffTS)}
    \label{algo:DiffTS}
\begin{algorithmic}[1]
    \STATE {\bfseries Input:} Diffusion model $\denoiser_{\param}$, variance parameters $\vdiff[\diffbackdev][\intintervalalt{1}{\nDiffsteps}]$, 
    assumed noise level $\est{\noisedev}\in\R$
    \FOR{$\run = 1, \ldots$}
    \STATE Sample $\vdiff[\latentcons][0]$ using \cref{algo:post-sampling} with $\vdiff[\obs][0]\subs\vt[\est{\meanreward}][\run-1]$, $\noisedev\subs\vt[\noisedev][\run-1]$, $\mask$ defined by $\va[\mask]=\one\{\vta[\pullcount][\run-1]>0\}$
    \STATE Pull arm $\vt[\arm] \in \argmax_{\arm\in\arms}\vadiff[\latentcons][\arm][0]$
    \STATE Update number of pulls $\vta[\pullcount]$, scaled standard deviation $\vta[\noisedev]$, and empirical reward $\vta[\est{\meanreward}]$ for $\arm\in\arms$
    \ENDFOR
\end{algorithmic}
\end{algorithm}

\section{Training Diffusion Models from Imperfect Data}
\label{sec:training}
Standard training procedure of diffusion models requires access to a dataset of clean samples $\trainset=\{\vdiffs[\latent][0]\}_{\indsample\in\oneto{\nSamples_{\training}}}$. 
Nonetheless, in most bandit applications, it is nearly impossible to obtain such dataset as the exact mean reward vector $\meanreward$ of each single task is never directly observed. Instead, one can collect imperfect observations of these vectors, either through previous bandit interactions or forced exploration. 
Taking this into account, in this section, we build towards a systematic procedure to train (and calibrate) diffusion models from imperfect data.
Importantly, the application scope of our methodology goes beyond the bandit setup and covers any situation where imperfect data are available.
As an example, we apply our approach to train from imperfect images (corrupted MNIST and Fashion-MNIST~\citep{xiao2017fashion} datasets) and obtain promising results (details are provided in \cref{apx:mnist-fmnist}).

\paragraph{Setup.}
For ease of exposition, we first focus on the case of homogeneous noise.
Extension to deal with non-homogeneous noise is later presented in \cref{rem:training-nu-variance}.
When the noise is homogeneous with variance $\noisedevdata^2\in\R$, the samples of the imperfect dataset $\trainsetdeg=\{\vs[\obs]\}_{\indsample\in\oneto{\nSamples_{\training
}}}$ 
can be written as $\vs[\obs]=\vs[\mask]\odot(\vdiffs[\latent][0]+\vs[\noise])$ where $\vs[\mask]\in\{0,1\}^{\nArms}$ is a binary mask, $\vs[\noise]$ is a noise vector sampled from $\gaussian(0,\noisedevdata^2\Id_{\vdim})$, and $\odot$ denotes element-wise multiplication.\footnote{%
As we will see \cref{rem:training-nu-variance}, the masking of an entry can also be viewed as an observation with infinite variance. 
}
Under this notation, we have $\vas[\mask]=0$ if the $\arm$-th entry of the perturbed $\vs[\obs]$ is unobserved and $\vas[\mask]=1$ otherwise.
In our bandit problem, such dataset can be obtained by randomly pulling a subset of arms once for each arm.
We also assume that the associated masks $\{\vs[\mask]\}_{\indsample\in\oneto{\nSamples_{\training}}}$ and the noise level $\noisedevdata$ are known. We can thus rewrite the dataset as $\trainsetdeg=\{(\vs[\obs],\vs[\mask])\}_{\indsample\in\oneto{\nSamples_{\training}}}$.

\subsection{Training with Imperfect Data}
\label{subsec:training}

In presence of perfect data, diffusion model training optimizes the denoising objective
\begin{equation}
    \label{eq:denoising-loss}
    \frac{1}{\nDiffsteps}\sum_{\diffstep=1}^{\nDiffsteps}
    \ex_{
    \vdiff[\latent][0]\sim \vdiff[\distributionalt][0],
    \vdiff[\latent]\sim\vdiff[\rvlat]\given\vdiff[\latent][0]
    }[\norm{\vdiff[\latent][0]-\denoiser_{\param}(\vdiff[\latent],\diffstep)}^2].
\end{equation}
Nonetheless, neither $\vdiff[\latent][0]$ nor $\vdiff[\latent]$ are available when we only have an imperfect dataset $\trainsetdeg$.
To tackle these challenges, we propose an \ac{EM} procedure which we summarize in \cref{fig:training-imperfect} and \cref{algo:training-imperfect}.
After a warm-up phase, we alternate between a posterior sampling step and a loss minimization step that play respectively the roles of the expectation and the maximization steps of standard \ac{EM}.


\paragraph{Posterior Sampling\afterhead}
If we had $\vdiff[\latent][0]$, we could sample $\vdiff[\latent]$ via the forward process and optimize the standard objective \eqref{eq:denoising-loss}.
This is however not the case. We thus propose to sample $\vdiff[\latent][0]$ jointly with $\vdiff[\latent]$ given observation $\obs$ through posterior sampling with the current model parameter.
Regarding diffusion model as a probability model over the random variables $\vdiff[\rvlat][\intintervalalt{0}{\nDiffsteps}]$, this would then correspond to the posterior sampling step done in several variants of stochastic \ac{EM} \citep{fort2003convergence}.
In fact, a typical expectation step in \ac{EM} for a given parameter $\alt{\param}$ requires us to compute the expected log likelihood function
\[
\explogL(\param)=
\sum_{\indsample=1}^{\nSamples}
\ex_{
\vdiffs[\rvlat][\intintervalalt{0}{\nDiffsteps}]\given \vdiffs[\obs][0],\vs[\mask],\alt{\param}}
\log \density_{\param}(\vdiffs[\rvlat][\intintervalalt{0}{\nDiffsteps}]).
\] 
Nonetheless, this is intractable in general due to the use of neural network in the definition of $\density_{\param}$, and that's why we resort to sampling from the the posterior   $\vdiff[\rvlat][\intintervalalt{0}{\nDiffsteps}]\given\vs[\obs],\vs[\mask],\alt{\param}$.
Concretely, in our experiments, we use \cref{algo:post-sampling} to construct a dataset of posterior samples $\trainsetcons=\{\vdiffs[\tilde{\latent}][\intintervalalt{0}{\nDiffsteps}]\}_{\indsample}$
(note that that the algorithm allows us to sample jointly $\vdiff[\tilde{\latent}][\intintervalalt{0}{\nDiffsteps}]$ given $\obs$).



\paragraph{Loss Minimization\afterhead}
Having obtained the posterior samples, we have the option to either maximize the log-likelihood of $\trainsetcons$ or 
minimize the denoising loss $\sum_{\vdiff[\tilde{\latent}][\intintervalalt{0}{\nDiffsteps}]\in\trainsetcons}\sum_{\diffstep=1}^{\nDiffsteps}\norm{\vdiff[\tilde{\latent}][0]-\denoiser_{\param}(\vdiff[\tilde{\latent}],\diffstep)}^2$.
Nonetheless, both of these approaches rely heavily on the generated posterior samples, which can bias the model towards generating low-quality samples during early stages of training. To address this issue, we propose to replace the sampled $\vdiff[\latent][0]$ with corresponding observation $\obs$ and use a modified denoising loss that is suited to imperfect data.
Fix a small value $\sureeps$ and a regularization parameter $\regpar$, the new loss function for a sample pair $(\obs, \vdiff[\tilde{\latent}])$ at diffusion step $\diffstep$ with associated mask $\mask$ is defined as
\begin{equation}
\label{eq:EM-loss}
\loss(\param;\obs, \vdiff[\tilde{\latent}],\mask,\diffstep)
=
\norm{\mask\odot\obs-\mask\odot\denoiser_{\param}(\vdiff[\tilde{\latent}],\diffstep)}^2
+ 
2\regpar\sqrt{\vdiff[\diffscalingprod]}\noisedevdata^2 \ex_{\rvec\sim\gaussian(0,\Id_{\vdim})}
\rvec^{\top}
\left(\frac{\denoiser_{\param}(\vdiff[\tilde{\latent}]+\sureeps\rvec,\diffstep)-\denoiser_{\param}(\vdiff[\tilde{\latent}],\diffstep)}{\sureeps}\right).
\end{equation}

Compared to \eqref{eq:denoising-loss}, we have a slightly modified mean squared error term (first term) that handles incomplete data by only considering the observed entries as determined by the element-wise product with the mask. On the top of this, we include a regularization term (second term) that penalizes the denoiser from varying too much when the input changes to account for noisy observation.
Our denoising loss finds its roots in works of \citep{metzler2018unsupervised,zhussip2019extending}, which train denoisers in the absence of clean ground-truth data.
In particular, the expectation here is an approximation of the divergence $\diver_{\vdiff[\tilde{\latent}]}(\denoiser_{\param}(\vdiff[\tilde{\latent}],\diffstep))$ that appears in \ac{SURE}~\citep{stein1981estimation,eldar2008generalized}, an unbiased estimator of the mean squared error whose computation only requires the use of noisy samples.\footnote{%
When $\regpar=1$, $\vdiff[\latent]=\vdiff[\tilde{\latent}]=\sqrt{\vdiff[\diffscalingprod]}\obs$, $\mask=\ones$ (\ie all the entries are observed), and the expectation is replaced by the divergence, we recover SURE up to additive constant $-\nArms\noisedevdata^2$. See \cref{apx:SURE} for details.
}

From a practical viewpoint, the regularization term provides a trade-off between the bias and the variance of the learned model.
When $\regpar$ is set to $0$, the model learns to generate noisy samples, which corresponds to a flatter prior that encourages exploration.
When $\regpar$ gets larger, the model tries to denoise from the observed noisy samples.
This can however deviate the model from the correct prior and accordingly jeopardize the online learning procedure. 

\vspace{0.2em}
\noindent\textbf{Warm-Up.}
In practice, we observe that posterior sampling with randomly initialized model produces poor training samples.
Therefore, for only the warm-up phase, we sample $\vdiff[\obs]$ from the forward distribution $\gaussian(\sqrt{\vdiff[\diffscalingprod]}\obs,(1-\vdiff[\diffscalingprod])\Id_{\vdim})$ as in standard diffusion model training and minimize loss $\loss$ evaluated at $\vdiff[\obs]$ instead of $\vdiff[\latentcons]$ during this warm-up phase.

\begin{algorithm}[htb]
    \caption{Diffusion Model Training from Imperfect (incomplete and noisy) Data}
    \label{algo:training-imperfect}
\begin{algorithmic}[1]
    \STATE {\bfseries Input:} Training set $\trainsetdeg=\{(\vs[\obs],\vs[\mask])\}_{\indsample}$, calibration set $\calsetdeg$,
    noise standard deviation $\noisedevdata$, number of warm-up, outer, and inner training steps $\nWarmup, \nOuter,$ and  $\nInner$
    \STATE {\bfseries Output:} Diffusion model $\denoiser_{\param}$
    \vspace*{0.2em}
    \STATE \uline{\texttt{Warm-up}}
    \vspace*{0.1em}
    \FOR{$\runalt = 1, \ldots, \nWarmup$}
    \STATE Sample $\obs,\mask$ from $\trainsetdeg$ 
    \STATE Sample $\diffstep$ from the uniform distribution over $\intinterval{1}{\nDiffsteps}$
    \STATE Sample $\vdiff[\obs]$ from $\vdiff[\rvlat]\given \vdiff[\rvlat][0]=\obs$
    \STATE Take gradient step to minimize $\loss(\param;\obs, \vdiff[\obs],\mask,\diffstep)$ (\cref{eq:EM-loss})
    \ENDFOR
    \vspace*{0.2em}
    \STATE \uline{\texttt{Main Training Procedure}}
    \vspace*{0.1em}
    \FOR{$j = 1, \ldots, \nOuter$}
    \vspace*{0.1em}
    \STATE \uline{\texttt{Posterior Sampling}}
    \vspace*{0.1em}
    \STATE Compute reconstructions errors $\vdiff[\diffbackdev][\intintervalalt{1}{\nDiffsteps}]$ with \cref{algo:var-calib-imperfect} using $\calsetdeg$
    \STATE Construct $\trainsetconsext=\{\vdiffs[\tilde{\latent}][\intintervalalt{0}{\nDiffsteps}],\vs[\obs],\vs[\mask]\}_{\indsample}$ with \cref{algo:post-sampling}
    \vspace*{0.1em}
    \STATE \uline{\texttt{Loss Minimization}}
    \vspace*{0.1em}
    \FOR{$\runalt = 1, \ldots, \nInner$}
    \STATE Sample $\vdiff[\tilde{\latent}][\intintervalalt{0}{\nDiffsteps}],\obs,\mask$ from $\trainsetconsext$ 
    \STATE Sample $\diffstep$ from the uniform distribution over $\intinterval{1}{\nDiffsteps}$
    \STATE Take gradient step to minimize $\loss(\param;\obs, \vdiff[\tilde{\latent}],\mask,\diffstep)$ (\cref{eq:EM-loss})
    \ENDFOR
    \ENDFOR
\end{algorithmic}
\end{algorithm}

\vspace{0.4em}
\begin{remark}[Bandit observations / observations with varying variances]
\label{rem:training-nu-variance}
As suggested in \cref{subsec:DiffTS}, when the observations come from bandit interactions and each arm can be pulled more than once, we can first summarize the interaction history by the empirical mean and the vector of adjusted standard deviation.
Therefore, it actually remains to address the case of non-homogeneous noise where the noise vector $\vs[\noise]$ is sampled from $\gaussian(0,\diag(\vs[\noisedev]^2))$ 
for some vector $\vs[\noisedev]\in\R^{\nArms}$.
As the design of our posterior sampling algorithm already takes this into account, the posterior sampling steps of the algorithm remains unchanged.
The only difference would thus lie in the definition of the modified loss \eqref{eq:EM-loss}.
Intuitively, we would like to give more weights to samples that are less uncertain.
This can be achieved by weighting the loss by the inverse of the variances, that is, we set
\begin{equation}
\label{eq:EM-loss2}
\alt{\loss}(\param;\obs, \vdiff[\tilde{\latent}],\mask,\noisedev,\diffstep)
=
\sum_{\arm=1}^{\nArms}
\frac{\va[\mask]\abs{\va[\obs][\arm]-\denoiser_{\param}^{\arm}(\vdiff[\tilde{\latent}],\diffstep)}}{(\va[\noisedev])^2}
+ 
2\regpar\sqrt{\vdiff[\diffscalingprod]} \ex_{\rvec\sim\gaussian(0,\Id)}
\rvec^{\top}
\left(\frac{\denoiser_{\param}(\vdiff[\tilde{\latent}]+\sureeps\rvec,\diffstep)-\denoiser_{\param}(\vdiff[\tilde{\latent}],\diffstep)}{\sureeps}\right).
\end{equation} 
%
To make sure the above loss is always well defined, we may further replace $(\va[\noisedev])^2$ by $(\va[\noisedev])^2+\delta$ for some small $\delta>0$.
It is worth noticing that one way to interpret the absence of observation $\va[\mask]=0$ is to set the corresponding variance to infinite, \ie $\va[\noisedev]=+\infty$.
In this case we see there is even no need of $\mask$ anymore as the coordinates with $\va[\noisedev]=+\infty$ would already be given $0$ weight.
Finally, to understand why we choose to weight with the inverse of the variance, we consider a scalar $\latent$, and a set of noisy observations $\obs_1, \ldots, \obs_n$ respectively drawn from $\gaussian(\latent,\noisevar_1),\ldots,\gaussian(\latent,\noisevar_n)$.
Then, the maximum likelihood estimate of $\latent$ is $\sum_{i=1}^{n}\noisevar_i\obs_i/(\sum_{i=1}^{n}\noisevar_i)$.
\end{remark}

\subsection{Variance Calibration with Imperfect Data}
\label{subsec:var-calib-imperfect}

As mentioned in \cref{subsec:diff-prior}, a reliable variance estimate of the reverse process is essential for building a good diffusion prior.
This holds true not only for the online learning process at test phase, but also for the posterior sampling step of our training procedure.
The algorithm introduced in \cref{subsec:diff-prior} calibrates the variance through perfect data.
In this part, we extend it to operate with imperfect data.

Let $\calsetdeg$ be a set of imperfect data constructed in the same way as $\trainsetdeg$.
We write $\calsetdega=\setdef{(\obs,\mask)\in\calsetdeg}{\va[\mask]=1}$ as the subset of $\calsetdeg$ for which a noisy observation of the feature at position $\arm$ is available.
Our algorithm (outlined in \cref{algo:var-calib-imperfect}) is inspired by the following  two observations.
First, if the entries are missing completely at random, 
observed $\vadiff[\obs][\arm][0]$ of $\calsetdega$ and sampled $\vadiff[\latent][\arm][0]+\va[\noise]$ with $\vdiff[\latent][0]\sim\vdiff[\distributionalt][0]$ and $\noise\sim\gaussian(0,\noisevar\Id)$ have the same distribution.
Moreover, for any triple $(\vdiff[\latent][0],\obs,\vdiff[\latent])$ with $\obs=\vdiff[\latent][0]+\noise$, $\vdiff[\latent]=\sqrt{\vdiff[\diffscalingprod]}\vdiff[\latent][0]+\sqrt{1-\vdiff[\diffscalingprod]}\bar{\vdiff[\noise]}$ and $\vdiff[\latent][0]$, $\noise$, and $\bar{\vdiff[\noise]}$ sampled independently from $\vdiff[\distributionalt][0]$, $\gaussian(0,\noisevar\Id)$, and $\gaussian(0,\Id)$, it holds that
\begin{equation}
    \notag
    \ex[\norm{\vadiff[\obs][\arm][0]-\denoiser_{\param}^{\arm}(\vdiff[\latent],\diffstep)}^2]
    = \ex[\norm{\vadiff[\latent][\arm][0]-\denoiser_{\param}^{\arm}(\vdiff[\latent],\diffstep)}^2] + \noisevar.
\end{equation}
We can thus estimate $\ex[\norm{\vadiff[\latent][\arm][0]-\denoiser_{\param}^{\arm}(\vdiff[\latent],\diffstep)}^2]$ if we manage to pair each $\vadiff[\obs][\arm][0]\in\calsetdega$ with a such $\vdiff[\latent]$.

We again resort to \cref{algo:post-sampling} for the construction of $\vdiff[\latent]$ (referred to as $\vdiff[\latentcons]$ in \cref{algo:var-calib-imperfect} and hereinafter).
Unlike the training procedure, here we first construct $\vdiff[\latentcons][0]$ and sample $\vdiff[\latentcons]$ from $\vdiff[\rvlat]\given\vdiff[\latentcons][0]$ to decrease the mutual information between $\vdiff[\latentcons]$ and $\obs$.
Nonetheless, the use of our posterior sampling algorithm itself requires a prior with calibrated variance.
To resolve the chicken-and-egg dilemma, we add a warm-up step where we precompute the reconstruction errors with \cref{algo:var-calib} by treating $\calsetdeg$ as the perfect dataset.
In our experiments, we observe this step yields estimates of the right order of magnitude but not good enough to be used with Thompson sampling, while the second step brings the relative error to as small as $5\%$ compare to the estimate obtained with perfect validation data using \cref{algo:var-calib}.

\begin{algorithm}[htb]
    \caption{Diffusion Model Variance Calibration from Imperfect (incomplete and noisy) Data}
    \label{algo:var-calib-imperfect}
\begin{algorithmic}[1]
    \STATE {\bfseries Input:} Diffusion model $\denoiser_{\param}$, calibration set $\calsetdeg=\{\vs[\obs],\vs[\mask]\}_{\indsample\in\oneto{\nSamples_{\calibration}}}$,
    noise standard deviation $\noisedevdata$
    \STATE {\bfseries Output:} Variance parameters $\vdiff[\diffbackdev][\intintervalalt{1}{\nDiffsteps}]$
    \vspace*{0.2em}
    \STATE \uline{\texttt{Data Set Preprocessing}}
    \vspace*{0.1em}
    \STATE Precompute reconstructions errors     $\vdiff[\diffbackdev][\intintervalalt{1}{\nDiffsteps}]$ with \cref{algo:var-calib} and $\calset\subs\calsetdeg$ (masks ignored)
    \STATE Construct $\calsetcons=\{\vdiffs[\latentcons][0],\vs[\obs],\vs[\mask]\}_{\indsample}$ with \cref{algo:post-sampling}
    \vspace*{0.2em}
    \STATE \uline{\texttt{Variance Calibration}}
    \vspace*{0.1em}
    \FOR{$\diffstep = 1 \ldots \nDiffsteps$}
    \STATE Construct $\calsetconspair=\{\vdiffs[\latentcons],\vs[\obs],\vs[\mask]\}_{\indsample}$ by sampling $\vdiffs[\latentcons]$ from $\vdiff[\rvlat]\given\vdiffs[\latentcons][0]$
    \FOR{$\arm = 1 \ldots \nArms$}
    \STATE Let $\calsetconspaira=\setdef{\vdiff[\latentcons],\obs}
    {(\vdiff[\latentcons],\obs,\mask)\in\calsetconspair, \va[\mask]=1}$
    \STATE Set 
    $\vadiff[\diffbackdev]
    \subs \sqrt{
    \frac{1}{\nSamples_{\calibration}}
    \sum_{\vdiff[\latentcons],\obs\in\calsetconspaira}
    \norm{\vadiff[\latent][\arm][0]-\denoiser_{\param}^{\arm}(\vdiff[\latent],\diffstep)}^2
    -\noisedevdata^2}$
    \ENDFOR
    \ENDFOR
\end{algorithmic}
\end{algorithm}




\section{Numerical Experiments}
\label{sec:exp}

\begin{wrapfigure}[13]{r}{0.3\textwidth}
    \centering
    \includegraphics[width=0.25\textwidth]{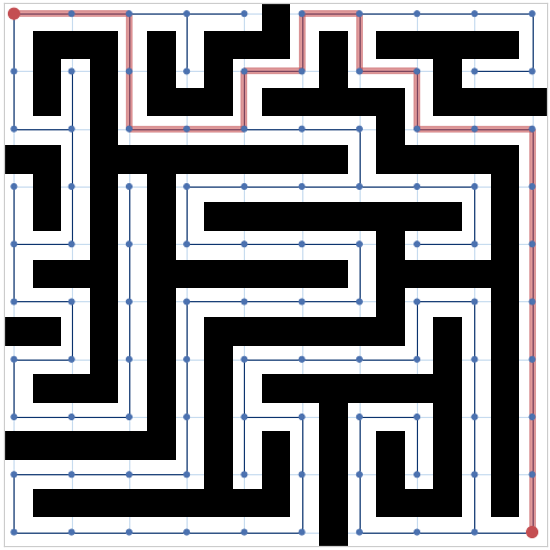}
    \caption{
    An example task of the \texttt{2D Maze} problem.
    The red path indicates the optimal (super-)arm.}
    \label{fig:2dmaze}
\end{wrapfigure}

 In this section, we illustrate the benefit of using diffusion prior through numerical experiments on real and synthetic data. Missing experimental details, ablation studies, and additional experiments 
are presented in \cref{apx:exp,apx:ablation,apx:exp-add}.

\begin{figure}[t]
    \centering
    \begin{subfigure}{0.32\textwidth}
    \includegraphics[width=\linewidth]{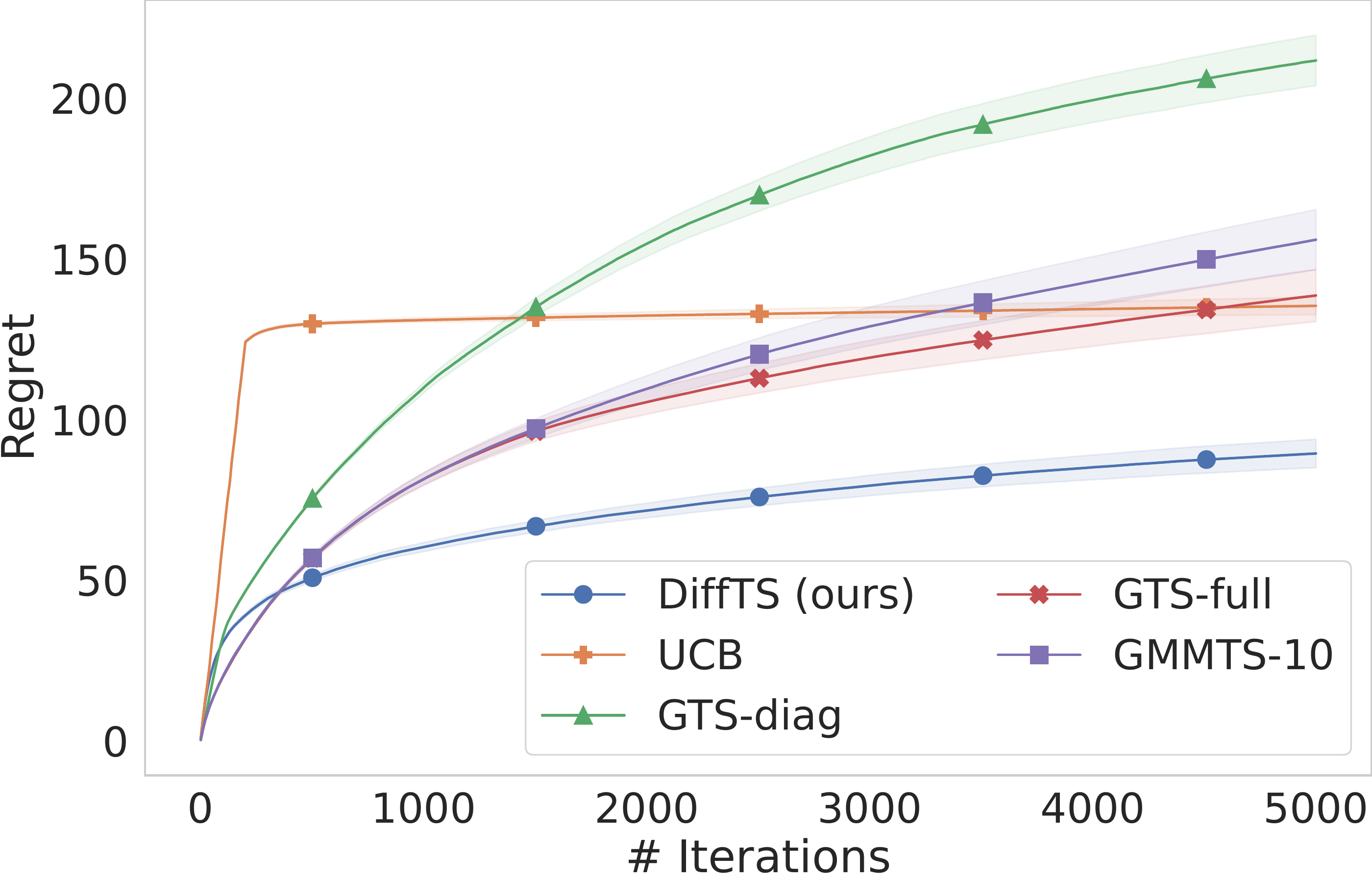}
    \end{subfigure}
    \begin{subfigure}{0.32\textwidth}
    \includegraphics[width=\linewidth]{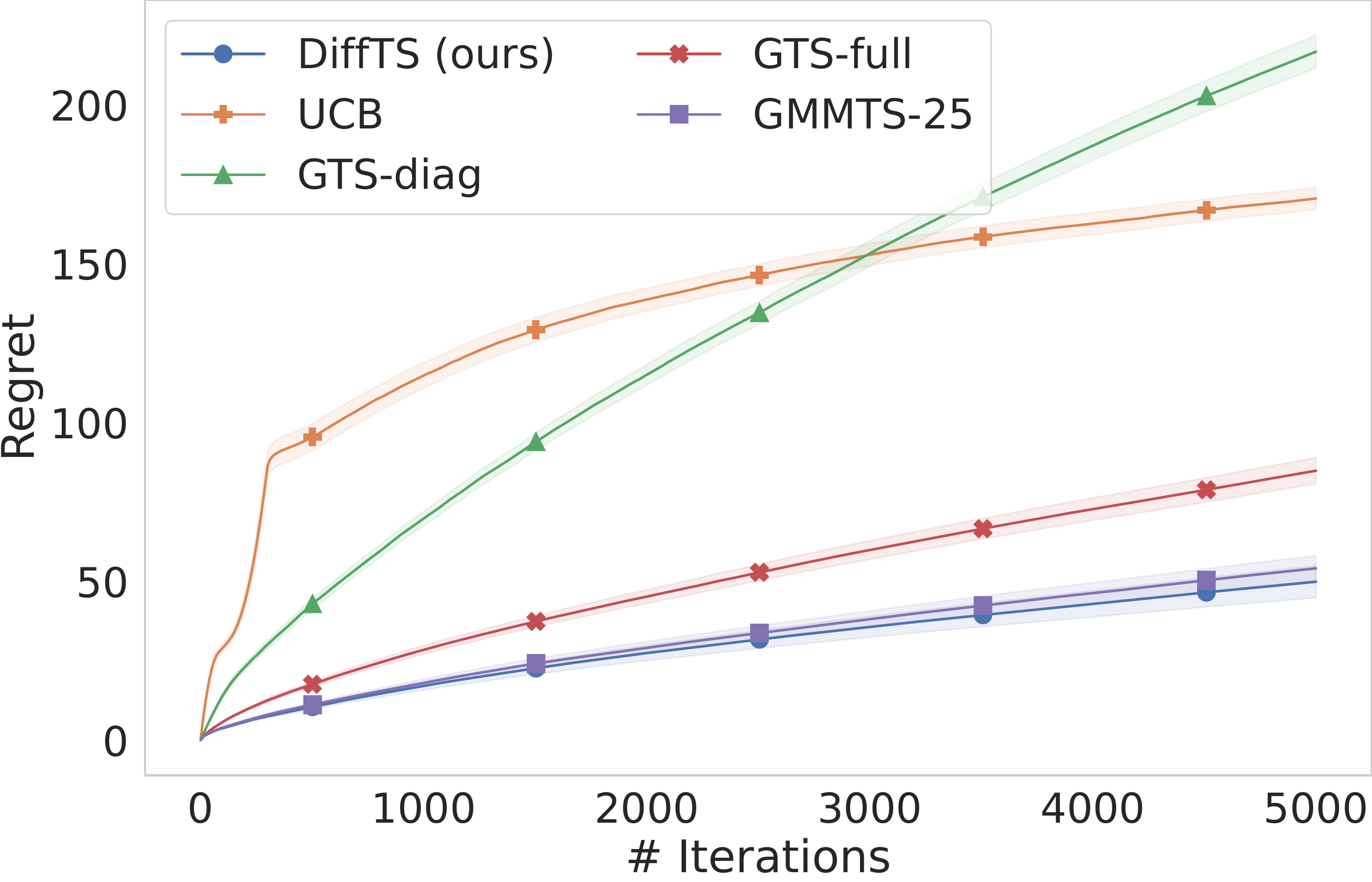}
    \end{subfigure}
    \begin{subfigure}{0.32\textwidth}
    \includegraphics[width=\linewidth]{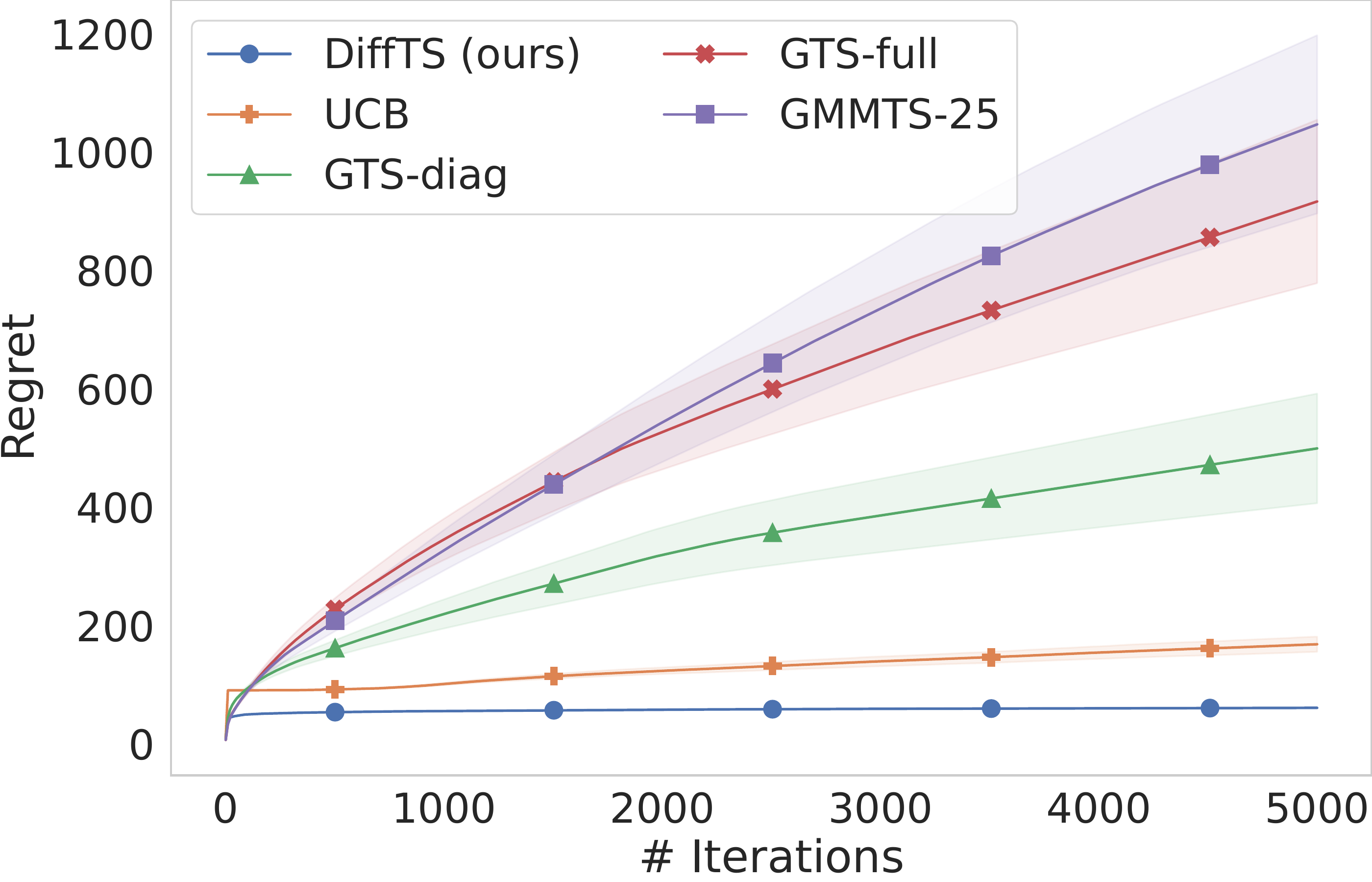}
    \end{subfigure}
    \\[1em]
    \begin{subfigure}{0.32\textwidth}
    \includegraphics[width=\linewidth]{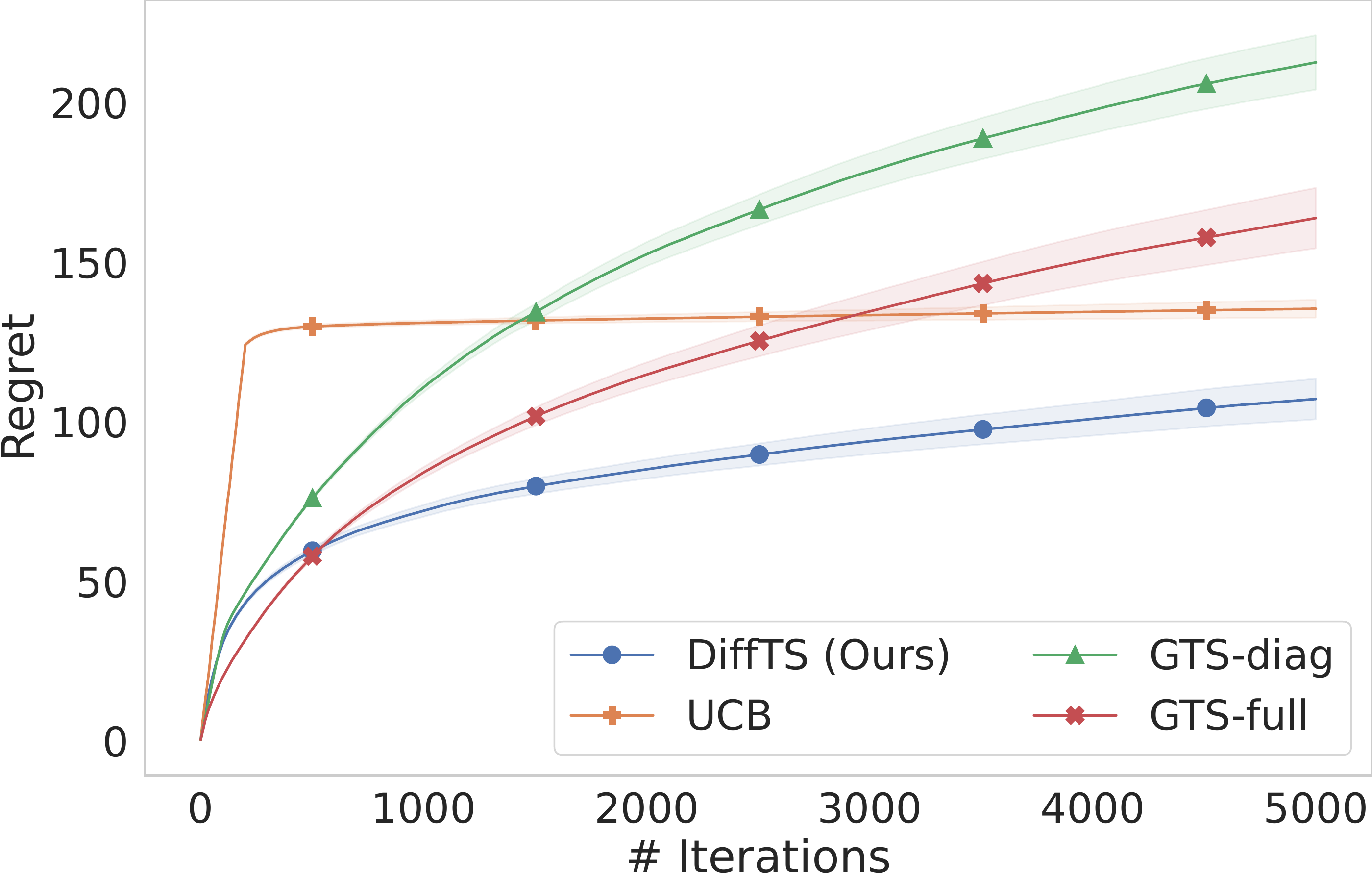}
    \caption{\texttt{Popular and Niche}}
    \end{subfigure}
    \begin{subfigure}{0.32\textwidth}
    \includegraphics[width=\linewidth]{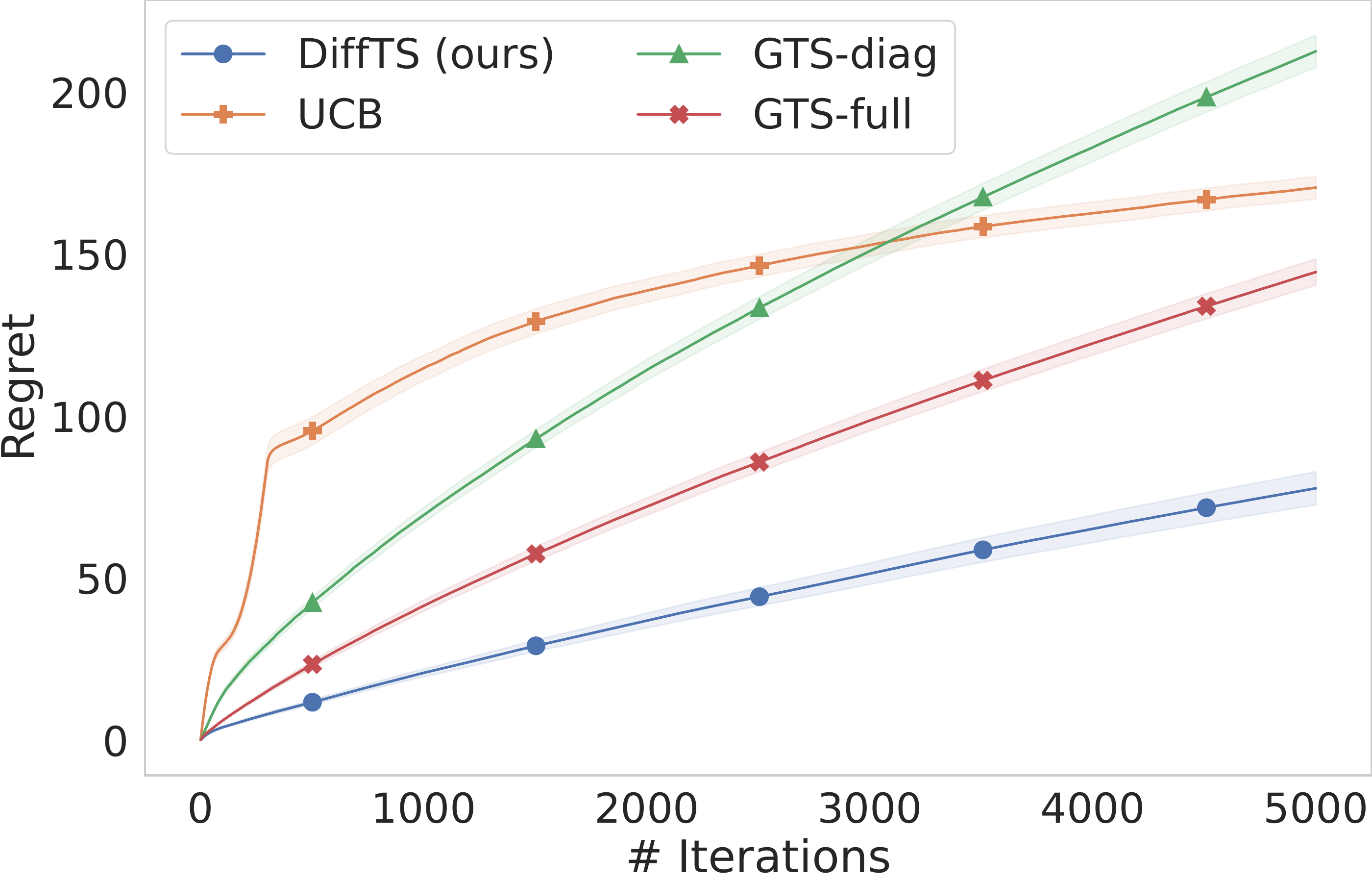}
    \caption{\texttt{iPinYou Bidding}}
    \end{subfigure}
    \begin{subfigure}{0.32\textwidth}
    \includegraphics[width=\linewidth]{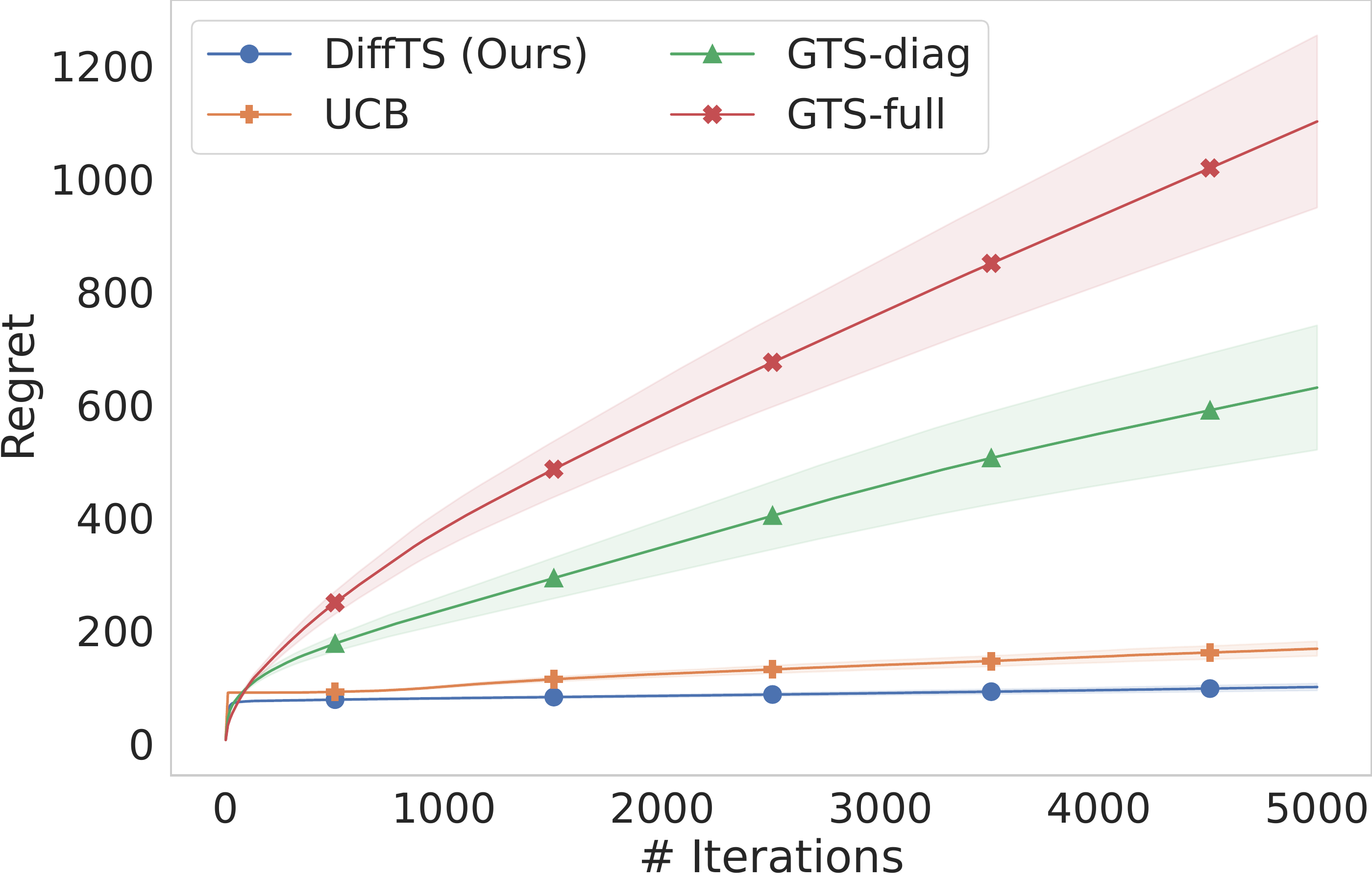}
    \caption{\texttt{2D Maze}}
    \end{subfigure}
    \caption{Regret performances on three different problems with priors fitted/trained on either exact expected rewards (top) or partially observed noisy rewards (bottom). The results are averaged over tasks of a test set and shaded areas represent standard errors.}
    \label{fig:exp}
\end{figure}


\noindent\textbf{Problem Construction\afterhead}
To demonstrate the wide applicability of our technique, we consider here three bandit problems respectively inspired by the applications in recommendation system, online pricing, and online shortest path routing~\citep{talebi2017stochastic}.
Detailed description of the task distributions and
some visualization that help understand the problem structures are respectively provided in \cref{apx:bandit-instances,apx:visualization}.
The first and the third problems listed below rely on synthetic data, and we obtain the rewards by perturbing the means with Gaussian noise of standard deviation $\noisedev=0.1$ (we will thus only specify the construction of the means). 
As for the second problem, we use the iPinYou dataset~\citep{liao2014ipinyou}.
\begin{enumerate}[leftmargin=*]
    \item \texttt{Popular and Niche} Problem. 
    We consider here the problem of choosing items to recommend to customers.
    Let $\nArms=200$. The arms (items) are separated into $40$ groups, each of size $5$. Among these, $20$ groups of arms correspond to the popular items and tend to have high mean rewards. However, these arms are never the optimal ones.
    The other $20$ groups of arms correspond to the niche items.
    Most of them have low mean rewards but a few of them (those that match the preferences of the customer) have mean rewards that are higher than that of all the other arms.
    A task correspond to a customer so the partitions into groups and popular and niche items are fixed across tasks while the remaining elements vary.
    \item \texttt{iPinYou Bidding} Problem.
    We consider here the problem of setting the bid price in auctions.
    Let $v=300$ be the value of the item.
    Each arm corresponds to a bid price $b\in\intinterval{0}{299}$, and the reward is either $v-b$ when the learner wins the auction or $0$ otherwise.
    The reward distribution of a task is then solely determined by the winning rates which are functions of the learner's bid and the distribution of the highest bid from competitors.
    For the latter we use the corresponding empirical distributions of $1352$ ad slots from the iPinYou bidding data set \citep{liao2014ipinyou} (each ad slot is a single bandit task).
    \item \texttt{2D Maze} Problem. We consider here an online shortest path routing problem on grid graphs.
    We formalize it as a reward maximization combinatorial bandit~\citep{CWY13} with semi-bandit feedback.
    As shown in \cref{fig:2dmaze}, the super arms are the simple paths between the source and the destination (fixed across the tasks) whereas the base arms are the edges of the grid graph.
    At each round, the learner picks a super arm and observes the rewards of all the base arms (edges) contained in this super arm (path).
    Moreover, the edges' mean rewards in each task are derived from a 2D maze that we randomly generate.
    The mean reward is $-1$ when there is a wall on the associated case (marked by the black color) and $-0.01$ otherwise.
\end{enumerate}

\paragraph{Training, Baselines, and Evaluation\afterhead}

To train the diffusion models, for each problem we construct a training set $\trainset$ and a calibration set $\calset$ that contain the expected means of the tasks.
We then conduct experiments for the following two configurations:
\begin{enumerate}[leftmargin=*]
    \item Learn from perfect data: The priors are learned using $\trainset$ and $\calset$ that contain the exact mean rewards.
    Standard training procedure is applied here.
    \item Learn from imperfect data: The priors are learned using $\trainsetdeg$ and $\calsetdeg$ that are obtained from $\trainset$ and $\calset$ by perturbing the samples with noise of standard deviation $0.1$ and then dropping each feature of a sample with probability $0.5$.
    To tackle this challenging situation we adopt the approach proposed in \cref{sec:training}.
\end{enumerate}

In terms of bandit algorithms, we compare our method, DiffTS, with  UCB1~\citep{auer2002using}, with Thompson sampling with Gaussian prior using either diagonal or full covariance matrix (GTS-diag and GTS-full, \citealp{thompson1933likelihood}), and with Thompson sampling with Gaussian mixture prior~\citep{hong2022thompson}.\footnote{For the \texttt{2D Maze} problem we consider their combinatorial extensions in which the UCB index / sampled mean of a super arm is simply the sum of the corresponding quantities of the contained base arms~\citep{CWY13,wang2018thompson}.}
The priors of the Thompson sampling algorithms are also learned with the same perfect / imperfect data that we use to train diffusion models.
These thus form strong baselines against which we only improve in terms of the model used to learn the prior.
As for the \ac{GMM} baseline, we use full covariance matrices and consider the case of either $10$ or $25$ components (GMMTS-10 and GMMTS-25).
We employ the standard \ac{EM} algorithm to learn the \ac{GMM} when perfect data are available but fail to find any existing algorithm that is able to learn a good \ac{GMM} on the imperfect data that we consider.
We thus skip the \ac{GMM} baseline for the imperfect data setup.
However, as we will see, even with imperfect data, the performance of DiffTS still remains better or comparable to GMMTS learned on perfect data.

To evaluate the performance of the algorithms, we measure their average regret on a standalone test set---
for a sequence of arms $(\vt[\arm])_{\run\in\intinterval{1}{\nRuns}}$ pulled by an algorithm in a bandit task, the induced regret is $\vt[\reg][\nRuns] = \nRuns\va[\meanreward][\sol[\arm]] - \sum_{\run=1}^{\nRuns} \va[\meanreward][\vt[\arm]]$, where $\sol[\arm]\in\argmax_{\arm\in\arms}\va[\meanreward]$ is an optimal arm in this task.
The assumed noise level $\est{\noisedev}$ is fixed to the same value across all the methods.

\paragraph{Results\afterhead}

The results are presented in \cref{fig:exp}.
For ease of readability, among the two GMM priors ($10$ and $25$ components), we only show the one that achieves smaller regret.
We see clearly that throughout the three problems and the two setups considered here, the proposed DiffTS algorithm always has the best performance.
The difference is particularly significant in the \texttt{Popular and Niche} and \texttt{2D Maze} problems, in which the regret achieved by DiffTS is around two times smaller than that achieved by the best performing baseline method.
This confirms that using diffusion prior is more advantageous in problems with complex task distribution.

On the other hand, we also observe that the use of GMM prior in these two problems leads to performance worse than that of GTS-full, whereas it yields performance that is as competitive as DiffTS in the iPinYou Bidding problem.
This is coherent with the visualizations we make in \cref{apx:visualization}, which shows that the fitted GMM is only capable of generating good samples in the \texttt{iPinYou Bidding} problem.
This, however, also suggests that the use of a more complex prior is a double-edged sword, and can lead to poor performance when the data distribution is not faithfully represented.

In \cref{apx:ablation}, we further present ablation studies to investigate the impacts of various components of our algorithm.
In summary, we find out both the variance calibration step and the EM-like procedure for training with imperfect data are the most crucial to our algorithms, as dropping either of the two could lead to severe performance degradation.
We also affirm that the use of SURE-based regularization does lead to smaller regret, but finding the optimal regularization parameter $\regpar$ is a challenging problem.

Finally, while the good performance of DiffTS is itself an evidence of the effectiveness of our sampling and training algorithms, we provide additional experiments in \cref{apx:exp-add} to show how these methods can actually be relevant in other contexts.



 \section{Concluding Remarks}
 \label{sec:conlcusion}
 In this work, we argue that the flexibility of diffusion models makes them a promising choice for representing complex priors in real-world online decision making problems. Then we design a new algorithm for multi-armed bandits that uses a diffusion prior with Thompson sampling. Our experiments show that this can significantly reduce the regret when compared to existing bandit algorithms.
Additionally, we propose a training procedure for diffusion models that can handle imperfect data, addressing a common issue in bandit scenario. This method is of independent interest.

Our work raises a number of exciting but challenging research questions.
One potential extension is to apply our approach to meta-learning problems in contextual bandits or reinforcement learning. This would involve modeling a distribution of functions or even of Markov decision processes by diffusion models, which remains a largely unexplored area despite a few attempts that work toward these purposes \citep{dutordoir2022neural,nava2022meta}.  Another factor not addressed in our work is the uncertainty of the learned model itself, in contrast to the uncertainty modeled by the model. When the diffusion model is trained on limited data, its uncertainty is high, and using it as a fixed prior may lead to poor results.
Regarding theoretical guarantees, several recent works~\citep{chen2022sampling,lee2022convergence} have shown that unconditional sampling of diffusion models can approximate any realistic distribution provided sufficiently accurate score estimate (the score-based interpretation of the predicted noise).
Further extending the above results to cope with posterior sampling and deriving regret bounds would be a fruitful direction to work on.

Finally, the posterior sampling algorithm for the diffusion model is a key bottleneck in scaling up our method.
There has been significant work on accelerating unconditional sampling of diffusion models \citep{salimans2021progressive,dockhorn2022genie,zheng2022fast}, but incorporating these into posterior sampling remains an open question.

\bibliographystyle{plainnat}
\bibliography{references,Brano}


\newpage
\appendix

\noindent\rule{\textwidth}{1pt}
\begin{center}
\vspace{7pt}
{\Large  Appendix}
\end{center}
\noindent\rule{\textwidth}{1pt}


\section{Comparison of Diffusion Posterior Sampling Algorithms}
\label{apx:related}
In this section we provide detailed explanation on how our algorithm for posterior sampling algorithm from a diffusion prior compares to the ones that has been proposed in the literature.
While none of these algorithms was designed specifically for the multi-armed bandit setup that we consider, it turns out that our \cref{algo:post-sampling} shares the same general routine with many existing methods.
In fact, a large family of algorithms proposed in the literature for posterior sampling with diffusion models (or equivalently, with trained denoisers or with learned score functions) goes through an iterative process that alternates between unconditional sampling and measurement consistency steps.
The main difference thus lies in how the measurement consistency step is implemented.
This can be roughly separated into the following three groups within the context of \cref{algo:post-sampling}.\footnote{In the literature this is often referred to as the problem of inpainting with noisy observation.}
We recall that unconditional sampled is represented by $\vdiff[\alt{\latent}]$ and is drawn from $\density_{\param}(\vdiff[\rvlat]\given\vdupdate[\latent])$ [or $\density_{\param,\diffbackdev}(\vdiff[\rvlat]\given\vdupdate[\latent])$ in our case].

\begin{enumerate}[leftmargin=*]
    \item \textbf{Direct mix with the observation $\obs$.} The simplest solution is to mix directly the unconditional latent variable $\vdiff[\alt{\latent}]$ with the observed features of $\obs$. That is, for a certain $\vdiff[\mixcoef]\in[0,1]$, we take
    \begin{equation}
    \label{eq:mix-obs}
    \vdiff[\latent]
    =
    (1-\mask)\odot\vdiff[\alt{\latent}]
    +\mask\odot(\vdiff[\mixcoef]\vdiff[\alt{\latent}]+(1-\vdiff[\mixcoef])\obs).
    \end{equation}
    This is essentially the approach taken by \citet{sohl2015deep,jalal2021robust,kawar2021stochastic,kadkhodaie2021stochastic}.\footnote{Concretely, instead of the mixing step it could be a gradient step that minimizes $\norm{\mask\odot(\obs-\vdupdate[\latent])}^2$. This becomes equivalent to \eqref{eq:mix-obs} if we replace $\vdupdate[\latent]$ by $\vdiff[\alt{\latent}]$ and the stepsize is smaller than $1/2$.
    Our presentation is intended to facilitate the comparison between different methods while keeping the essential ideas. We thus also make similar minor modifications in \eqref{eq:mix-noisy}
    and \eqref{eq:gradient-consistency}.
    }
    However, the mismatch between the noise levels of $\obs$ and $\vdiff[\alt{\latent}]$ could be detrimental.
    \item \textbf{Mix with a noisier version of the observation.}
    Alternatively, the most popular approach in the literature is probably to first pass the observation through the forward process by sampling $\vdiff[\obs]$ from $\gaussian(\vdiff[\rvobs];\sqrt{\vdiff[\diffscalingprod]}\vdiff[\obs],(1-\vdiff[\diffscalingprod])\Id_{\vdim})$ and then perform a weighted average between the unconditional latent variable $\vdiff[\alt{\latent}]$ and the diffused observation $\vdiff[\obs]$.
    \begin{equation}
    \label{eq:mix-noisy}
    \vdiff[\latent]
    =
    (1-\mask)\odot\vdiff[\alt{\latent}]
    +\mask\odot(\vdiff[\mixcoef]\vdiff[\alt{\latent}]+(1-\vdiff[\mixcoef])\vdiff[\obs]).
    \end{equation}
    This idea was introduced in \cite{song2021score,song2022solving} and subsequently used by \citet{chung2022come,lugmayr2022repaint} where the authors improved different aspects of the algorithm without modifying the implementation of the measurement consistency step.
    \item \textbf{Gradient step with respect to denoiser input.}
    The most involved but also the most general solution is to take a gradient step to ensure that the denoised output from the latent variable is close to our observation after applying the measurement operator.
    In other words, for a certain stepsize $\vdiff[\step]$, we set
    \begin{equation}
        \label{eq:gradient-consistency}
        \vdiff[\latent] = \vdiff[\alt{\latent}] - \vdiff[\step]\grad_{\vdiff[\alt{\latent}]}\norm{\mask\odot(\obs-\denoiser_{\param}(\vdiff[\alt{\latent}],\diffstep))}^2.
    \end{equation}
    This was the method used by \citet{chung2022diffusion} and it was also jointly used with other measurement consistency strategy in \cite{chung2022improving,yu2022conditioning}.
\end{enumerate}

Provided the above overview, it is clear that our method (\cref{algo:post-sampling}/\cref{fig:post-sampling}) is similar but different from all the algorithms previously introduced in the literature.
In fact, while we also use a diffused observation, it is sampled from $\gaussian(\vdiff[\tilde{\rvobs}];\sqrt{\vdiff[\diffscalingprod]}
\vdiff[\obs][0]
+ \sqrt{1-\vdiff[\diffscalingprod]}\vadupdate[\bar{\noise}], \vadiff[\noisedevalt][\arm][\diffstep, \obs])$.
The use of predicted noise $\vadupdate[\bar{\noise}]$ for the forward process improves the coherence of the output as we will demonstrate on a simple example in \cref{apx:exp-post-sampling}.
On the other hand, the third approach mentioned above could potentially lead to even better results, but the need of computing the gradient with respect the denoiser makes it much less efficient.
Our method can then be regarded as an approximation of \eqref{eq:gradient-consistency} by using
\begin{equation}
    \notag
    \denoiser_{\param}(\vdiff[\alt{\latent}],\diffstep)
    \approx \frac{\vdiff[\alt{\latent}]-\sqrt{1-\vdupdate[\diffscalingprod]}\vdupdate[\bar{\noise}]}{\sqrt{\vdiff[\diffscalingprod]}},
\end{equation}
which eliminates the need for computing the gradient of the denoiser.

In additional to the aforementioned methods, other alternatives to perform posterior sampling with diffusion models include the use of a dedicated guidance network that learns directly $\densityalt(\obs\given\vdiff[\latent])$ \cite{dhariwal2021diffusion,song2021score,huang2022improving}, annealed Langevin dynamics \cite{song2019generative}, Gaussian approximation of posterior \cite{graikos2022diffusion}, and finally, 
a closed-form expression for the conditional score function and the conditional reverse step can be derived if we assume that the observed noise is carved from the noise of the diffusion process \citep{kawar2021snips,kawar2022denoising}.

\section{Mathematics of Algorithm Design}
In this appendix we provide mathematical derivations that inspire the design of several components of our algorithms.

\subsection{Reverse Step in Posterior Sampling from Diffusion Prior}
\label{apx:post-sampling}

We next provide the derivation of the reverse step of our posterior sampling algorithm (variant of \cref{algo:post-sampling} as described in \cref{subsec:DiffTS}) that samples from $\vdiff[\rvlat][\nDiffsteps]\given\vdupdate[\latent],\obs$.
For this, we write
\begin{equation}
    \label{eq:modify-xl}
    \densityalt(\vdiff[\latent]\given \vdupdate[\latent], \obs)
    = 
    \frac{ 
    \densityalt(\vdiff[\latent]\given\vdupdate[\latent])
    \densityalt(\obs\given\vdiff[\latent],\vdupdate[\latent])
    }{
    \densityalt(\obs\given\vdupdate[\latent])
    }
    =
    \frac{
    \densityalt(\vdiff[\latent]\given\vdupdate[\latent])
    \int \densityalt(\obs\given\vdiff[\latent][0])
    \densityalt(\vdiff[\latent][0]\given\vdiff[\latent],\vdupdate[\latent])
    \dd \vdiff[\latent][0]
    }{
    \densityalt(\obs\given\vdupdate[\latent])
    }.
\end{equation}
The term $\densityalt(\vdiff[\latent]\given\vdupdate[\latent])$ can be simply approximated with $\density_{\param,\diffbackdev}(\vdiff[\latent]\given\vdupdate[\latent])$.
As for the integral, one natural solution is to use
$\densityalt(\vdiff[\latent][0]\given\vdiff[\latent],\vdupdate[\latent])
= \densityalt(\vdiff[\latent][0]\given\vdiff[\latent])
\approx \alt{\density_{\param,\diffbackdev}}(\vdiff[\latent][0]\given\vdiff[\latent])$.
Then, for example, if $\densityalt(\obs\given\vdiff[\latent][0])=\gaussian(\obs;\vdiff[\latent][0],\noisevar\Id_{\vdim})$, we can deduce
\[
\int \densityalt(\obs\given\vdiff[\latent][0])
    \alt{\density_{\param}}(\vdiff[\latent][0]\given\vdiff[\latent])
    \dd \vdiff[\latent][0]
=
\gaussian(
\obs;
\denoiser_{\theta}(\vdiff[\latent],\diffstep), \noisevar\Id_{\vdim} + \diag(\vdiff[\diffbackvar])
).
\]
Nonetheless, as the denoiser $\denoiser_{\param}$ can be arbitrarily complex, this does not lead to a close form expression to sample $\vdiff[\latent]$.
Therefore, to avoid the use of involved sampling strategy in the recurrent step, we approximate $\densityalt(\vdiff[\latent][0]\given\vdiff[\latent],\vdupdate[\latent])$ in a different way.
We first recall that by definition of the diffusion model we may write
\[
\vdiff[\rvlat] = 
\sqrt{\vdiff[\diffscalingprod]}\vdiff[\rvlat][0]
+ \sqrt{1-\vdiff[\diffscalingprod]}\vdiff[\bar{\rvnoise}]
\text{~~ and ~~}
\vdupdate[\rvlat] = 
\sqrt{\vdupdate[\diffscaling]}\vdiff[\rvlat]
+ \sqrt{1-\vdiff[\diffscaling]}\vdupdate[\rvnoise],
\]
where both $\bar{\vdiff[\rvnoise]}$ and $\vdupdate[\rvnoise]$ are random variable with distribution $\gaussian(0,\Id_{\vdim})$.
This leads to
\[
\vdupdate[\rvlat] = 
\sqrt{\vdupdate[\diffscalingprod]}\vdiff[\rvlat][0]
+ \sqrt{1-\vdupdate[\diffscalingprod]}\vdupdate[\bar{\rvnoise}]
\]
where
\[
\vdupdate[\bar{\rvnoise}]
=
\sqrt{\frac{\vdupdate[\diffscaling](1-\vdiff[\diffscalingprod])}{
1-\vdupdate[\diffscalingprod]}
} \vdiff[\bar{\rvnoise}]
+
\sqrt{\frac{1-\vdupdate[\diffscaling]}{
1-\vdupdate[\diffscalingprod]}
}
\vdupdate[\rvnoise].
\]
Therefore, we may take $\vdupdate[\bar{\rvnoise}]$ as a reasonable approximation of $\vdiff[\bar{\rvnoise}]$, while sampling $\vdupdate[\bar{\rvnoise}]$ is basically the same as sampling from $\alt{\density}_{\param}(\vdiff[\rvlat][0]\given\vdupdate[\latent])$.
To summarize, we write
\begin{align*}
\densityalt(\vdiff[\latent][0]\given\vdiff[\latent],\vdupdate[\latent])
&=
\densityalt\left(
\vdiff[\bar{\rvnoise}] = \frac{\vdiff[\latent]-\sqrt{\vdiff[\diffscalingprod]}\vdiff[\latent][0]}{
\sqrt{1-\vdiff[\diffscalingprod]}}
\,\Big\vert\,
\vdiff[\latent], \vdupdate[\latent]
\right)
\\
&\approx
\densityalt\left(
\vdupdate[\bar{\rvnoise}] = \frac{\vdiff[\latent]-\sqrt{\vdiff[\diffscalingprod]}\vdiff[\latent][0]}{
\sqrt{1-\vdiff[\diffscalingprod]}}
\,\Big\vert\,
\vdiff[\latent], \vdupdate[\latent]
\right)
\\
&=
\densityalt\left(
\vdiff[\rvlat][0]
=\frac{1}{\sqrt{\vdupdate[\diffscalingprod]}}
\left(\vdupdate[\latent]
-\left(\vdiff[\latent]-\sqrt{\vdiff[\diffscalingprod]}\vdiff[\latent][0]\right)
 \sqrt{\frac{1-\vdupdate[\diffscalingprod]}{1-\vdiff[\diffscalingprod]}}
\right)
\,\Big\vert\,
\vdiff[\latent], \vdupdate[\latent]
\right)
\\
&\approx
\alt{\density}_{\param,\diffbackdev}
\left(
\vdiff[\rvlat][0]
=\frac{1}{\sqrt{\vdupdate[\diffscalingprod]}}
\left(\vdupdate[\latent]
-\left(\vdiff[\latent]-\sqrt{\vdiff[\diffscalingprod]}\vdiff[\latent][0]\right)
\sqrt{\frac{1-\vdupdate[\diffscalingprod]}{1-\vdiff[\diffscalingprod]}}
\right)
\,\Big\vert\,
\vdupdate[\latent]
\right)
\\
&=
\gaussian
\Bigg(
\sqrt{
\frac{
\vdiff[\diffscalingprod](1-\vdupdate[\diffscalingprod])
}{\vdupdate[\diffscalingprod](1-\vdiff[\diffscalingprod])
}} \vdiff[\latent][0]
+ \frac{\vdupdate[\latent]}{\sqrt{\vdupdate[\diffscalingprod]}}
- \sqrt{
\frac{1-\vdupdate[\diffscalingprod]}{
\vdupdate[\diffscalingprod](1-\vdiff[\diffscalingprod])
}}\vdiff[\latent]
\:;\:
\\
&\hspace{3em}
\denoiser_{\param}(\vdupdate[\latent], \diffstep+1),
\diag(\vdupdate[\diffbackvar])
\Bigg)
\\
&=
\sqrt{\vdiff[\backvarscaling]}\,
\gaussian
\left(
\vdiff[\latent][0]
\:;\:
\frac{1}{\sqrt{\vdiff[\diffscalingprod]}}
(\vdiff[\latent] - \sqrt{1-\vdiff[\diffscalingprod]}\vdupdate[\bar{\noise}])
,
\vdiff[\backvarscaling]
\diag(\vdupdate[\diffbackvar])
\right),
\end{align*}
where 
$\vdiff[\backvarscaling]=
\vdupdate[\diffscalingprod](1-\vdiff[\diffscalingprod])
/(\vdiff[\diffscalingprod](1-\vdupdate[\diffscalingprod]))$
and $\vdupdate[\bar{\noise}]$ represents the noise predicted by the denoiser from $\vdupdate[\latent]$, that is,
\[
\vdupdate[\bar{\noise}]
=\frac{
\vdupdate[\latent]
-\sqrt{\vdupdate[\diffscalingprod]}
\denoiser_{\param}(\vdupdate[\latent], \diffstep+1)
}{
\sqrt{1-\vdupdate[\diffscalingprod]}}.
\]
In this way, we have approximated $\densityalt(\vdiff[\latent][0]\given\vdiff[\latent],\vdupdate[\latent])$ by a Gaussian with diagonal covariance and with mean that depends only linearly on $\vdiff[\latent]$.
In the multi-armed bandit setup that we consider here, the relation between $\obs=\vt[\history]$ the interaction history and $\vdiff[\latent][0]=\meanreward$ the mean reward vector obeys \eqref{eq:dependence-bandit-observation}.
There exists thus $\Cst(\vt[\history])$ and $\widetilde{\Cst}(\vt[\history])$ such that
\begin{equation}
    \notag
    \begin{aligned}
    \underbrace{
    \int \densityalt(\vt[\history]\given\vdiff[\latent][0])
    \densityalt(\vdiff[\latent][0]\given\vdiff[\latent],\vdupdate[\latent])
    \dd \vdiff[\latent][0]}_{A}
    &=
    \int \Cst(\vt[\history])
    \prod_{\runalt=1}^{\run}
    \gaussian(\vt[\reward][\runalt]; \va[\meanreward][\vt[\arm][\runalt]], \noisevar)
    \densityalt(\vdiff[\latent][0]\given\vdiff[\latent],\vdupdate[\latent])
    \dd \vdiff[\latent][0]\\
    &=
    \int \widetilde{\Cst}(\vt[\history])
    \prod_{\substack{\arm\in\arms\\[0.1em] \vta[\pullcount]>0}}
    \gaussian(\vta[\est{\meanreward}]; \va[\meanreward], (\vta[\noisedev])^2)
    \densityalt(\vdiff[\latent][0]\given\vdiff[\latent],\vdupdate[\latent])
    \dd \vdiff[\latent][0].
    \end{aligned}
\end{equation}
Using $\vdiff[\latent][0]=\meanreward$, the aforementioned approximation of $\densityalt(\vdiff[\latent][0]\given\vdiff[\latent],\vdupdate[\latent])$, and ignoring the multiplicative constant that does not depend on $\vdiff[\latent]$, we get
\begin{align*}
    A
    &\propto
    \int
    \prod_{\substack{\arm\in\arms\\[0.1em] \vta[\pullcount]>0}}
    \gaussian(\vta[\est{\meanreward}]; \vadiff[\latent][\arm][0], (\vta[\noisedev])^2)
    \densityalt(\vdiff[\latent][0]\given\vdiff[\latent],\vdupdate[\latent])
    \dd \vdiff[\latent][0]\\
    &\approx
    \sqrt{\vdiff[\backvarscaling]}
    \int
    \prod_{\substack{\arm\in\arms\\[0.1em] \vta[\pullcount]>0}}
    \gaussian(\vta[\est{\meanreward}]; \vadiff[\latent][\arm][0], (\vta[\noisedev])^2)
    \prod_{\substack{\arm\in\arms}}
    \gaussian
    \left(
    \vadiff[\latent][\arm][0]
    ;
    \frac{1}{\sqrt{\vdiff[\diffscalingprod]}}
    (\vadiff[\latent] - \sqrt{1-\vdiff[\diffscalingprod]}\vadupdate[\bar{\noise}])
    ,
    \vdiff[\backvarscaling](\vadupdate[\diffbackdev])^2
    \right)
    \dd \vdiff[\latent][0]\\
    &=
    \sqrt{\vdiff[\backvarscaling]}
    \prod_{\substack{\arm\in\arms\\[0.1em] \vta[\pullcount]>0}}
    \int
    \gaussian(\vta[\est{\meanreward}]; \vadiff[\latent][\arm][0], (\vta[\noisedev])^2)
    \gaussian
    \left(
    \vadiff[\latent][\arm][0]
    ;
    \frac{1}{\sqrt{\vdiff[\diffscalingprod]}}
    (\vadiff[\latent] - \sqrt{1-\vdiff[\diffscalingprod]}\vadupdate[\bar{\noise}])
    ,
    \vdiff[\backvarscaling](\vadupdate[\diffbackdev])^2
    \right)
    \dd \vadiff[\latent][\arm][0]
    \\
    &=
    \sqrt{\vdiff[\backvarscaling]}
    \prod_{\substack{\arm\in\arms\\[0.1em] \vta[\pullcount]>0}}
    \gaussian
    \left(
    \vta[\est{\meanreward}];
    \frac{1}{\sqrt{\vdiff[\diffscalingprod]}}
    (\vadiff[\latent] - \sqrt{1-\vdiff[\diffscalingprod]}\vadupdate[\bar{\noise}]),
    (\vta[\noisedev])^2 + \vdiff[\backvarscaling](\vadupdate[\diffbackdev])^2
    \right)
    \\
    &\propto
    \prod_{\substack{\arm\in\arms\\[0.1em] \vta[\pullcount]>0}}
    \gaussian
    \left(
    \vadiff[\latent]
    ;
    \sqrt{\vdiff[\diffscalingprod]}\vta[\est{\meanreward}]
    + \sqrt{1-\vdiff[\diffscalingprod]}\vadupdate[\bar{\noise}],
    \vdiff[\diffscalingprod]
    ((\vta[\noisedev])^2 + \vdiff[\backvarscaling](\vadupdate[\diffbackdev])^2).
    \right)
\end{align*}
Plugging the above into \eqref{eq:modify-xl}, we obtain
$\tilde{\densityalt}(
\vdiff[\latent]\given\vdupdate[\latent],\vt[\history]
)
= \prod_{\arm\in\arms} \tilde{\densityalt}(
\vadiff[\latent]\given\vdupdate[\latent],\vt[\history]
)$
where $\tilde{\densityalt}(
\vadiff[\latent]\given\vdupdate[\latent],\vt[\history]
)=\density_{\param,\diffbackdev}(\vadiff[\latent]\given\vdupdate[\latent])$ if $\arm$ is never pulled and otherwise it is the distribution satisfying
\begin{equation}
    \label{eq:post-sampling-proportional}
\tilde{\densityalt}(
\vadiff[\latent]\given\vdupdate[\latent],\vt[\history]
)
\propto
\density_{\param,\diffbackdev}(\vadiff[\latent]\given\vdupdate[\latent])
\gaussian
    \left(
    \vadiff[\latent]
    ;
    \sqrt{\vdiff[\diffscalingprod]}\vta[\est{\meanreward}]
    + \sqrt{1-\vdiff[\diffscalingprod]}\vadupdate[\bar{\noise}],
    \vdiff[\diffscalingprod]
    ((\vta[\noisedev])^2 + \vdiff[\backvarscaling](\vadupdate[\diffbackdev])^2)
    \right).
\end{equation}
To conclude, we resort to the following lemma (see \citealp{papandreou2010gaussian} for more general results).

\begin{lemma}
\label{lem:product-gaussian-sampling}
Let $\mu_1,\mu_2,\noisedev_1,\noisedev_2\in\R$.
The following two sampling algorithms are equivalent.
\begin{enumerate}
    \item Sample $x$ directly from the distribution whose density is proportional the product $\gaussian(\mu_1, \noisedev_1^2)\gaussian(\mu_2, \noisedev_2^2)$.
    \item Sample $x_1$ from $\gaussian(\mu_1, \noisedev_1^2)$, $x_2$ from $\gaussian(\mu_2, \noisedev_2^2)$, and compute
    $x = \noisedev_1^{-2}x_1 + \noisedev_2^{-2}x_2/(\noisedev_1^{-2}+\noisedev_2^{-2})$.
\end{enumerate}
\end{lemma}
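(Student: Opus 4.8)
The plan is to verify directly that the two sampling procedures produce the same Gaussian distribution, by computing the density of the product of the two Gaussians and recognizing it as the density of the random variable constructed in step~2.

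First I would expand the product $\gaussian(x;\mu_1,\noisedev_1^2)\gaussian(x;\mu_2,\noisedev_2^2)$, discard all factors not depending on $x$, and collect the exponent as a quadratic in $x$. Completing the square gives
\[
-\frac{1}{2}\left(\frac{1}{\noisedev_1^2}+\frac{1}{\noisedev_2^2}\right)
\left(x-\frac{\noisedev_1^{-2}\mu_1+\noisedev_2^{-2}\mu_2}{\noisedev_1^{-2}+\noisedev_2^{-2}}\right)^2
+ \text{const},
\]
so the distribution in step~1 is $\gaussian\!\big(\bar\mu,\bar\noisedev^2\big)$ with
$\bar\noisedev^2=(\noisedev_1^{-2}+\noisedev_2^{-2})^{-1}$ and
$\bar\mu=\bar\noisedev^2(\noisedev_1^{-2}\mu_1+\noisedev_2^{-2}\mu_2)$.

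Next I would turn to step~2. Since $x_1$ and $x_2$ are independent Gaussians, the linear combination
$x=(\noisedev_1^{-2}x_1+\noisedev_2^{-2}x_2)/(\noisedev_1^{-2}+\noisedev_2^{-2})$
is Gaussian, so it suffices to match its mean and variance to $\bar\mu$ and $\bar\noisedev^2$. Linearity of expectation gives mean
$(\noisedev_1^{-2}\mu_1+\noisedev_2^{-2}\mu_2)/(\noisedev_1^{-2}+\noisedev_2^{-2})=\bar\mu$, and independence gives variance
\[
\frac{\noisedev_1^{-4}\noisedev_1^2+\noisedev_2^{-4}\noisedev_2^2}{(\noisedev_1^{-2}+\noisedev_2^{-2})^2}
=\frac{\noisedev_1^{-2}+\noisedev_2^{-2}}{(\noisedev_1^{-2}+\noisedev_2^{-2})^2}
=\frac{1}{\noisedev_1^{-2}+\noisedev_2^{-2}}=\bar\noisedev^2.
\]
Since a univariate Gaussian is determined by its mean and variance, the two distributions coincide.

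Honestly, there is no real obstacle here: the statement is a standard Gaussian-conjugacy fact and the only thing to be careful about is bookkeeping the normalizing constants (so that ``proportional to the product'' is correctly identified with the completed-square Gaussian) and confirming that the step~2 construction is indeed Gaussian before matching moments. The argument extends verbatim, coordinate by coordinate, to the diagonal multivariate case actually used in \eqref{eq:post-sampling-proportional}, which is why it suffices to state and prove the scalar version.
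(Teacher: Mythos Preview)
Your proposal is correct and takes essentially the same approach as the paper: identify the product of the two Gaussian densities as a Gaussian via completing the square, then verify that the linear combination in step~2 is Gaussian with matching mean and variance. The paper's proof is slightly terser (it simply cites the well-known product-of-Gaussians formula rather than deriving it), but the logic and computations are identical.
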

\begin{proof}
It is well known that the product of two Gaussian PDFs is itself proportional to a Gaussian PDF. Concretely, we have
\begin{equation}
    \label{eq:gaussian-product}
    \gaussian(\mu_1, \noisedev_1^2)
    \gaussian(\mu_2, \noisedev_2^2)
    \propto
    \gaussian\left(
    \frac{\noisedev_1^{-2}\mu_1 + \noisedev_2^{-2}\mu_2}
    {\noisedev_1^{-2}+\noisedev_2^{-2}},
    \frac{1}{\noisedev_1^{-2}+\noisedev_2^{-2}}
    \right).
\end{equation}
On the other hand, the linear combination of two independent Gaussian variables is also a Gaussian variable.
For $X_1, X_2$ that follow $\gaussian(\mu_1, \noisedev_1^2), \gaussian(\mu_2, \noisedev_2^2)$ and
$X = \noisedev_1^{-2}X_1 + \noisedev_2^{-2}X_2/(\noisedev_1^{-2}+\noisedev_2^{-2})$, we can compute
\begin{equation}
    \notag
    \begin{aligned}
    \ex[X]
    &= \frac{\noisedev_1^{-2}\ex[X_1] + \noisedev_2^{-2}\ex[X_2]}{\noisedev_1^{-2}+\noisedev_2^{-2}}
    = \frac{\noisedev_1^{-2}\mu_1 + \noisedev_2^{-2}\mu_2}{\noisedev_1^{-2}+\noisedev_2^{-2}},\\
    \Var[X]
    &= \frac{\noisedev_1^{-4}\Var[X_1] + \noisedev_2^{-4}\Var[X_2]}{(\noisedev_1^{-2}+\noisedev_2^{-2})^2}
    = \frac{\noisedev_1^{-2} + \noisedev_2^{-2}}{(\noisedev_1^{-2}+\noisedev_2^{-2})^2}
    = \frac{1}{\noisedev_1^{-2}+\noisedev_2^{-2}}.
    \end{aligned}
\end{equation}
Therefore, $X$ follows the distribution of \eqref{eq:gaussian-product} and computing the linear combination of $x_1$ and $x_2$ as suggested is equivalent to sampling directly from the resulting distribution.
\end{proof}
\noindent
We obtain the algorithm presented in \cref{subsec:DiffTS} by
applying \cref{lem:product-gaussian-sampling} to \eqref{eq:post-sampling-proportional} with 
\begin{equation}
    \notag
    \begin{aligned}
    \gaussian(\mu_1, \noisedev_1^2)
    &\subs\density_{\param,\diffbackdev}(\vadiff[\latent]\given\vdupdate[\latent])\\
    \gaussian(\mu_2, \noisedev_2^2)
    &\subs\gaussian
    \left(
    \vadiff[\latent]
    ;
    \sqrt{\vdiff[\diffscalingprod]}\vta[\est{\meanreward}]
    + \sqrt{1-\vdiff[\diffscalingprod]}\vadupdate[\bar{\noise}],
    \vdiff[\diffscalingprod]
    ((\vta[\noisedev])^2 + \vdiff[\backvarscaling](\vadupdate[\diffbackdev])^2)
    \right).
    \end{aligned}
\end{equation}

\subsection{On SURE-based Regularization}
\label{apx:SURE}

In this part we show how the loss function \eqref{eq:EM-loss} is related to Stein's unbiased risk estimate (SURE).
We first note that by definition of the diffusion process, we have $\vdiff[\latent]=\sqrt{\vdiff[\diffscalingprod]}\vdiff[\latent][0]+\sqrt{1-\vdiff[\diffscalingprod]}\bar{\vdiff[\noise]}$ where $\bar{\vdiff[\noise]}$ is a random variable following the distribution $\gaussian(0, \Id_{\vdim})$.
Moreover, $\sqrt{\vdiff[\diffscalingprod]}\denoiser_{\param}(\vdiff[\latent], \diffstep)$ is an estimator of $\sqrt{\vdiff[\diffscalingprod]}\vdiff[\latent][0]$ from $\vdiff[\latent]$.
The corresponding SURE thus writes
\begin{equation}
    \notag
    \mathrm{SURE}(\sqrt{\vdiff[\diffscalingprod]}\denoiser_{\param}(\cdot, \diffstep))
    =
    \norm{\sqrt{\vdiff[\diffscalingprod]}\denoiser_{\param}(\vdiff[\latent], \diffstep)
    -\vdiff[\latent]}^2
    -\nArms(1-\vdiff[\diffscalingprod])
    + 2(1-\vdiff[\diffscalingprod])
    \diver_{\vdiff[\latent]}(\sqrt{\vdiff[\diffscalingprod]}\denoiser_{\param}(\vdiff[\latent],\diffstep)).
\end{equation}
If it holds $\vdiff[\latent]=\sqrt{\vdiff[\diffscalingprod]}\obs$ while $\obs$ follows the distribution $\gaussian(\vdiff[\latent][0],\noisevar\Id_{\vdim})$, we get immediately $1-\vdiff[\diffscalingprod]=\vdiff[\diffscalingprod]\noisevar$.
The above can thus be rewritten as
\begin{equation}
    \notag
    \mathrm{SURE}(\sqrt{\vdiff[\diffscalingprod]}\denoiser_{\param}(\cdot, \diffstep))
    =
    \norm{\sqrt{\vdiff[\diffscalingprod]}\denoiser_{\param}(\vdiff[\latent], \diffstep)
    -\sqrt{\vdiff[\diffscalingprod]}\obs}^2
    -\nArms\vdiff[\diffscalingprod]\noisevar
    + 2\vdiff[\diffscalingprod]^{\frac{3}{2}}\noisevar
    \diver_{\vdiff[\latent]}(\denoiser_{\param}(\vdiff[\latent],\diffstep)).
\end{equation}
Dividing the above by $\vdiff[\diffscalingprod]$ we get an unbiased estimate of $\ex[\norm{\denoiser_{\param}(\vdiff[\latent], \diffstep)-\vdiff[\latent][0]}^2]$, \ie
\begin{equation}
    \notag
    \ex[\norm{\denoiser_{\param}(\vdiff[\latent], \diffstep)-\vdiff[\latent][0]}^2]
    =
    \ex[
    \norm{\denoiser_{\param}(\vdiff[\latent], \diffstep)-\obs}^2
    -\nArms\noisevar
    + 2\sqrt{\vdiff[\diffscalingprod]}\noisevar
    \diver_{\vdiff[\latent]}(\denoiser_{\param}(\vdiff[\latent],\diffstep))].
\end{equation}
On the right hand side inside expectation we recover \cref{eq:EM-loss} with $\mask=\ones$ and $\regpar=1$ by replacing $\vdiff[\latent]$ by $\vdiff[\latentcons]$ and the divergence by its Monte-Carlo approximation \citep{ramani2008monte}.

\section{Missing Experimental Details}
\label{apx:exp}
In this section, we provide missing experimental details mainly concerning the construction of the problem instances and the learning of priors.
All the simulations are run on an Amazon p3.2xlarge instance equipped with 8 NVIDIA Tesla V100 GPUs.

\subsection{Construction of Bandit Instances}
\label{apx:bandit-instances}

We provide below more details on how the bandit instances are constructed in our problems.
Besides the three problems described in \cref{sec:exp}, we consider an additional \texttt{Labeled Arms} problem that will be used for our ablation study.
Some illustrations of the constructed instances and the vectors generated by learned priors are provided in \cref{apx:visualization}. 
As in \texttt{Popular and Niche} and \texttt{2D Maze} problems, in the \texttt{Labeled Arms} problem we simply add Gaussian noise of standard deviation $0.1$ to the mean when sampling the reward.
For these three problems we thus only explain how the means are constructed.

\begin{enumerate}
    \item \texttt{Popular and Niche} ($\nArms=200$ arms).
    The arms are split into $40$ groups of equal size.
    $20$ of these groups represent the `popular' items while the other $20$ represent the `niche' items.
    For each bandit task, we first construct a vector $\bar{\mu}$ whose coordinates' values default to $0$.
    However, we randomly choose $1$ to $3$ groups of niche items and set the value of each of these items to $1$ with probability $0.7$ (independently across the selected items).
    Similarly, we randomly choose $15$ to $17$ groups of popular items and set their values to $0.8$.
    Then, to construct the mean reward vector $\mu$, we perturb the values of $\bar{\mu}$ by independent Gaussian noises with standard deviation of $0.1$.
    After that, we clip the values of the popular items to make them smaller than $0.95$ and clip the entire vector to the range $[0,1]$.
    \item \texttt{iPinYou bidding} ($\nArms=300$ arms).
    The set of tasks is constructed with the help of the iPinYou data set \citep{liao2014ipinyou}.
    This data set contains logs of ad biddings, impressions, clicks, and final conversions, and is separated into three different seasons.
    We only use the second season that contains the ads from $5$ advertisers (as we are not able to find the data for the first and the third season).
    To form the tasks, we further group the bids according to the associated ad slots.
    By keeping only those ad slots with at least $1000$ bids, we obtain a data set of $1352$ ad slots.
    Then, the empirical distribution of the paying price (\ie the highest bid from competitors) of each ad slot is used to computed the success rate of every potential bid $b\in\{0,\ldots,299\}$ set by the learner.
    The reward is either $300-b$ when the learner wins the auction or $0$ otherwise. Finally, we divide everything by the largest reward that the learner can ever get in all the tasks to scale the rewards to range $[0, 1]$.
    \item \texttt{2D Maze} ($\nArms=180$ base arms).
    For this problem, we first use the code of the github repository \texttt{MattChanTK/gym-maze}\footnote{https://github.com/MattChanTK/gym-maze} to generate random 2D mazes of size $19\times19$.
    Then, each bandit task can be derived from a generated 2D maze by associating the maze to a weighted $10\times10$ grid graph.
    As demonstrated by \cref{fig:2dmaze}, each case corresponds to either a node or an edge of the grid graph.
    Then, the weight (mean reward) of an edge (base arms) is either $-1$ or $-0.01$ depending on either there is a wall (in black color) or not (in white color) on the corresponding case.
    An optimal arm in this problem would be a path that goes from the source to the destination without bumping into any walls in the corresponding maze.
    \item \texttt{Labeled Arms} ($\nArms=500$ arms).
    This problem is again inspired by applications in recommendation systems.
    We are provided here a set of $50$ labels $\labelset=\intinterval{1}{50}$.
    Each arm is associated to a subset $\va[\labelset]$ of these labels with size $\card(\va[\labelset])=7$.
    To sample a new bandit task $\task$, we randomly draw a set $\vtask[\labelset]\subseteq\labelset$ again with size $7$.
    Then for each arm $\arm$, we set $\va[\bar{\meanreward}]=1-1/4^{\card(\va[\labelset]\intersect\vtask[\labelset])}$ so that the more the two sets intersect the higher the value.
    Finally, to obtain the mean rewards $\meanreward$, we perturb the coordinates of $\bar{\meanreward}$ by independent Gaussian noises of standard deviation $0.1$ and scale the resulting vector to the range $[0,1]$.
\end{enumerate}

\paragraph{Training, Calibration, and Test Sets\afterhead}
Training, calibration, and test set are constructed for each of the considered problem. Their size are fixed at $5000$, $1000$, $100$ for the \texttt{Popular and Niche}, \texttt{2D Maze}, and \texttt{Labeled Arms} Problems, and at $1200$, $100$, and $52$ for the \texttt{iPinYou Bidding} problem.

\subsection{Diffusion Models\textendash\ Model Design}
In all our experiments (including the ones described in \cref{apx:ablation,apx:exp-add}), we set the diffusion steps of the diffusion models to $\nDiffsteps=100$ and adopt a linear variance schedule that varies from $1-\vdiff[\diffscaling][1]=10^{-4}$ to $1-\vdiff[\diffscaling][\nDiffsteps]=0.1$.
The remaining details are customized to each problem, taking into account the specificity of the underlying data distribution.


\begin{enumerate}[leftmargin=*]
    \item \texttt{Labeled Arms} and \texttt{Popular and Niche}.
    These two problems have the following two important features:
    \begin{enumerate*}[\upshape(\itshape i\upshape)]
        \item The expected means of the bandit instances do not exhibit any spatial correlations (see \cref{subfig:labeled-arms-data,subfig:popular-and-niche-data}).
        \item The values of the expected means are nearly binary.
    \end{enumerate*}
    
    The first point prevents us from using the standard U-Net architecture.
    Instead, we consider an architecture adapted from \citet{kong2020diffwave,rasul2021autoregressive}, with $5$ residual blocks and each block containing $6$ residual channels.\footnote{These numbers are rather arbitrary and do not seem to affect much our results.}
    Then, to account for the lack of spatial correlations, we add a fully connected layer at the beginning to map the input to a vector of size $128\times 6$, before reshaping these vectors into $6$ channels and feeding them to the convolutional layers.
    In a similar fashion, we also replace the last layer of the architecture by a fully connected layer that maps a vector of size $128\times 6$ to a vector of size $\nArms$.
    We find that these minimal modification already enable the model to perform well on these two problems.
    
    As for the latter point, we follow \citet{chen2022analog} and train the denoisers to predict the clean sample $\vdiff[\latent][0]$ as it is reported in the said paper that this leads to better performance when the data are binary.
    
    \item \texttt{iPinYou Bidding}.
    As shown in \cref{fig:ipinyou-data}, the pattern of this problem looks similar to that of natural images.
    We therefore adopt the standard U-Net architecture, with an adaption to the $1$-dimensional case as described by \citep{janner2022planning}.
    The model has three feature map resolutions (from $300$ to $75$) and the number of channels for each resolution is respectively $16$, $32$, and $64$. No attention layer is used.
    The denoiser is trained to predict noise as in \citet{ho2020denoising,song2019generative}.
    
    \item \texttt{2D Maze}
    As explained in \cref{apx:bandit-instances} and illustrated in \cref{fig:2dmaze}, the weighted grid graphs are themselves derived by the 2D mazes.
    We can accordingly establish a function that maps each $10\times10$ weighted grid graph to an image of size $19\times19$ and vice-versa---
    it suffices to match the value of each associated (edge, pixel) pair.
    For technical reason, we further pad the $19\times19$ images to size $20\times20$ by adding one line of $-1$ at the right and one row of $-1$ at the bottom (see \cref{fig:maze-data}).
    We then train diffusion models to learn the distribution of the resulting images.
    For this, we use a $2$-dimensional U-Net directly adapted from the ones used by \citet{ho2020denoising}.
    The model has three feature map resolutions (from $20\times20$ to $5\times5$) and the number of channels for each resolution is respectively $32$, $64$, and $128$. A self-attention block is used at every resolution.
    We again train the denoiser to predict the clean sample $\vdiff[\latent][0]$ as we have binary expected rewards here ($-0.01$ or $-1$).
\end{enumerate}

\subsection{Diffusion Models\textendash\ Training}

Through out our experiments, we use Adam optimizer with learning rate $5\times10^{-4}$ and exponential decay rates $\beta_1=0.9$ and $\beta_2=0.99$.
The batch size and the epsilon constant in SURE-based regularization are respectively fixed at $128$ and $\epsilon=10^{-5}$.
When the perfect data sets $\trainset$ and $\calset$ are provided, we simply train the diffusion models for $15000$ steps on the training set $\trainset$ and apply \cref{algo:var-calib} on the calibration set $\calset$ to calibrate the variances.
The training procedure is more complex when only imperfect data are available. We provide the details below.

\paragraph{Posterior Sampling\afterhead}
As explained in \cref{sec:training} and \cref{algo:training-imperfect}, to train from imperfect data we sample the entire chain of diffused samples $\vdiff[\latentcons][\intintervalalt{0}{\nDiffsteps}]$ from the posterior.
However, while \cref{algo:post-sampling} performs sampling with predicted noise $\vdupdate[\bar{\noise}]$ and as we will show in \cref{apx:exp-post-sampling}, this indeed leads to improved performance in a certain aspect, we observe that when used for training, it prevents the model from making further progress.
We believe this is because in so doing we are only reinforcing the current model with their own predictions.
Therefore, to make the method effective, in our experiments we slightly modify the posterior sampling algorithm that is used during training.
While we still construct samples $\vdiff[\latent][\intintervalalt{0}{\nDiffsteps}]$ following \cref{algo:post-sampling}, the samples $\vdiff[\latentcons][\intintervalalt{0}{\nDiffsteps}]$ used for the loss minimization phase are obtained by replacing $\vdupdate[\bar{\noise}]$ (line $9$) by $\vdupdate[\tilde{\noise}]$ sampled from $\gaussian(0,\Id_{\vdim})$ in the very last sampling step.
That is, from $\vdupdate[\latent]$ we sample both $\vdiff[\latent]$ for further iterations of the algorithm and $\vdiff[\latentcons]$ to be used for loss minimization.

\paragraph{Training Procedure Specification\afterhead}

When training and validation data are incomplete and noisy, we follow the training procedure described in \cref{algo:training-imperfect} with default values $\nWarmup=15000$ warm-up steps, $\nOuter=3$ repeats, and $\nInner=3000$ steps within each repeat (thus $24000$ steps in total).
Moreover, during the warm-up phase we impute the missing value with constant $0.5$ when constructing the diffused samples $\vdiff[\latentcons]$. 
As for the regularization parameter $\regpar$, we fix it at $0.1$ for the \texttt{Popular and Niche}, \texttt{2D Maze}, and \texttt{Labeled Arms} problems.

Nevertheless, training from imperfect data turns out to be difficult for the iPinYou Bidding problem.
We conjecture this is both because the training set is small and because we train the denoiser to predict noise here.
Two modifications are then brought to the above procedure to address the additional difficulty.
First, as SURE-based regularization can prevent the model from learning any pattern from data when information is scarce, we drop it for the warm-up phase and the first two repeats (\ie the first $21000$ steps). We then get a model that has learned the noisy distribution.
We then add back SURE-based regularization with $\regpar=0.25$ in the third repeat.
After the $24000$ steps, the model is good enough at reconstructing the corrupted data set, but the unconditionally generated samples suffer from severe mode collapse.
Provided that the reconstructed samples are already of good quality, we fix the latter issue simply by applying standard training on the reconstructed samples for another $3000$ steps (thus $27000$ training steps in total).

\begin{figure}[t]
    \centering
    \includegraphics[width=0.25\textwidth]{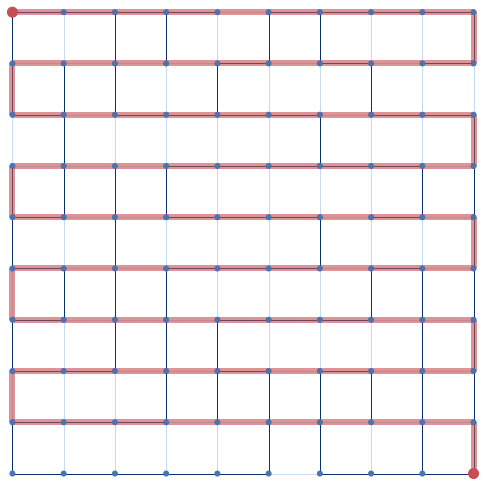}
    \hspace{1em}
    \includegraphics[width=0.25\textwidth]{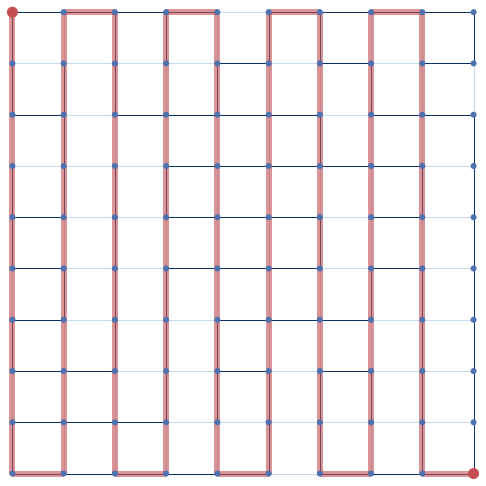}
    \hspace{1em}
    \includegraphics[width=0.25\textwidth]{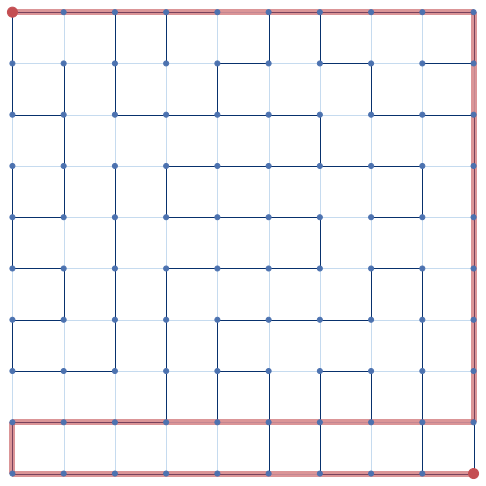}
    \caption{
    The three paths (super-arms) for UCB initialization in the \texttt{2D Maze} experiment.}
    \label{fig:maze-ucb-initialization}
\end{figure}
\subsection{Other Details}
\label{apx:exp-detail-others}

In this part we provide further details about the evaluation phase and the baselines.

\paragraph{Assumed Noise Level\afterhead}
All the bandit algorithms considered in our work take as input a hyperparameter $\est{\noisedev}$ that should roughly be in the order of the scale of the noise.
For the results presented in \cref{sec:exp}, we set $\est{\noisedev}=0.1$ for the \texttt{Popular and Niche} and \texttt{2D Maze} problems and $\est{\noisedev}=0.2$ for the \texttt{iPinYou Bidding} problem.
The former is exactly the ground truth standard deviation of the underlying noise distribution.
For the \texttt{iPinYou Bidding} problem the noise is however not Gaussian, and $\est{\noisedev}=0.2$ is approximately the third quartile of the empirical distribution of the expected rewards' standard deviations (computed across tasks and arms).
In \cref{apx:exp-noise-levels}, we present additional results for algorithms run with different assumed noise levels $\est{\noisedev}$.

\paragraph{UCB1\afterhead}

The most standard implementation of the UCB1 algorithm sets the upper confidence bound to
\begin{equation}
    \label{eq:UCBindex}
    \vta[\UCBindex]
    = \vta[\est{\meanreward}] + \est{\noisedev}\sqrt{\frac{2\log \run}{\vta[\pullcount]}}.
\end{equation}
Instead, in our experiments we use
$\vta[\UCBindex]=
\vta[\est{\meanreward}] + \est{\noisedev}/\sqrt{\vta[\pullcount]}$.
\cref{eq:UCBindex} is more conservative than our implementation, and we thus do not expect it to yield smaller regret within the time horizon of our experiments.

\paragraph{UCB1 Initialization\afterhead}
In contrary to Thompson sampling-based methods, UCB1 typically requires an initialization phase.
For vanilla multi-armed bandits (\texttt{Popular and Niche}, \texttt{iPinYou Bidding}, and \texttt{Labeled Arms}) this simply consists in pulling each arm once.
For combinatorial bandits we need to pull a set of super arms that covers all the base arms. In the \texttt{2D Maze} experiment we choose the three paths shown in \cref{fig:maze-ucb-initialization}.

\paragraph{Gaussian Prior with Imperfect Data\afterhead}
To fit a Gaussian on incomplete and noisy data, we proceed as follows:
First, we compute the mean of arm $\arm$ from those samples that have observation for $\arm$.
Next, in a similar fashion, the covariance between any two arms are only computed with samples that have observations for both arms. Let the resulting matrix be $\est{\covmat}$.
Since the covariance matrix of the sum of two independently distributed random vectors (in our case $\vdiff[\rvlat][0]$ and noise) is the sum of the covariance matrices of the two random vectors, 
we further compute $\alt{\est{\covmat}}=\est{\covmat}-\noisedevdata^2\Id_{\vdim}$ as an estimate of the covariance matrix of $\vdiff[\rvlat][0]$.
Finally, as $\alt{\est{\covmat}}$ is not necessarily positive semi-definite and can even have negative diagonal entries, for TS with diagonal covariance matrix we threshold the estimated variances to be at least $0$ and for TS with full covariance matrix we threshold the eigenvalues of the estimated covariance matrix $\alt{\est{\covmat}}$ to be at least $10^{-4}$.\footnote{Our implementation requires the prior covariance matrix to be positive definite.}

\paragraph{Arm Selection in \texttt{2D Maze} Problem\afterhead}
All the algorithms we use in the \texttt{2D Maze} problem first compute/sample some values for each base arm (edge) and then select the super arm (path) that maximizes the sum of its base arms' values (for DiffTS we first map the sampled $20\times20$ image back to a weighted graph and the remaining is the same).
Concretely, we implement this via Dijkstra's shortest path algorithm applied to the weighted graphs with weights defined as the opposite of the computed/sampled values.
However, these weights are not guaranteed to be non-negative, and we thus clip all the negative values to $0$ before computing the shortest path.

\section{Ablation Study}
\label{apx:ablation}
In this appendix, we perform ablation studies 
on the \texttt{Popular and Niche} and \texttt{Labeled Arms} problems to explore the impacts of various design choices of our algorithms.

\begin{figure}[!p]
    \centering
    \begin{subfigure}{0.24\textwidth}
    \includegraphics[width=\linewidth]{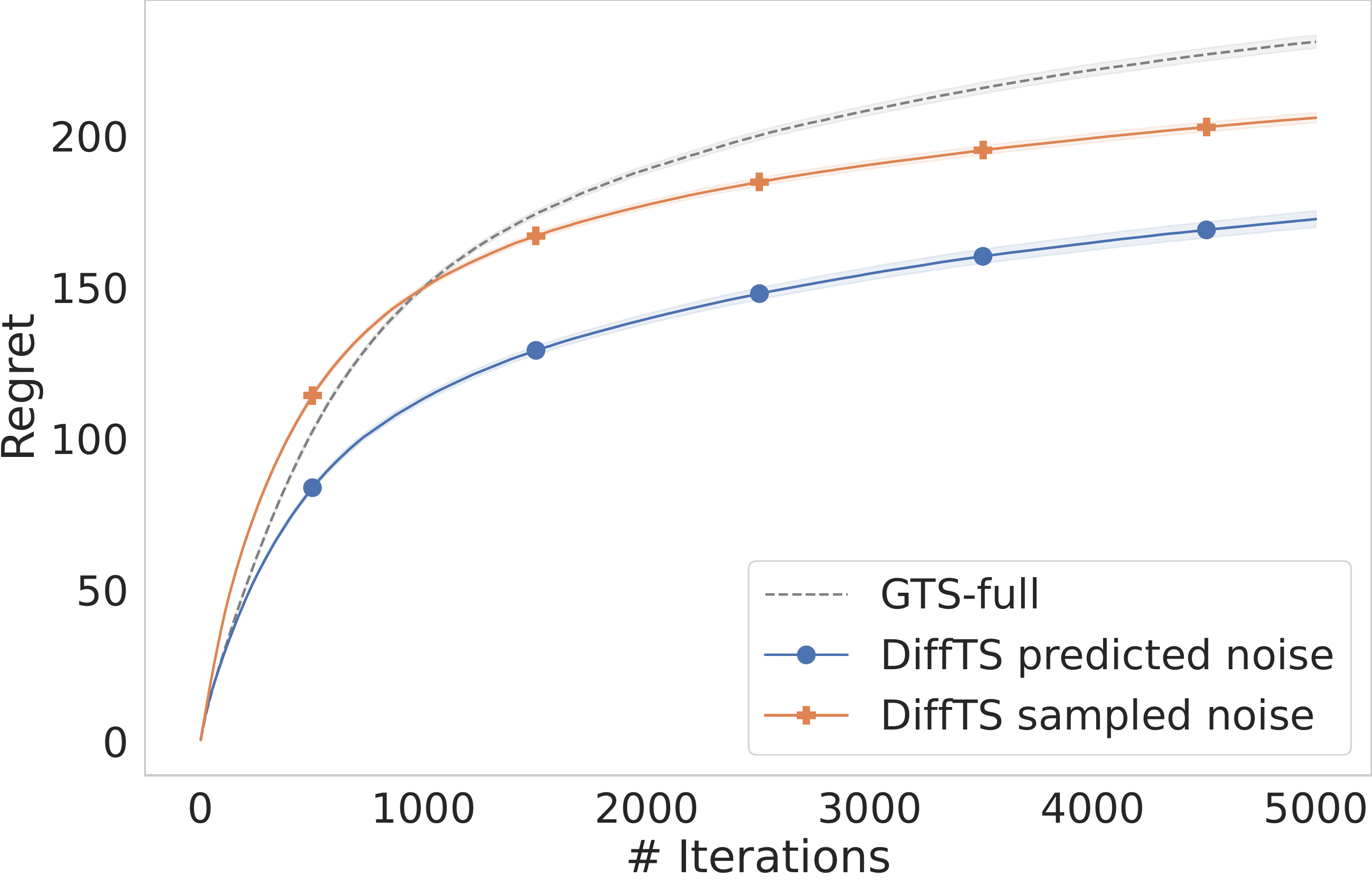}
    \caption*{\texttt{Labeled Arms} $\est{\noisedev}=0.1$}
    \end{subfigure}
    \begin{subfigure}{0.24\textwidth}
    \includegraphics[width=\linewidth]{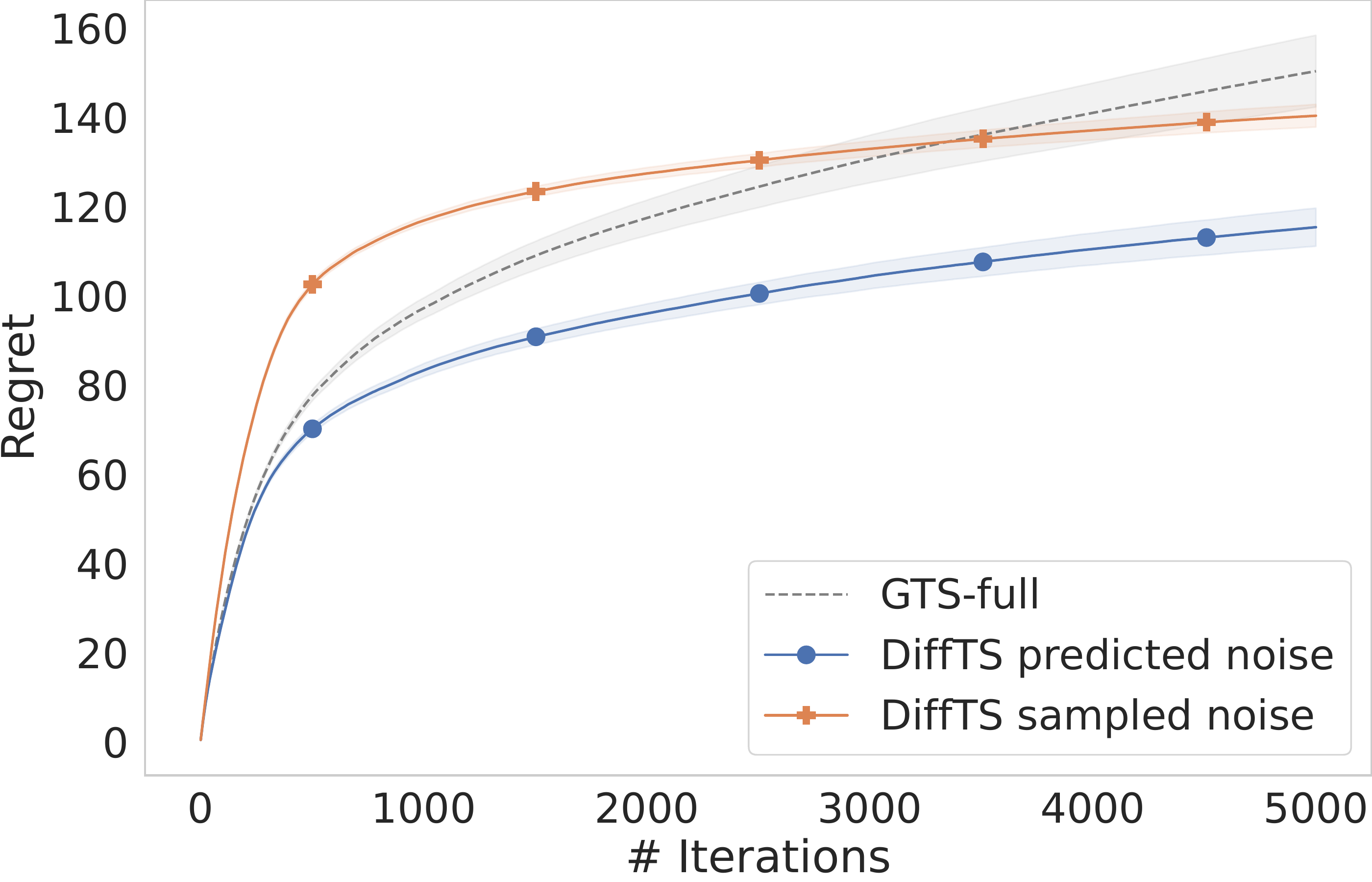}
    \caption*{\texttt{Labeled Arms} $\est{\noisedev}=0.05$}
    \end{subfigure}
    \begin{subfigure}{0.24\textwidth}
    \includegraphics[width=\linewidth]{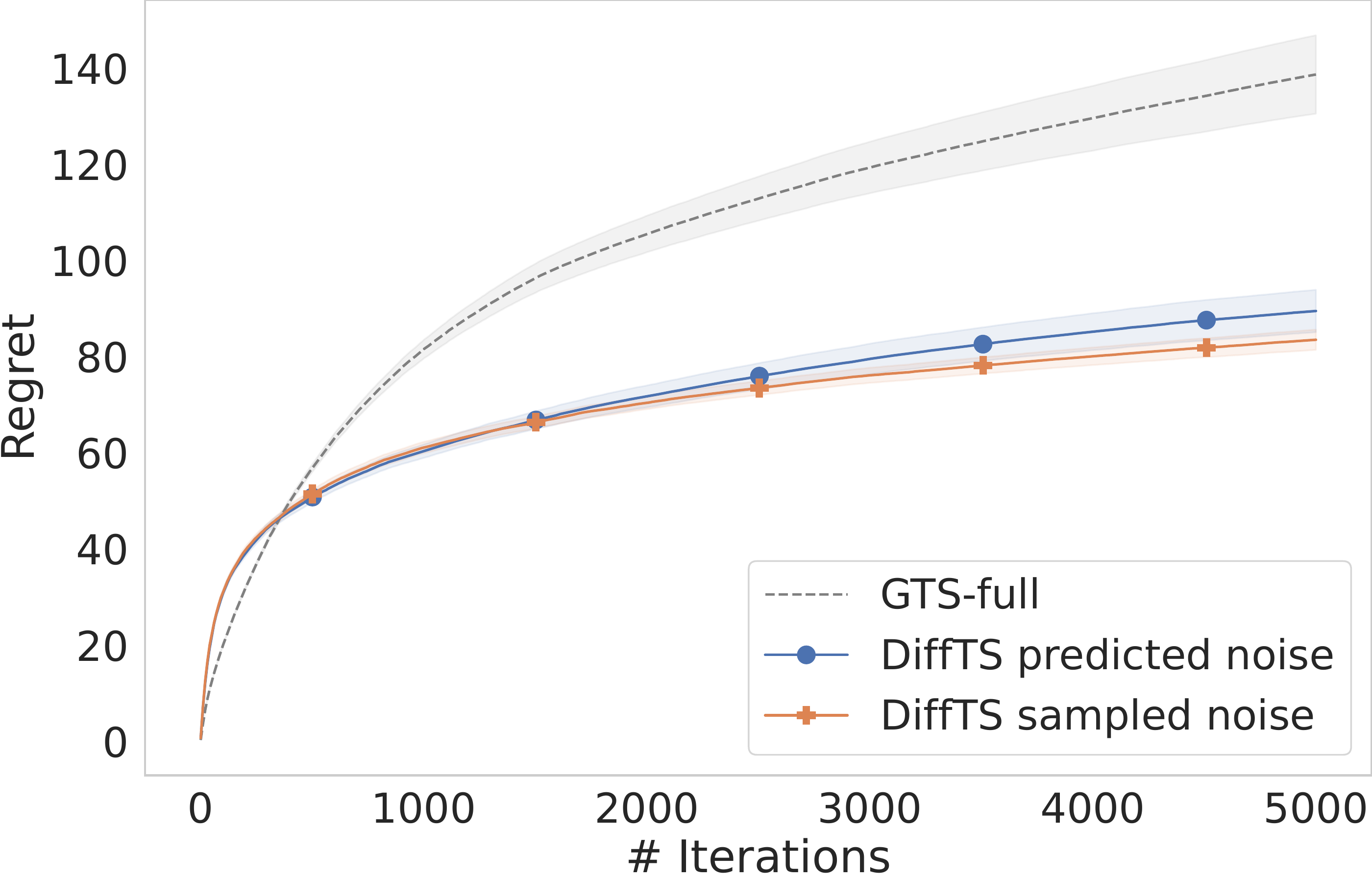}
    \caption*{\texttt{Popular\,\&\,Niche} $\est{\noisedev}=0.1$}
    \end{subfigure}
    \begin{subfigure}{0.24\textwidth}
    \includegraphics[width=\linewidth]{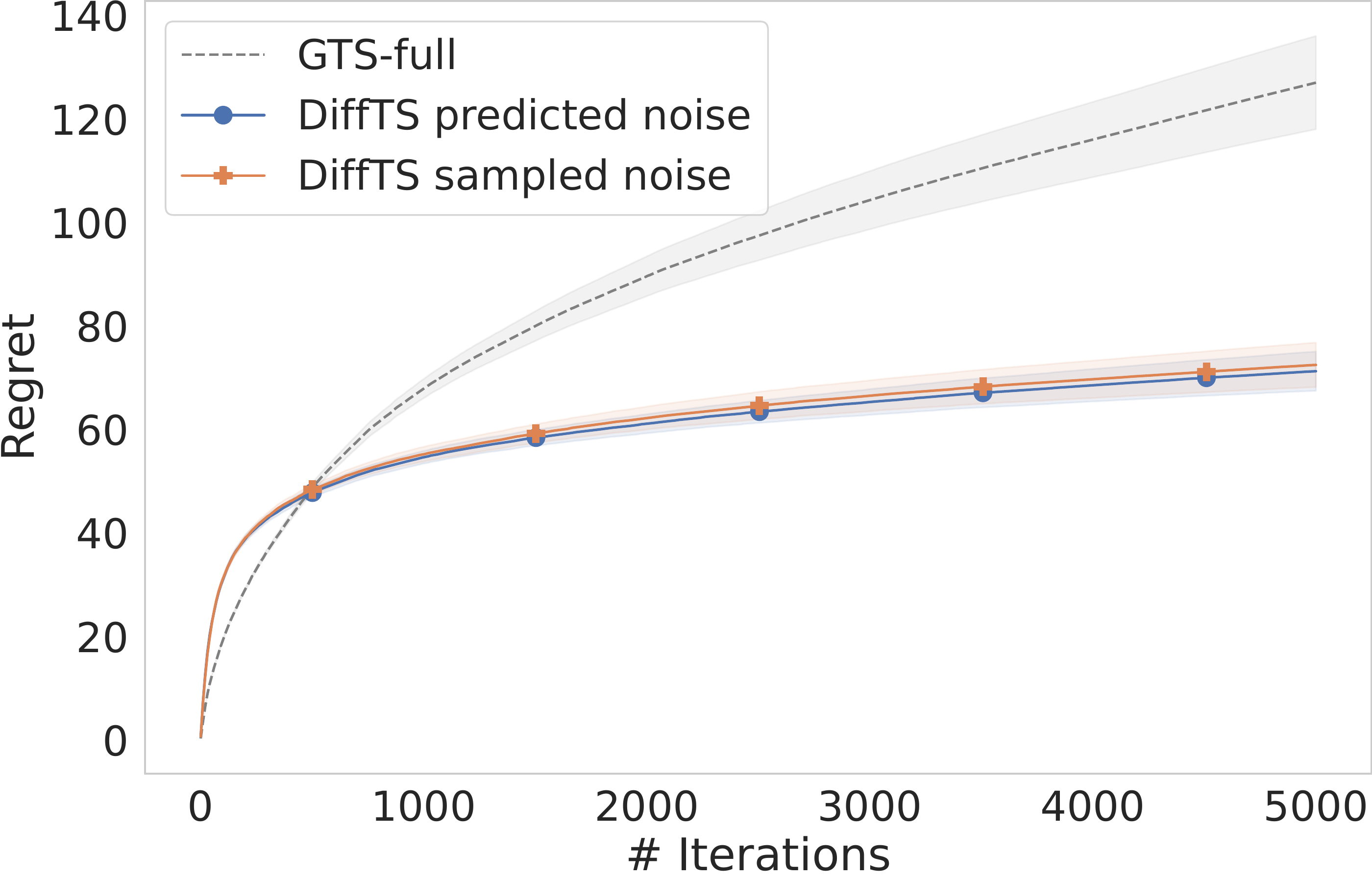}
    \caption*{\texttt{Popular\,\&\,Niche} $\est{\noisedev}=0.05$}
    \end{subfigure}
    \caption{Regret comparison for DiffTS with predicted or independently sampled noise in the construction of diffused observation $\vdiff[\tilde{\obs}]$.}
    \label{fig:ablation-sampling}
\end{figure}
\begin{figure}[!p]
    \centering
    \begin{subfigure}{0.24\textwidth}
    \includegraphics[width=\linewidth]{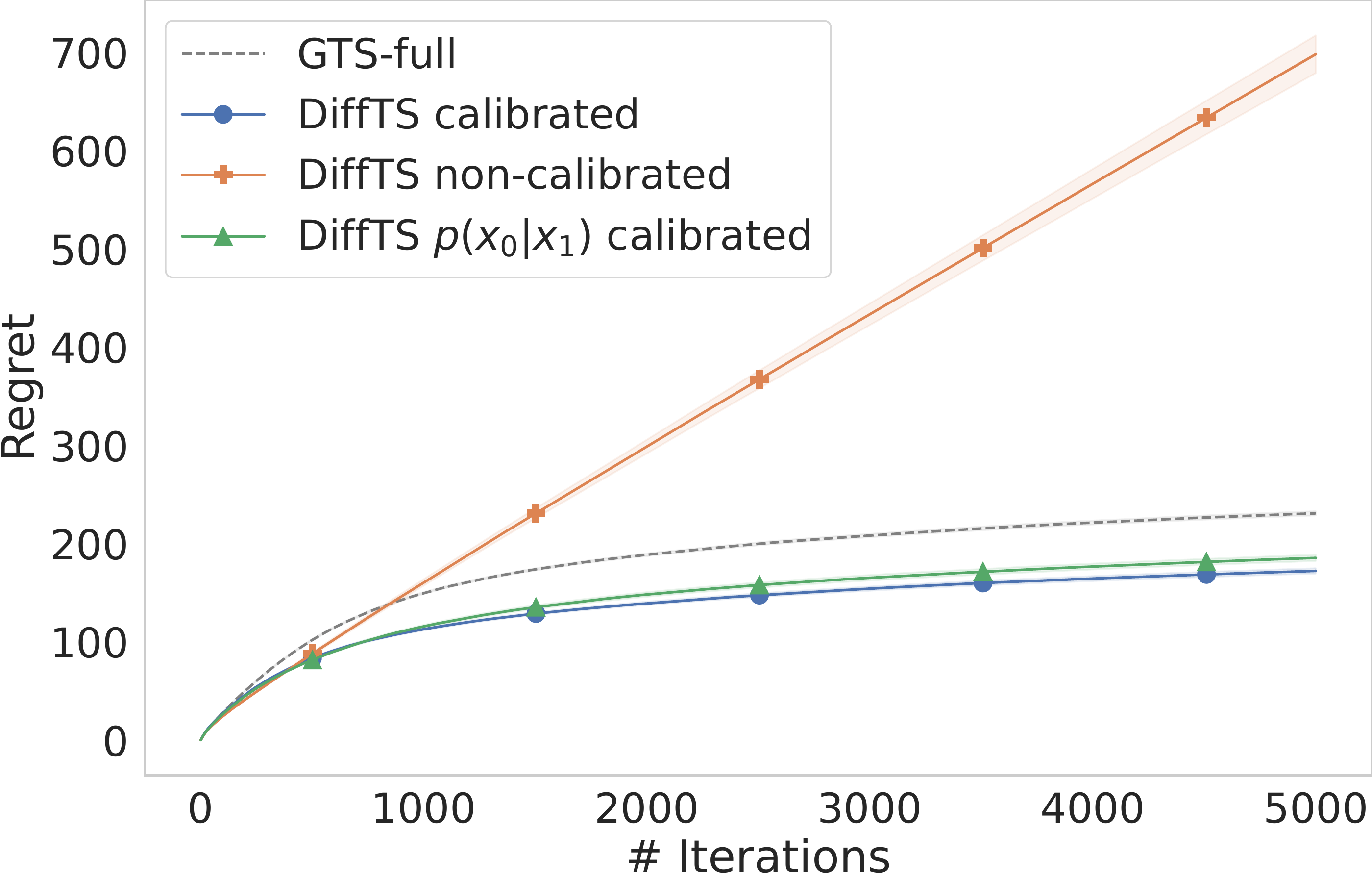}
    \caption*{\texttt{Labeled Arms} $\est{\noisedev}=0.1$}
    \end{subfigure}
    \begin{subfigure}{0.24\textwidth}
    \includegraphics[width=\linewidth]{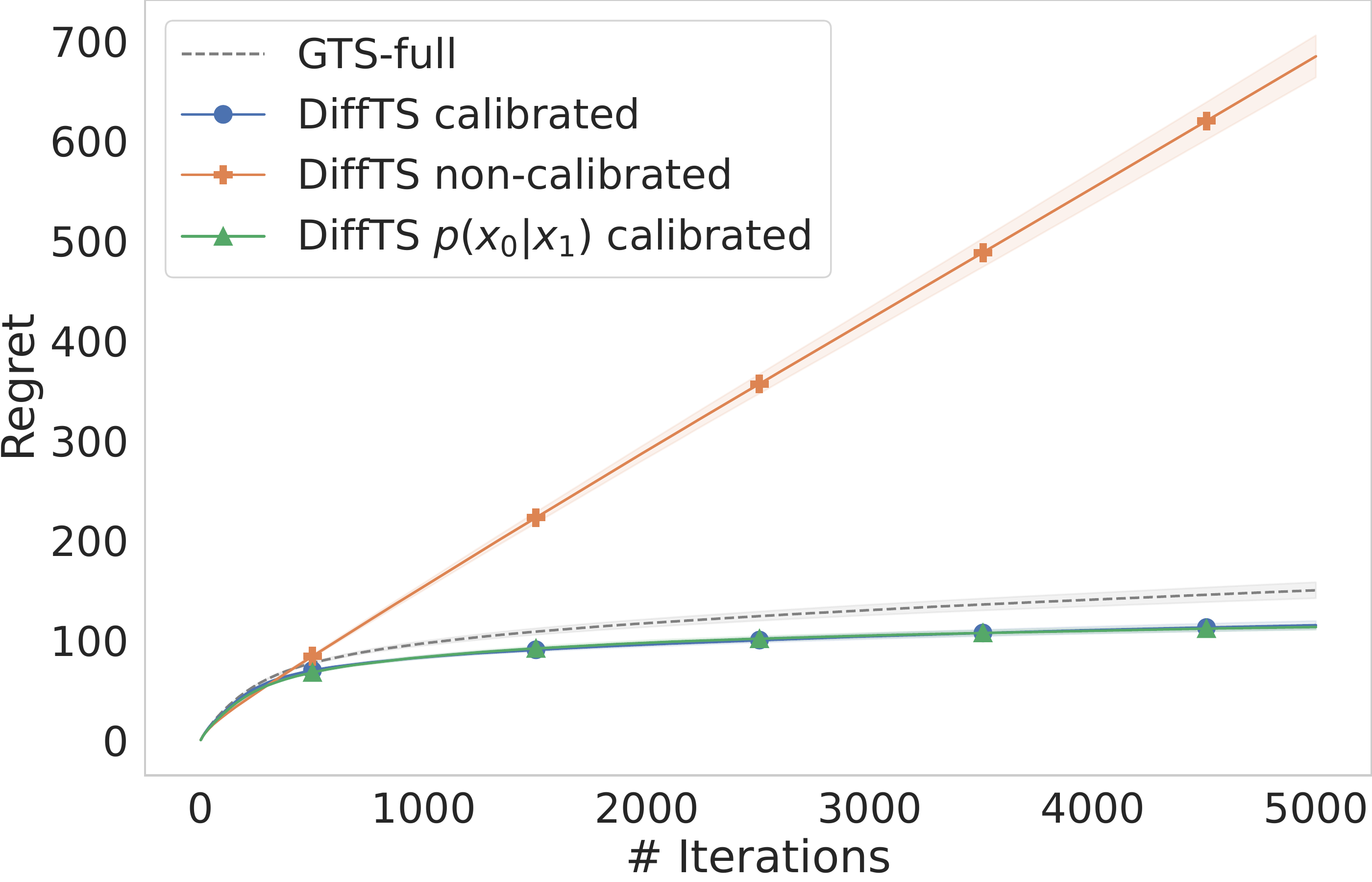}
    \caption*{\texttt{Labeled Arms} $\est{\noisedev}=0.05$}
    \end{subfigure}
    \begin{subfigure}{0.24\textwidth}
    \includegraphics[width=\linewidth]{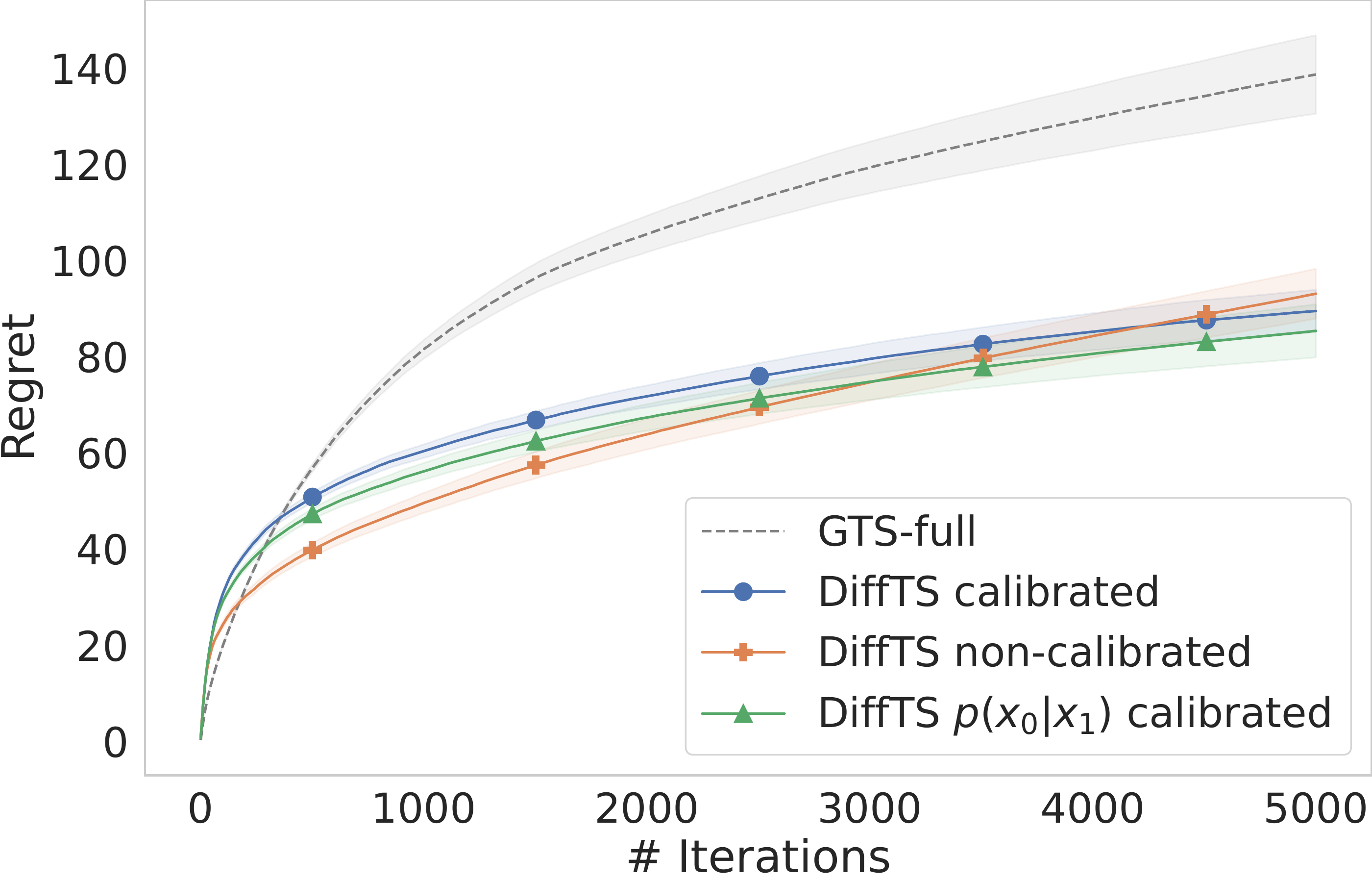}
    \caption*{\texttt{Popular\,\&\,Niche} $\est{\noisedev}=0.1$}
    \end{subfigure}
    \begin{subfigure}{0.24\textwidth}
    \includegraphics[width=\linewidth]{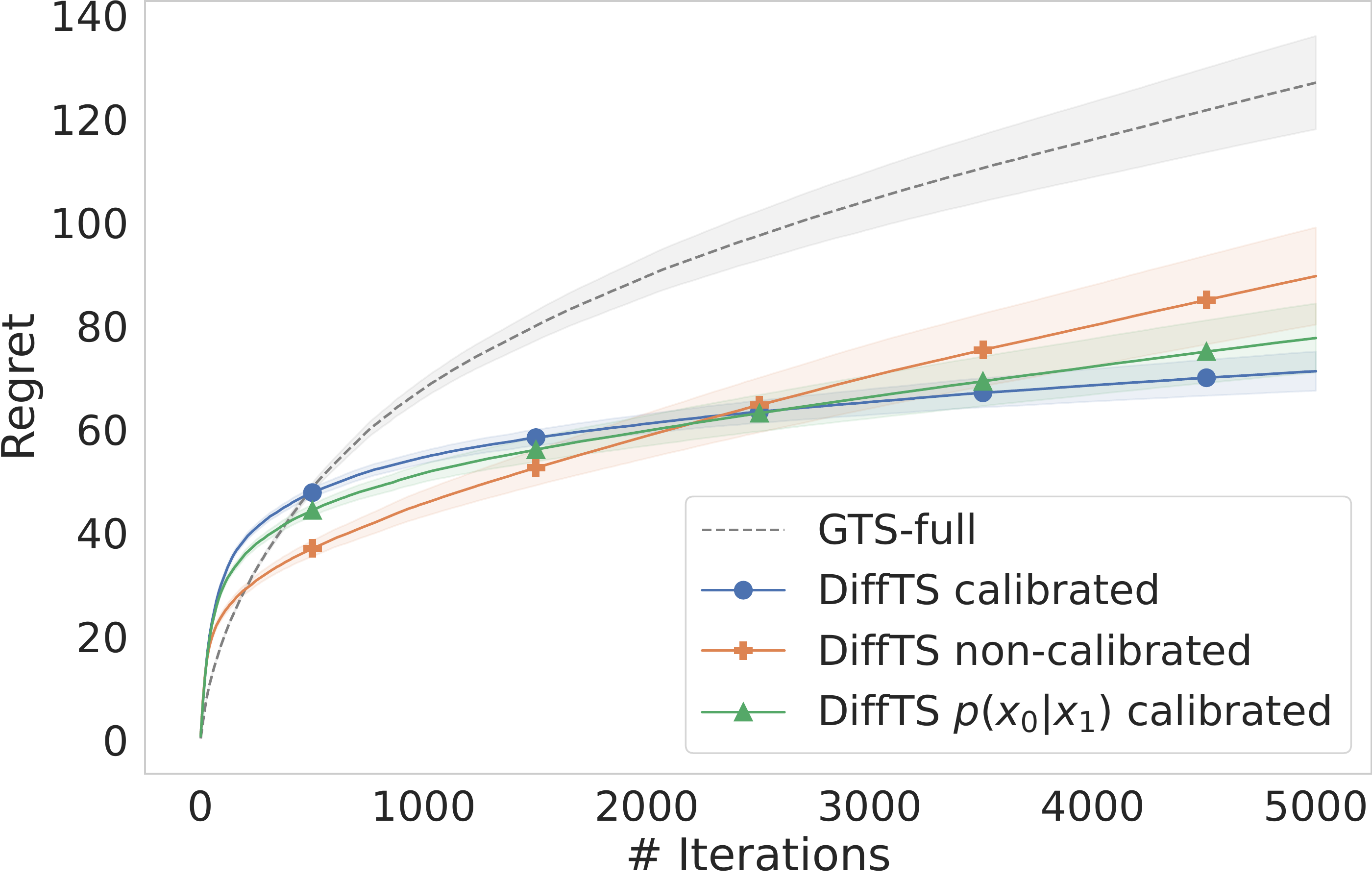}
    \caption*{\texttt{Popular\,\&\,Niche} $\est{\noisedev}=0.05$}
    \end{subfigure}
    \caption{Regret comparison for DiffTS with three different types of reverse variance schedules.}
    \label{fig:ablation-variance}
\end{figure}
\begin{figure}[!p]
    \centering
    \begin{subfigure}{0.24\textwidth}
    \includegraphics[width=\linewidth]{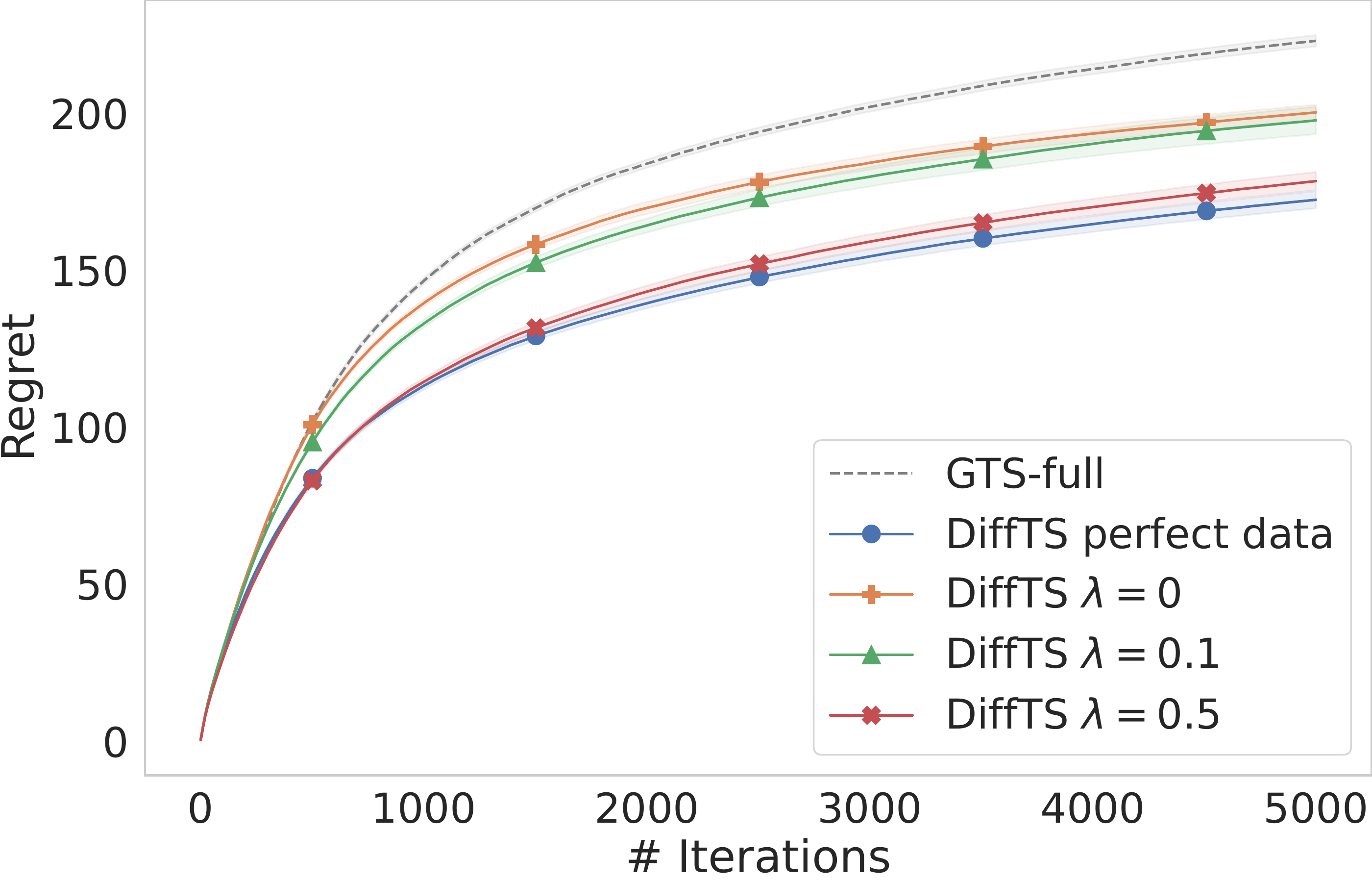}
    \caption*{\texttt{Labeled Arms} $\est{\noisedev}=0.1$}
    \end{subfigure}
    \begin{subfigure}{0.24\textwidth}
    \includegraphics[width=\linewidth]{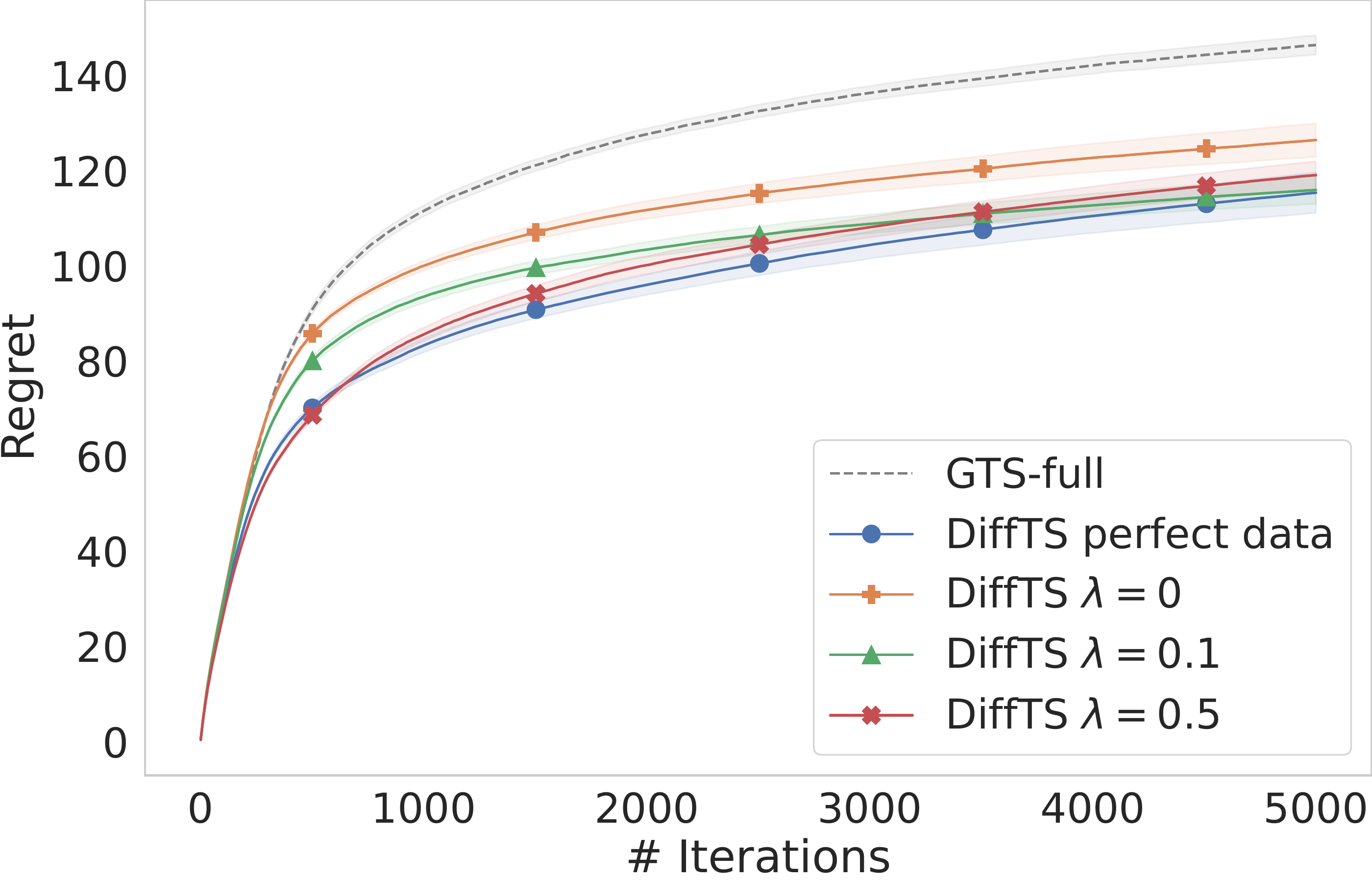}
    \caption*{\texttt{Labeled Arms} $\est{\noisedev}=0.05$}
    \end{subfigure}
    \begin{subfigure}{0.24\textwidth}
    \includegraphics[width=\linewidth]{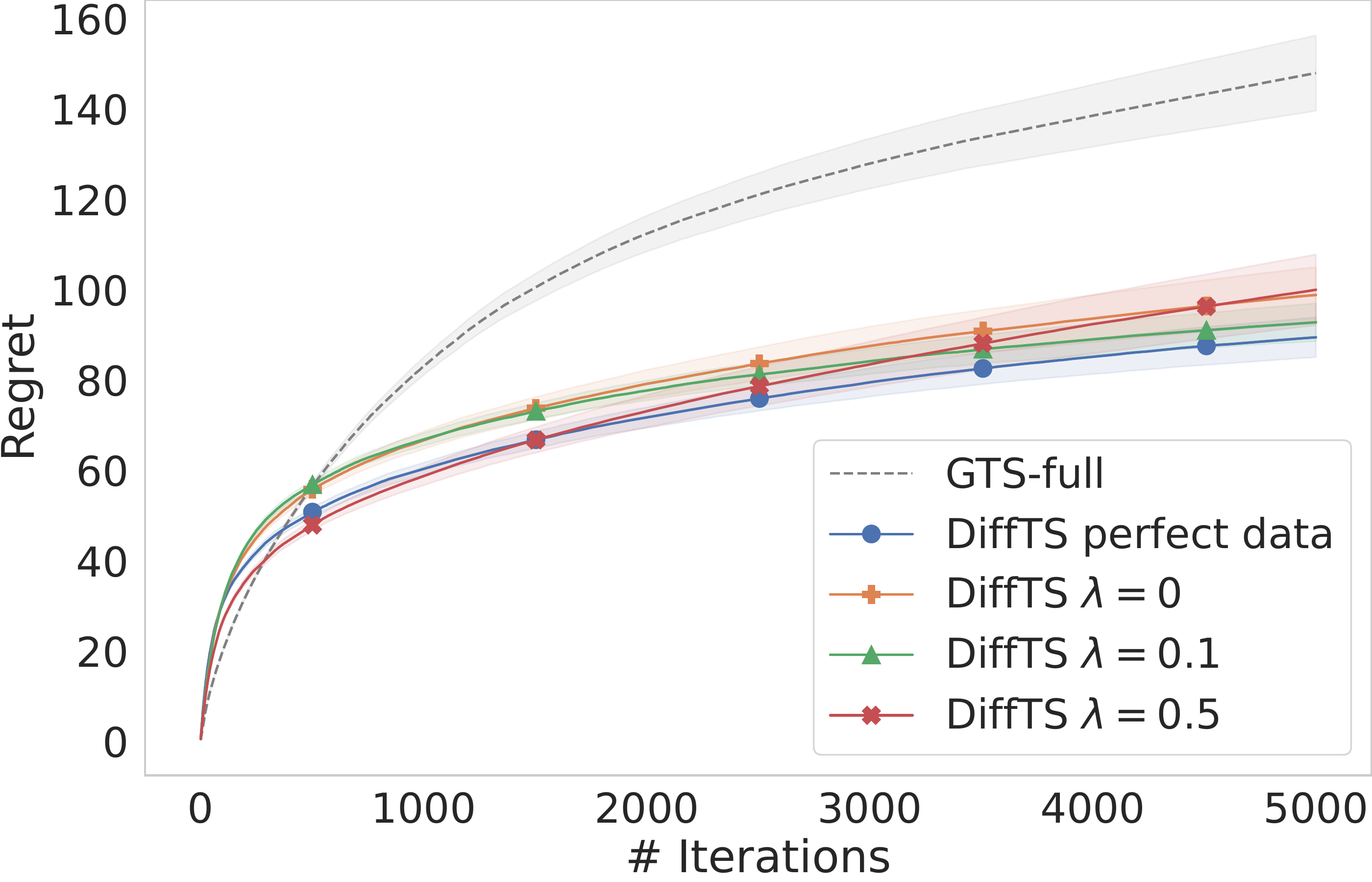}
    \caption*{\texttt{Popular\,\&\,Niche} $\est{\noisedev}=0.1$}
    \end{subfigure}
    \begin{subfigure}{0.24\textwidth}
    \includegraphics[width=\linewidth]{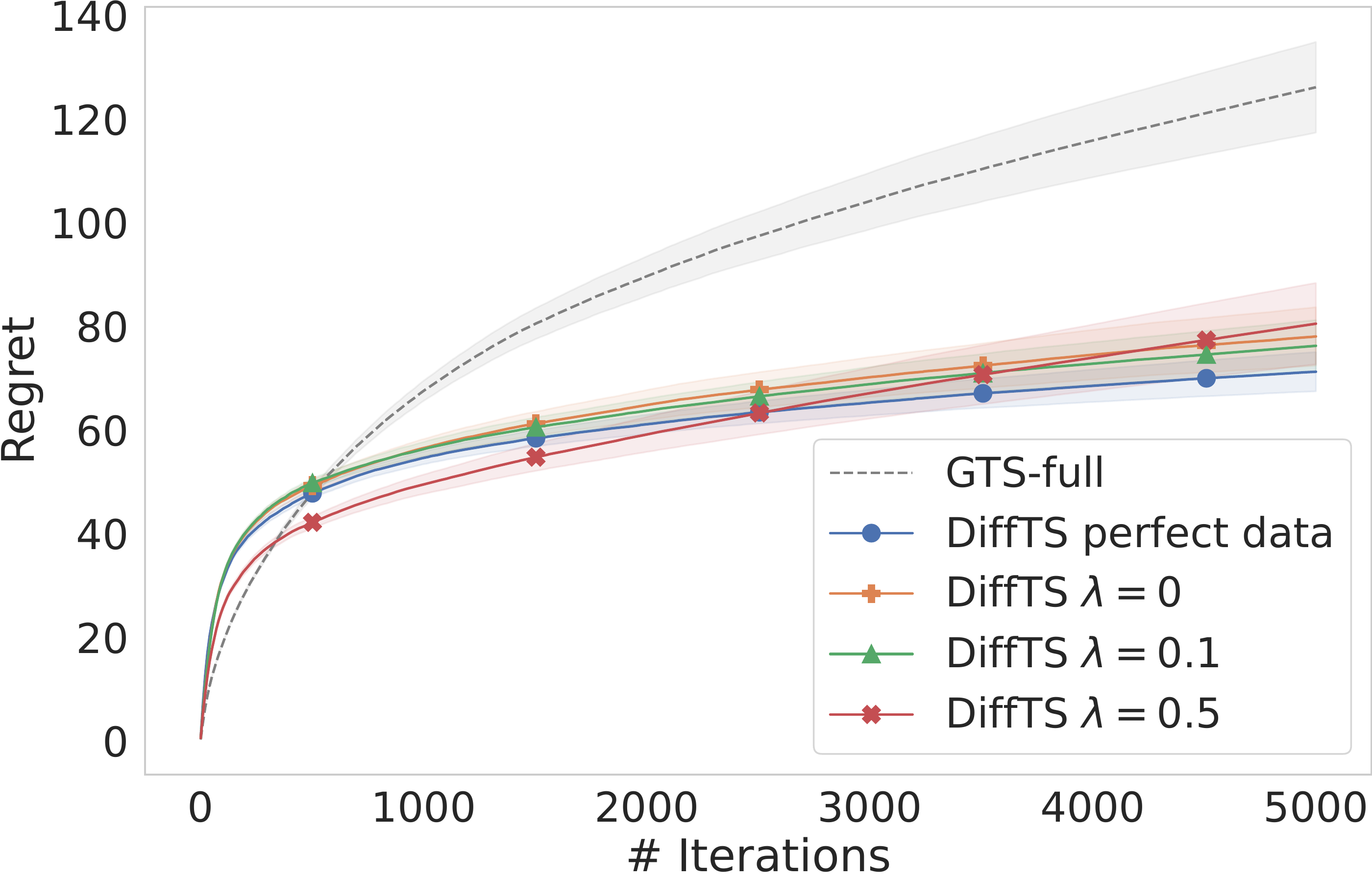}
    \caption*{\texttt{Popular\,\&\,Niche} $\est{\noisedev}=0.05$}
    \end{subfigure}
    \caption{Regret comparison for DiffTS trained on noisy data with different regularization weight $\regpar$.}
    \label{fig:ablation-noisy}
\end{figure}
\begin{figure}[!p]
    \centering
    \begin{subfigure}{0.24\textwidth}
    \includegraphics[width=\linewidth]{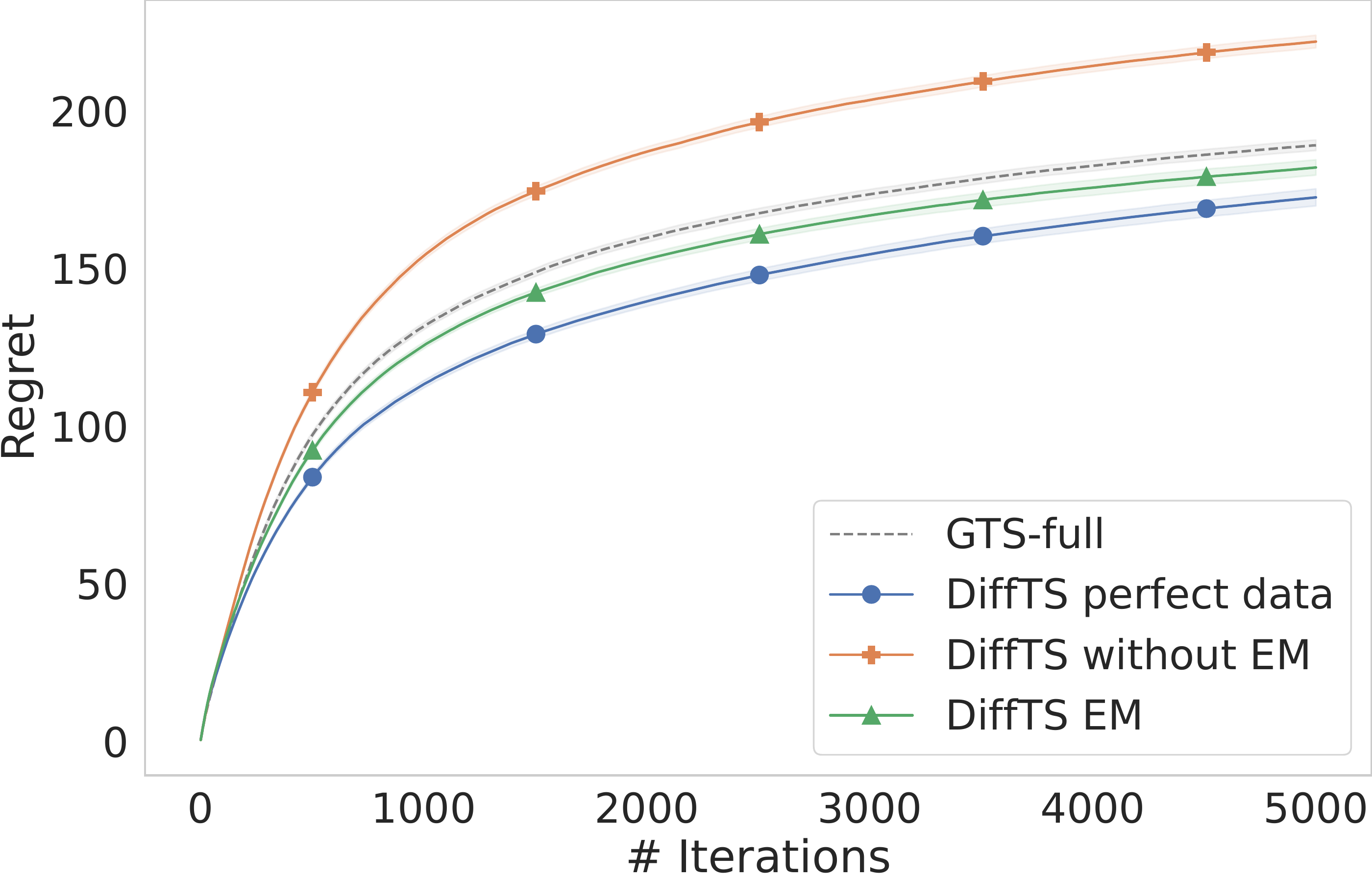}
    \caption*{\texttt{Labeled Arms} $\est{\noisedev}=0.1$}
    \end{subfigure}
    \begin{subfigure}{0.24\textwidth}
    \includegraphics[width=\linewidth]{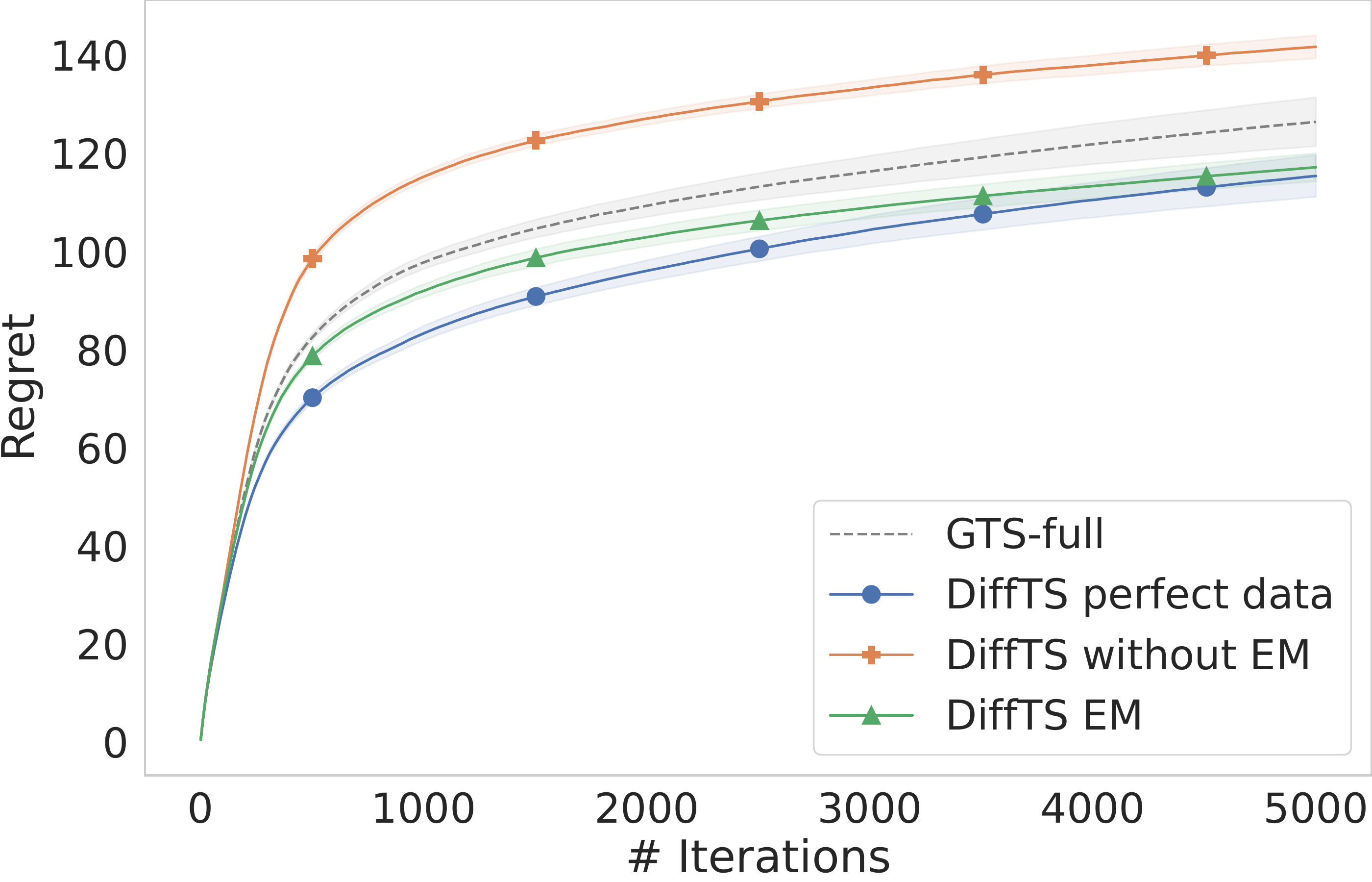}
    \caption*{\texttt{Labeled Arms} $\est{\noisedev}=0.05$}
    \end{subfigure}
    \begin{subfigure}{0.24\textwidth}
    \includegraphics[width=\linewidth]{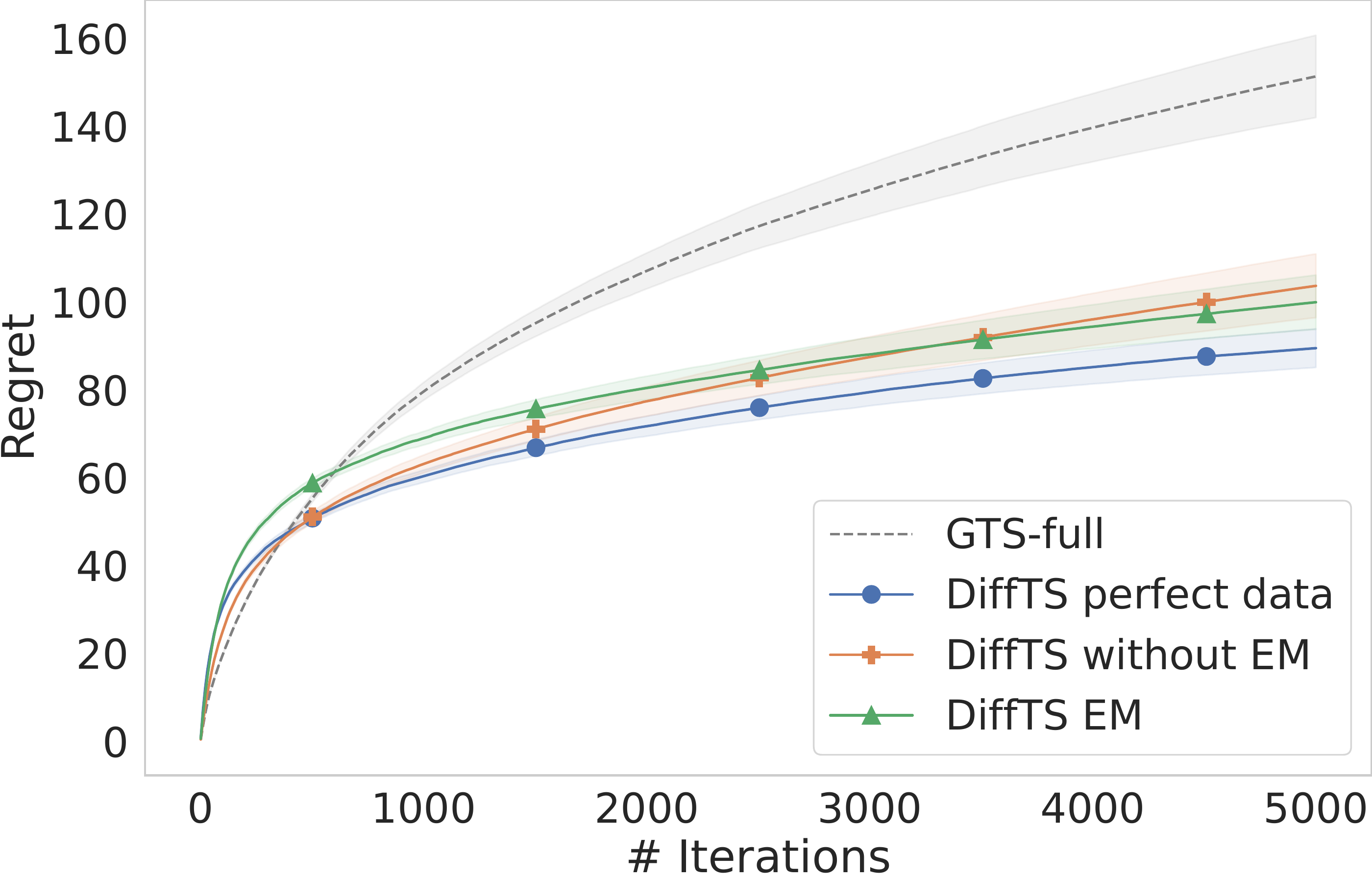}
    \caption*{\texttt{Popular\,\&\,Niche} $\est{\noisedev}=0.1$}
    \end{subfigure}
    \begin{subfigure}{0.24\textwidth}
    \includegraphics[width=\linewidth]{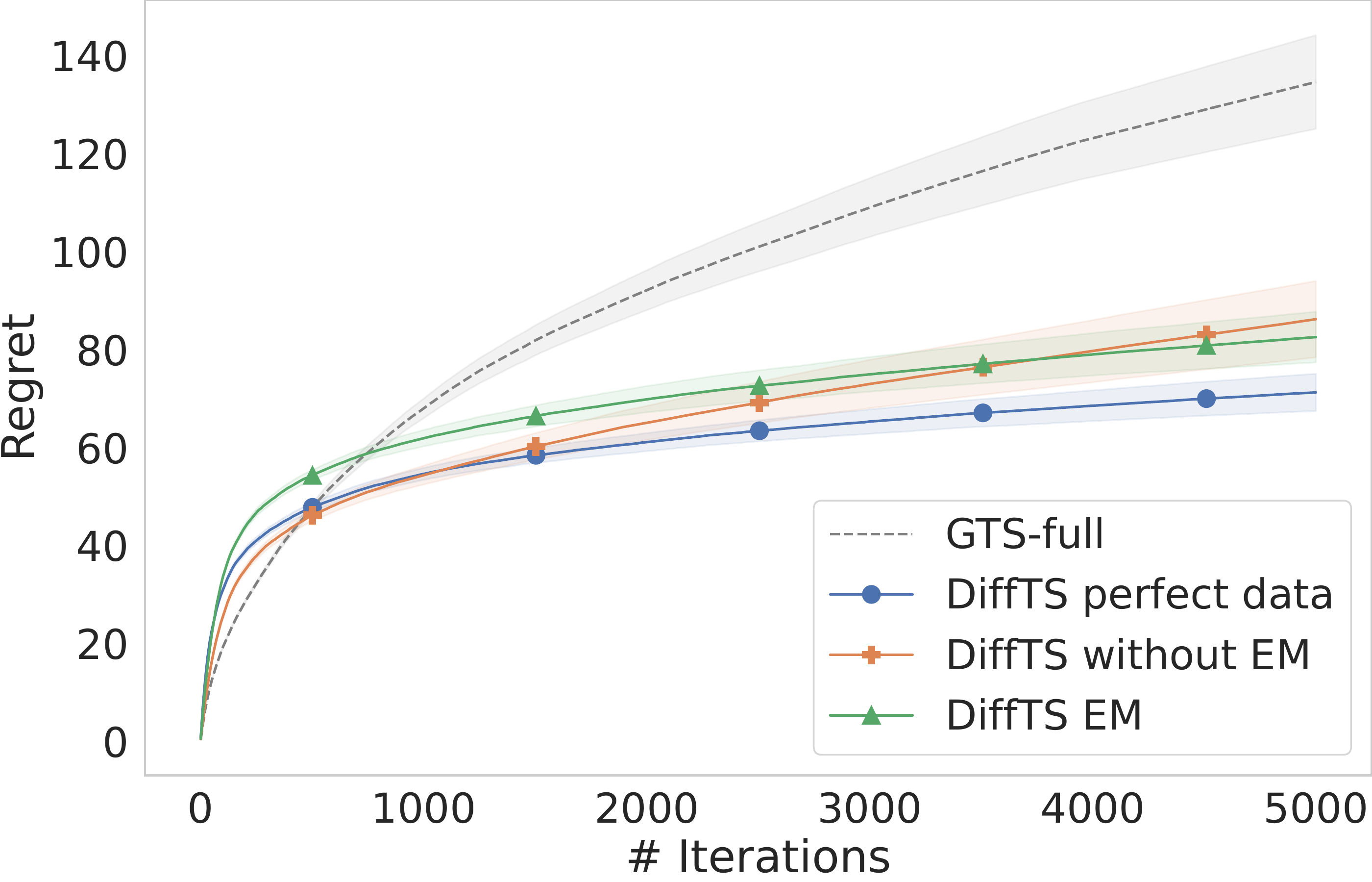}
    \caption*{\texttt{Popular\,\&\,Niche} $\est{\noisedev}=0.05$}
    \end{subfigure}
    \caption{Regret comparison for DiffTS trained on incomplete data with or without EM.}
    \label{fig:ablation-incomplete}
\end{figure}
\begin{figure}[!p]
    \centering
    \begin{subfigure}{0.24\textwidth}
    \includegraphics[width=\linewidth]{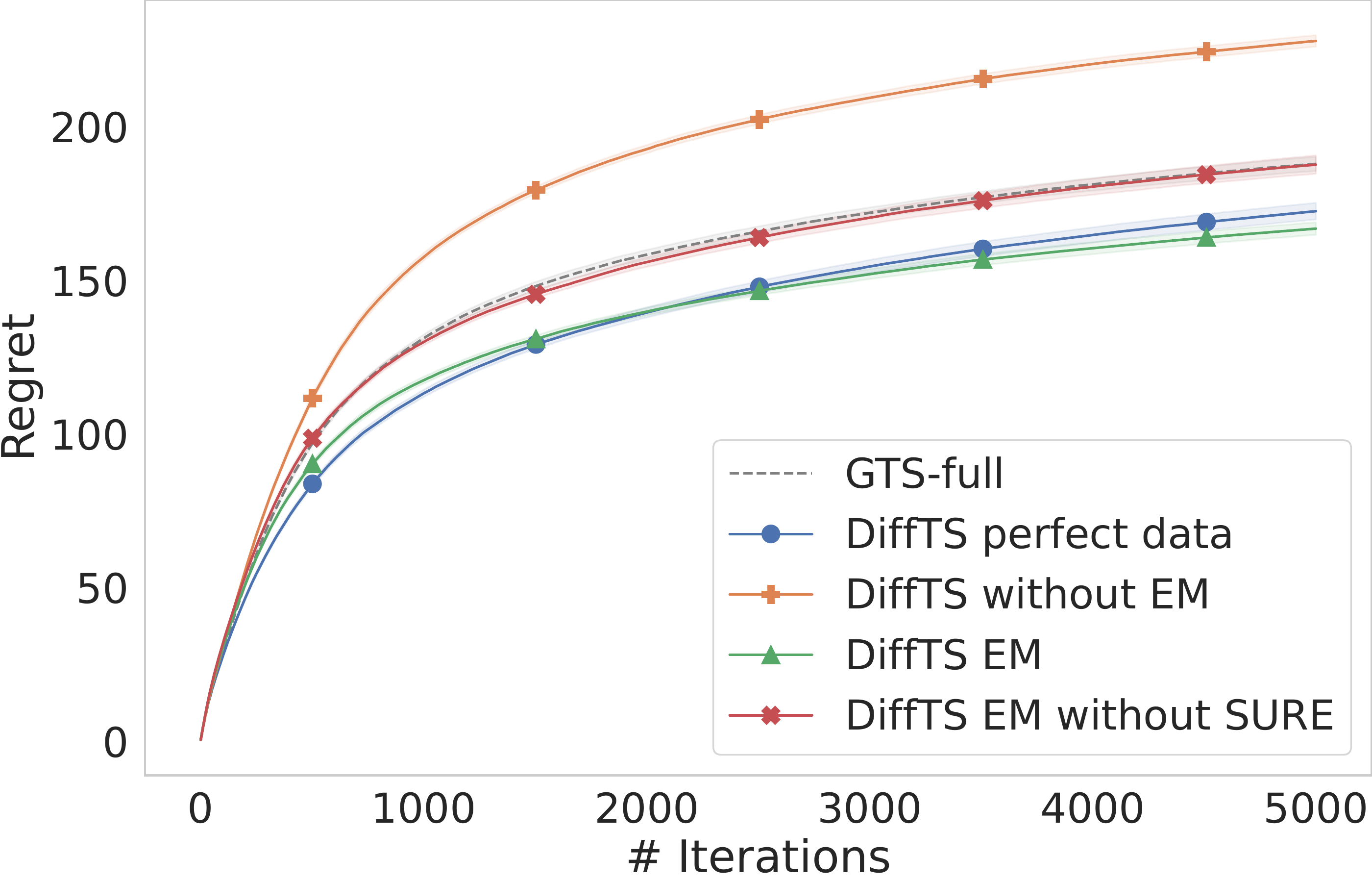}
    \caption*{\texttt{Labeled Arms} $\est{\noisedev}=0.1$}
    \end{subfigure}
    \begin{subfigure}{0.24\textwidth}
    \includegraphics[width=\linewidth]{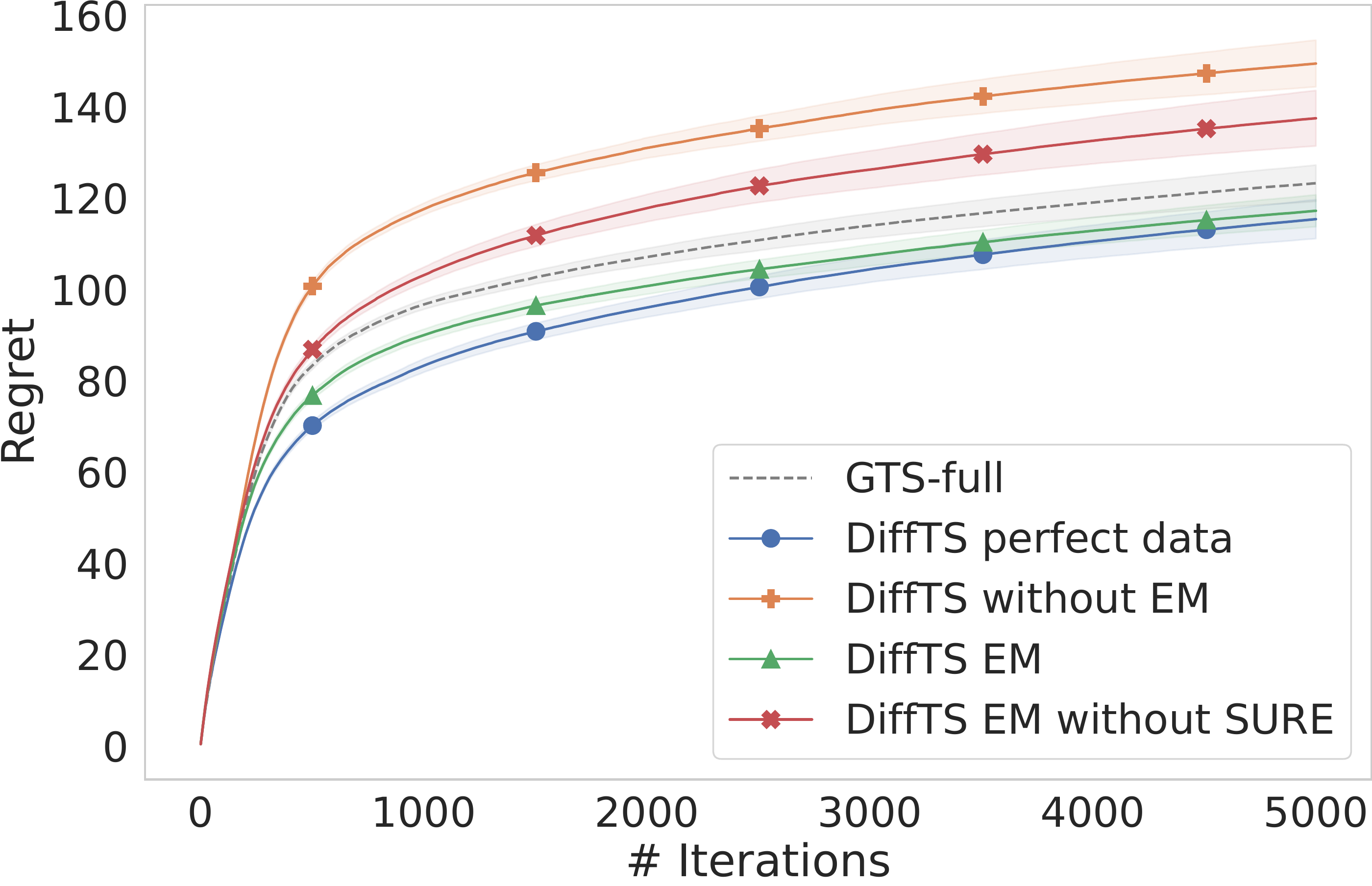}
    \caption*{\texttt{Labeled Arms} $\est{\noisedev}=0.05$}
    \end{subfigure}
    \begin{subfigure}{0.24\textwidth}
    \includegraphics[width=\linewidth]{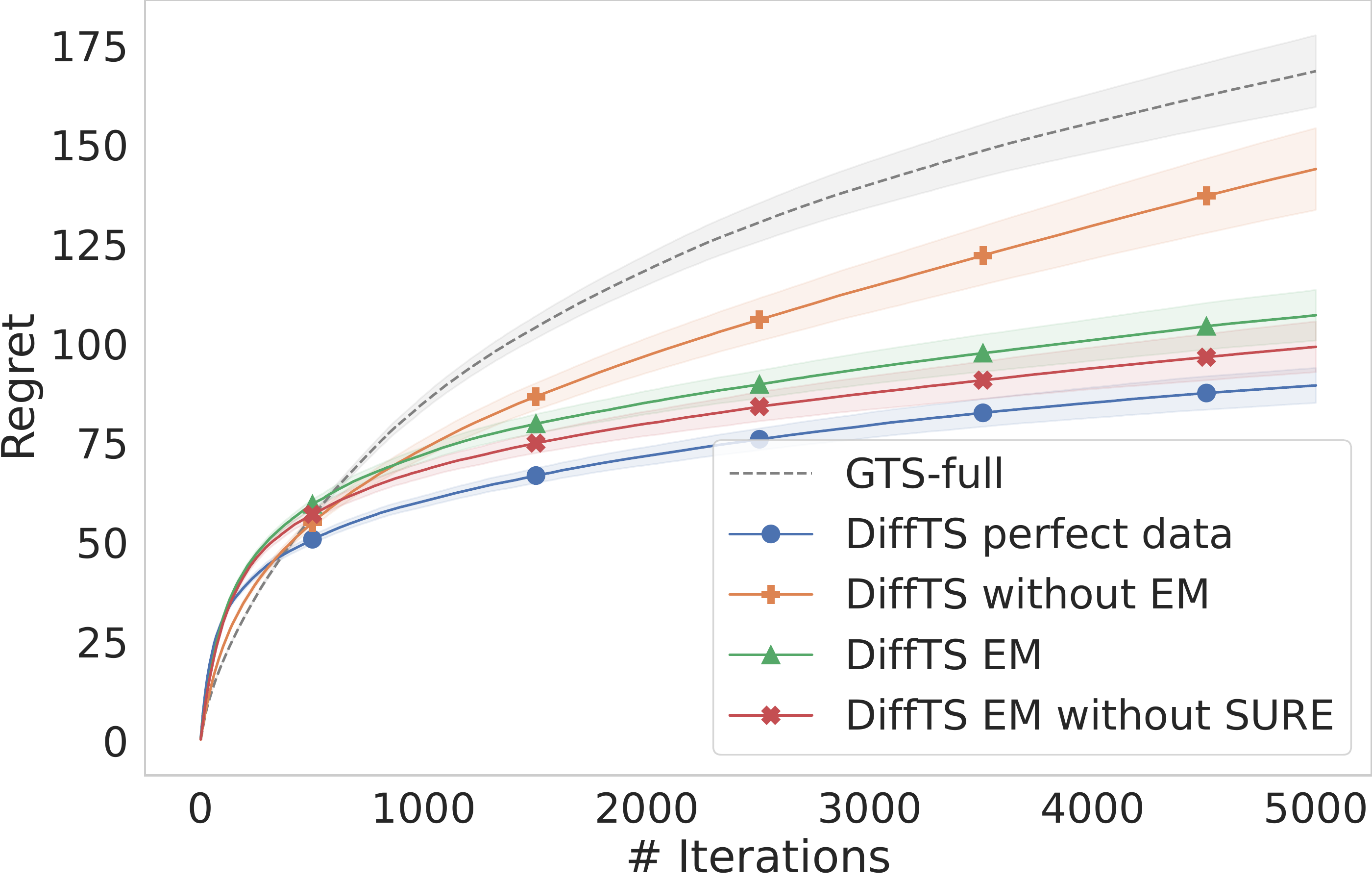}
    \caption*{\texttt{Popular\,\&\,Niche} $\est{\noisedev}=0.1$}
    \end{subfigure}
    \begin{subfigure}{0.24\textwidth}
    \includegraphics[width=\linewidth]{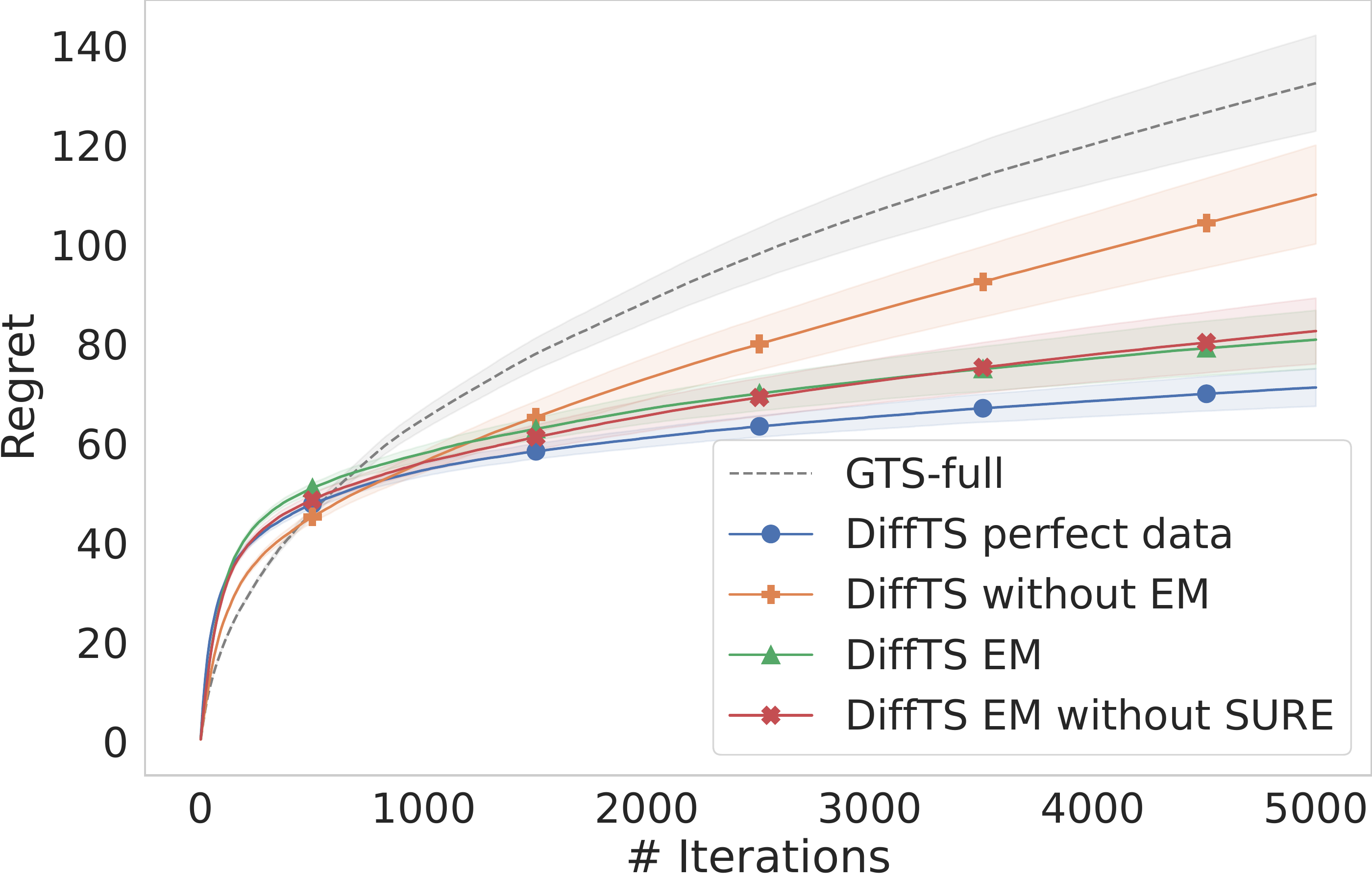}
    \caption*{\texttt{Popular\,\&\,Niche} $\est{\noisedev}=0.05$}
    \end{subfigure}
    \caption{Regret comparison for DiffTS trained on noisy and incomplete data with or without EM~/~SURE-based regularization.}
    \label{fig:ablation-incomplete_noisy}
\end{figure}

\subsection{Predicted versus Sampled Noise in Posterior Sampling}

In the DiffTS scheme that we develop (\cref{algo:post-sampling,algo:DiffTS}), we propose to use the predicted noise
$\vdupdate[\bar{\noise}]$ in the construction of the diffused observation $\vdiff[\tilde{\obs}]$.
Alternatively, we can replace it by a sampled noise vector $\vdupdate[\tilde{\noise}]$ (the resulting algorithm then becomes very similar to the `mix with a noisier version of the observation' approach presented in \cref{apx:related}).
In \cref{fig:ablation-noisy}, we investigate how this decision affects the performance of DiffTS with diffusion priors trained on perfect data set $\trainset$.
It turns out that for the two problems considered here, there is not clear winner between the two options.
However, it seems that using only sampled noise produces noisier samples, which leads to significant increase in regret in the \texttt{Labeled Arms} problem.
We further confirm this intuition in \cref{apx:exp-post-sampling}, where we show on a toy problem that the use of predicted noise often leads to samples that are more consistent with the learned prior. However, this does not always lead to performance improvement in bandit problems as the learned prior is never perfect.

\subsection{Importance of Variance Calibration}

Throughout our work, we have highlighted multiple times the importance of equipping the diffusion model with a suitable variance estimate.
We demonstrate this in \cref{fig:ablation-variance}.
We consider diffusion priors trained on the perfect data set $\trainset$ along with three different reverse variance schedules:
\begin{enumerate*}[\upshape(\itshape i\upshape)]
    \item calibrated, \ie \cref{eq:DDPM-reverse-uncertain};
    \item non-calibrated, \ie \cref{eq:DDPM-reverse};
    \item partially calibrated\textendash~ precisely, only the variance of $\vdiff[\rvlat][0]\given\vdiff[\latent][1]$ is calibrated.
\end{enumerate*}
We see clearly that a non-calibrated reverse variance schedule leads catastrophic regret performance.
This is because the sampling process relies too much on the learned model; in particular, the variance of $\density_{\param}(\vdiff[\rvlat][0]\given\vdiff[\latent][1])$ is fixed at zero.
Instead, calibrating $\vdiff[\rvlat][0]\given\vdiff[\latent][1]$ itself already leads to significant decrease in regret, making it as competitive as (and sometimes even better than) the fully calibrated alternative.
This suggests that the trade-off between the learned model and the observations mainly occurs at the last reverse step, whereas enlarging the variance of the remaining reverse steps has little to no effect. [Yet, it is also clear from the experiment on the \texttt{Popular and Niche} problem with presumed noise standard deviation $0.5$ that calibrating the variance of all the reverse steps may still be beneficial in some situation.]

\subsection{Ablation Study for Training from Imperfect Data}

Our algorithm for training from imperfect data (\cref{algo:training-imperfect}) makes two important modifications to the original training scheme: the Expectation Maximization-like procedure (abbreviated as EM hereinafter) and the use of SURE-based regularization.
Below we discuss their effects for three types of data: noisy data, incomplete data, and noisy and incomplete data.
We fix all the hyper-parameters to the ones used in the main experiment unless otherwise specified.
In particular, we set the noise standard deviation to $0.1$ for noisy data and the missing rate to $0.5$ for incomplete data.

For comparison, we also plot the regrets for the full covariance Gaussian prior baseline.
The means and the covariance of the prior are fitted with the three types of imperfect data that are used to train and calibrate the diffusion models, following the procedure detailed in \cref{apx:exp-detail-others}.

\paragraph{Training from Noisy Data\afterhead}

To cope with noisy data, we add SURE-based regularization with weight $\regpar$ to our training objective \eqref{eq:EM-loss}.
In this part, we focus on how the choice of $\regpar$ affects the regret when the data are noisy.
For the sake of simplicity, we only complete the warm-up phase of the algorithm, that is, the models are only trained for $15000$ steps with loss function $\loss$ and $\vdiff[\latent]$ sampled from $\vdiff[\rvlat]\given\vdiff[\rvlat][0]=\vdiff[\obs][0]$.
In our experiments we note this is generally good enough for noisy data without missing entries.

The results are shown in \cref{fig:ablation-noisy}.
As we can see, the value of $\regpar$ has a great influence on the regret achieved with the learned prior.
However, finding the most appropriate $\regpar$ for each problem is a challenging task.
Using a larger value of $\regpar$ helps greatly for the \texttt{Labeled Arms} problem when it is given the ground-truth standard deviation $\noisedev=0.1$, but is otherwise harmful for the \texttt{Popular and Niche} problem.
We believe that finding a way to determine the adequate value of $\regpar$ will be an important step to make our method more practically relevant.

\paragraph{Training from Incomplete Data\afterhead}

The EM step is mainly designed to tackle missing data. In \cref{fig:ablation-incomplete} we show how the induced regrets differ when the models are trained with and without it and when the observations are missing at random but not noisy.
To make a fair comparison, we also train the model for a total of $24000$ (instead of 15000) steps when EM is not employed.
As we can see, in all the setups the use of EM results in lower regret.

\paragraph{Training from Incomplete and Noisy Data\afterhead}

To conclude this section we investigate the effects of EM and SURE-based regularization when the data are both noisy and incomplete, as in our main experiment.
We either drop totally the regularization term, \ie set $\regpar=0$, or skip the EM step (but again we train the models for 24000 steps with the configuration of the warm-up phase in this case).
We plot the resulting regrets in \cref{fig:ablation-incomplete_noisy}.
For the models without EM, the variance calibration algorithm proposed in \cref{subsec:var-calib-imperfect} (\cref{algo:var-calib-imperfect}) does not work well so we calibrate it with a perfect calibration set $\calset$.\footnote{
Indeed, by design \cref{algo:var-calib-imperfect} only gives good result when the posterior sampling step provides a reasonable approximation of $\vdiff[\latent][0]$.
How to calibrate the variance of a poorly performed model from imperfect data is yet another difficult question to be addressed.
}
However, even with this the absence of EM consistently leads to the worst performance.
On the other hand, dropping the regularization term only causes clear performance degradation for the \texttt{Labeled Arms} problem.
This is in line with our results in \cref{fig:ablation-noisy}.

\section{Additional Experiments}
\label{apx:exp-add}
\begin{figure}[!p]
    \centering
    \begin{subfigure}{\textwidth}
    \centering
    \begin{minipage}{0.24\textwidth}
        \includegraphics[width=\textwidth]{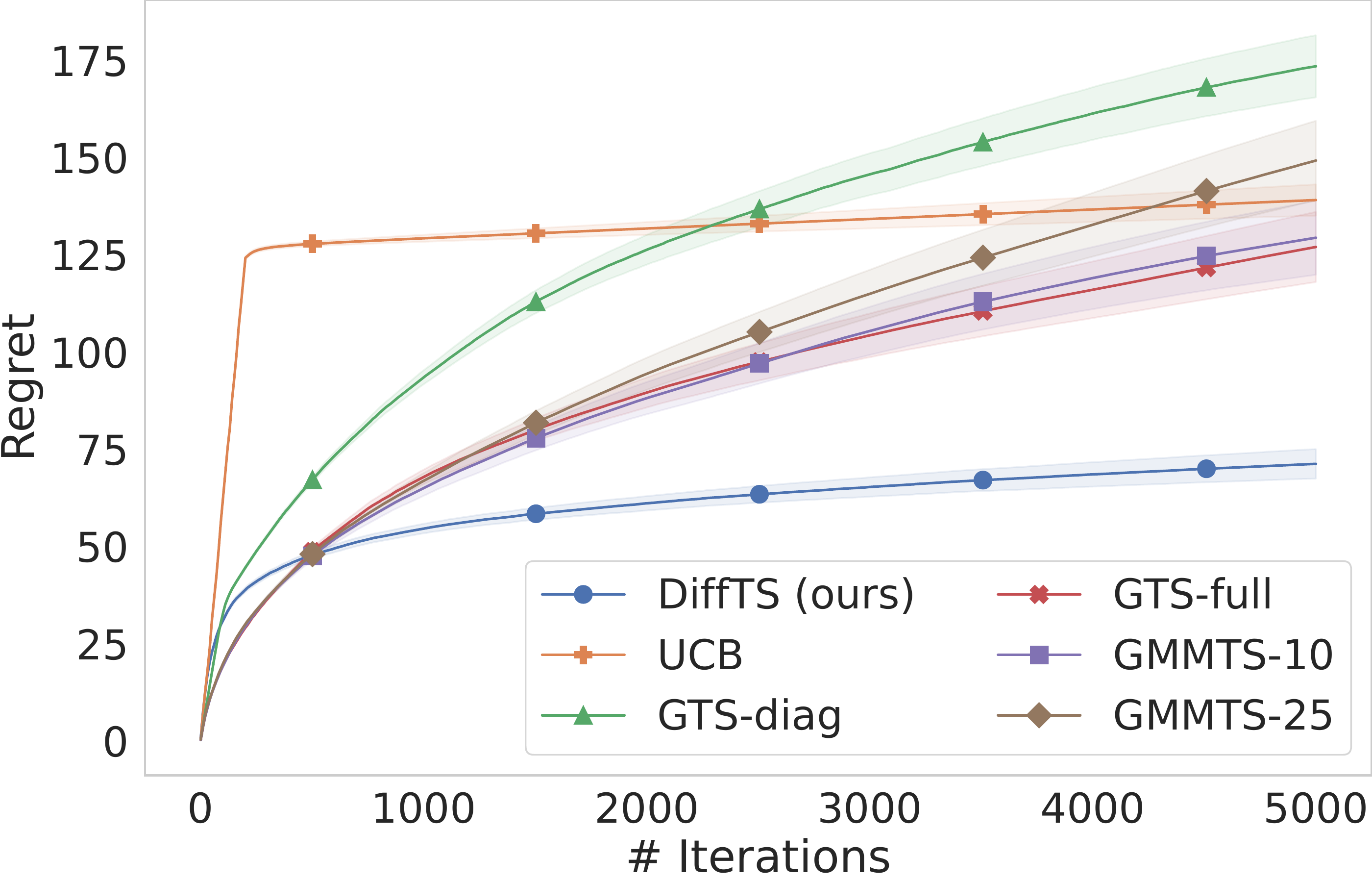}
        \vskip 0.2em
        \centering
        \footnotesize
        Perfect data $\est{\noisedev}=0.05$
    \end{minipage}
    \begin{minipage}{0.24\textwidth}
        \includegraphics[width=\textwidth]{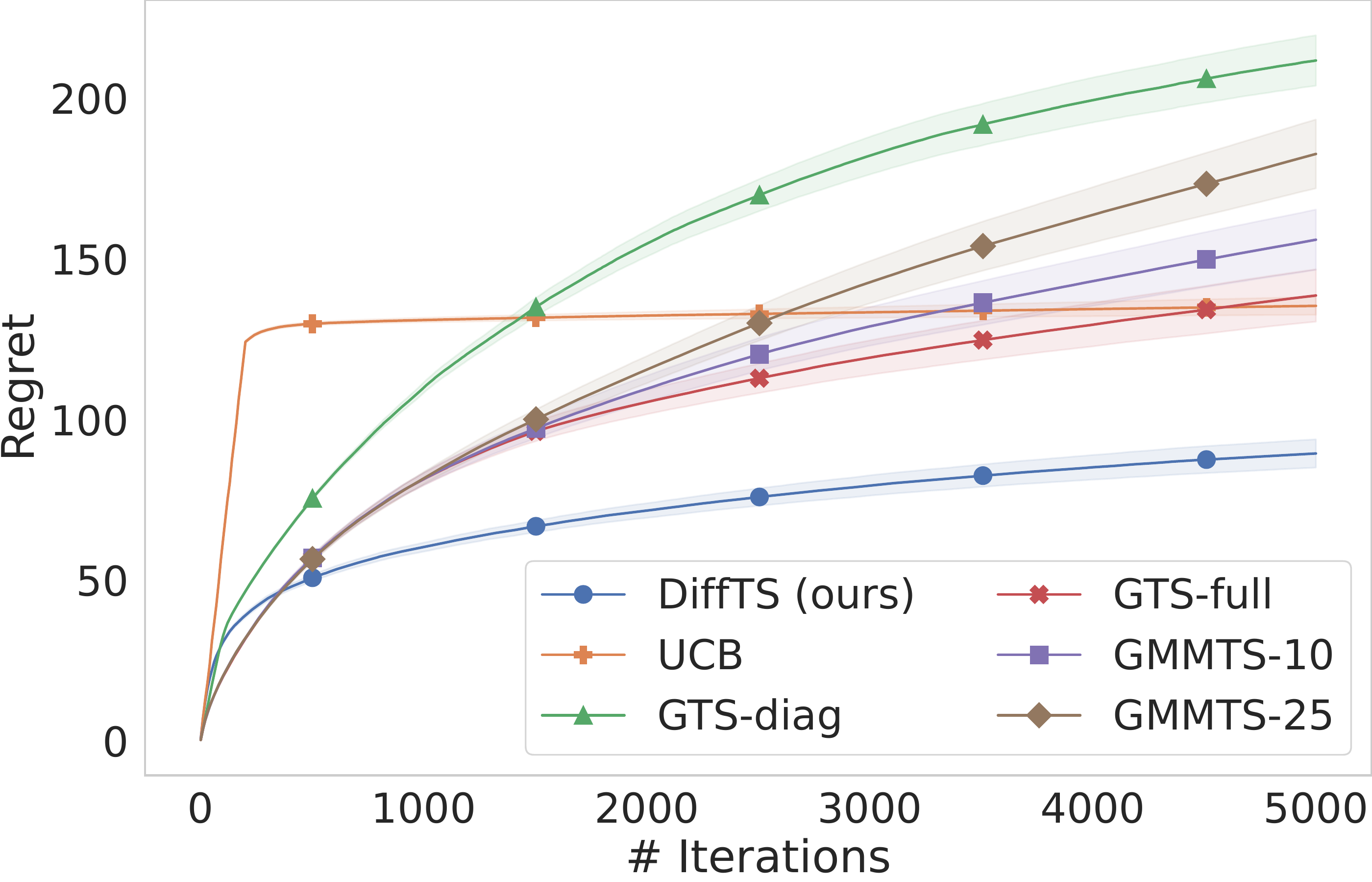}
        \vskip 0.2em
        \centering
        \footnotesize
        Perfect data $\est{\noisedev}=0.1$
    \end{minipage}
    \begin{minipage}{0.24\textwidth}
        \includegraphics[width=\textwidth]{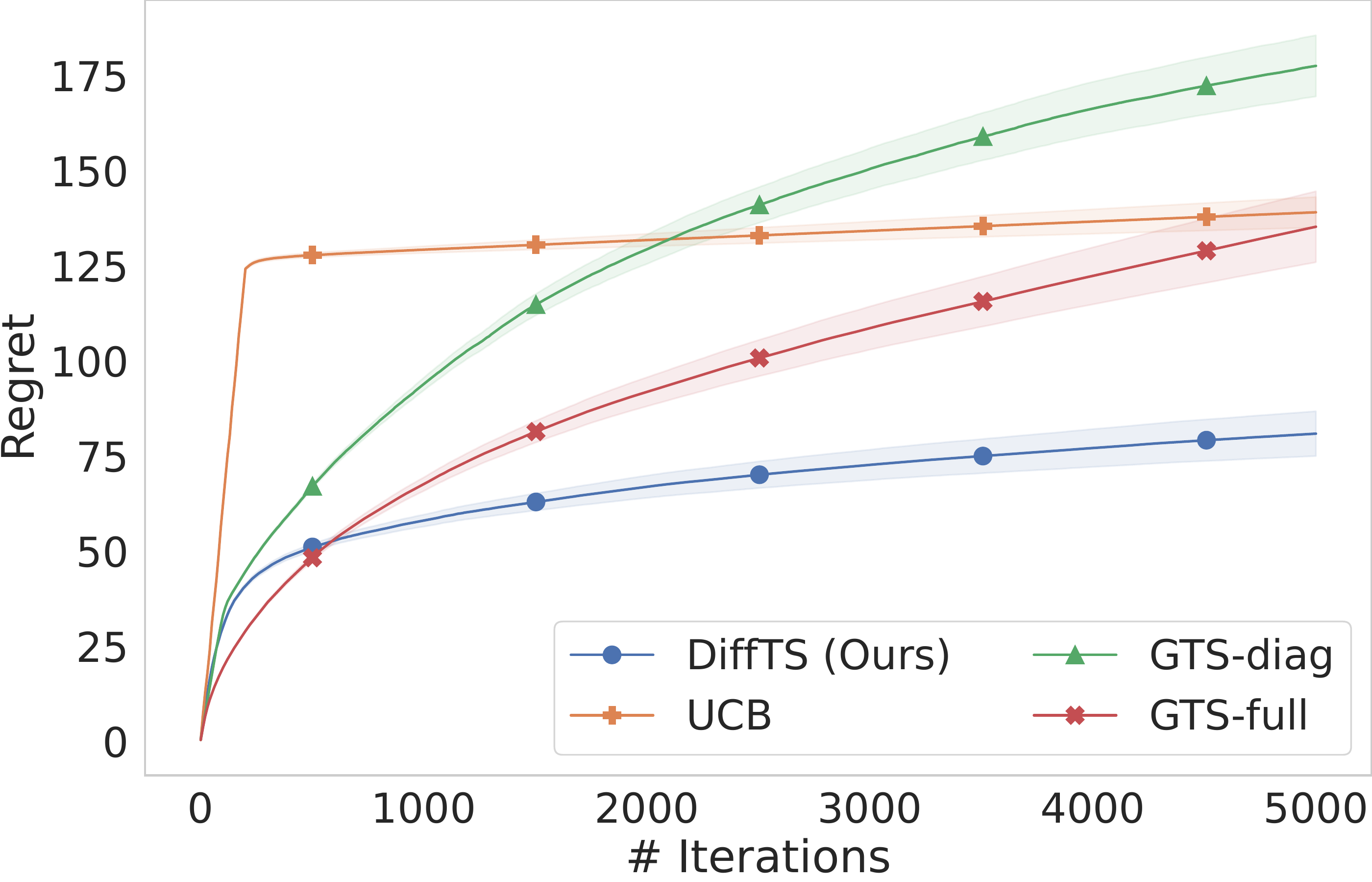}
        \vskip 0.2em
        \centering
        \footnotesize
        Imperfect data $\est{\noisedev}=0.05$
    \end{minipage}
    \begin{minipage}{0.24\textwidth}
        \includegraphics[width=\textwidth]{figures/popular_and_niche/popular_and_niche_regret_01_cor.pdf}
        \vskip 0.2em
        \centering
        \footnotesize
        Imperfect data $\est{\noisedev}=0.1$
    \end{minipage}
    \vspace{0.1em}
    \caption{\texttt{Popular and Niche}}
    \end{subfigure}
    \\[1em]
    \begin{subfigure}{\textwidth}
    \centering
    \begin{minipage}{0.24\textwidth}
        \includegraphics[width=\textwidth]{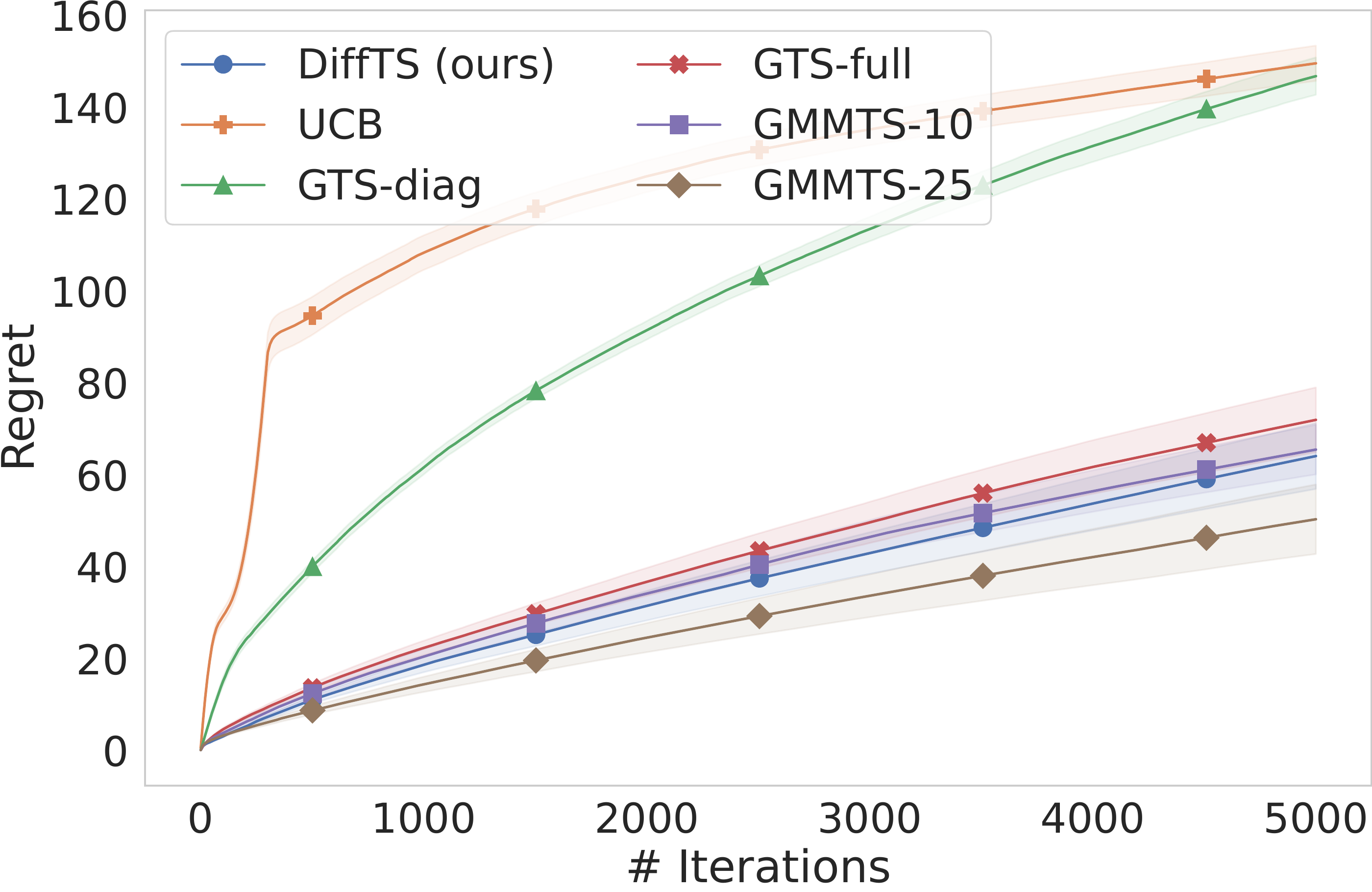}
        \vskip 0.2em
        \centering
        \footnotesize
        Perfect data $\est{\noisedev}=0.1$
    \end{minipage}
    \begin{minipage}{0.24\textwidth}
        \includegraphics[width=\textwidth]{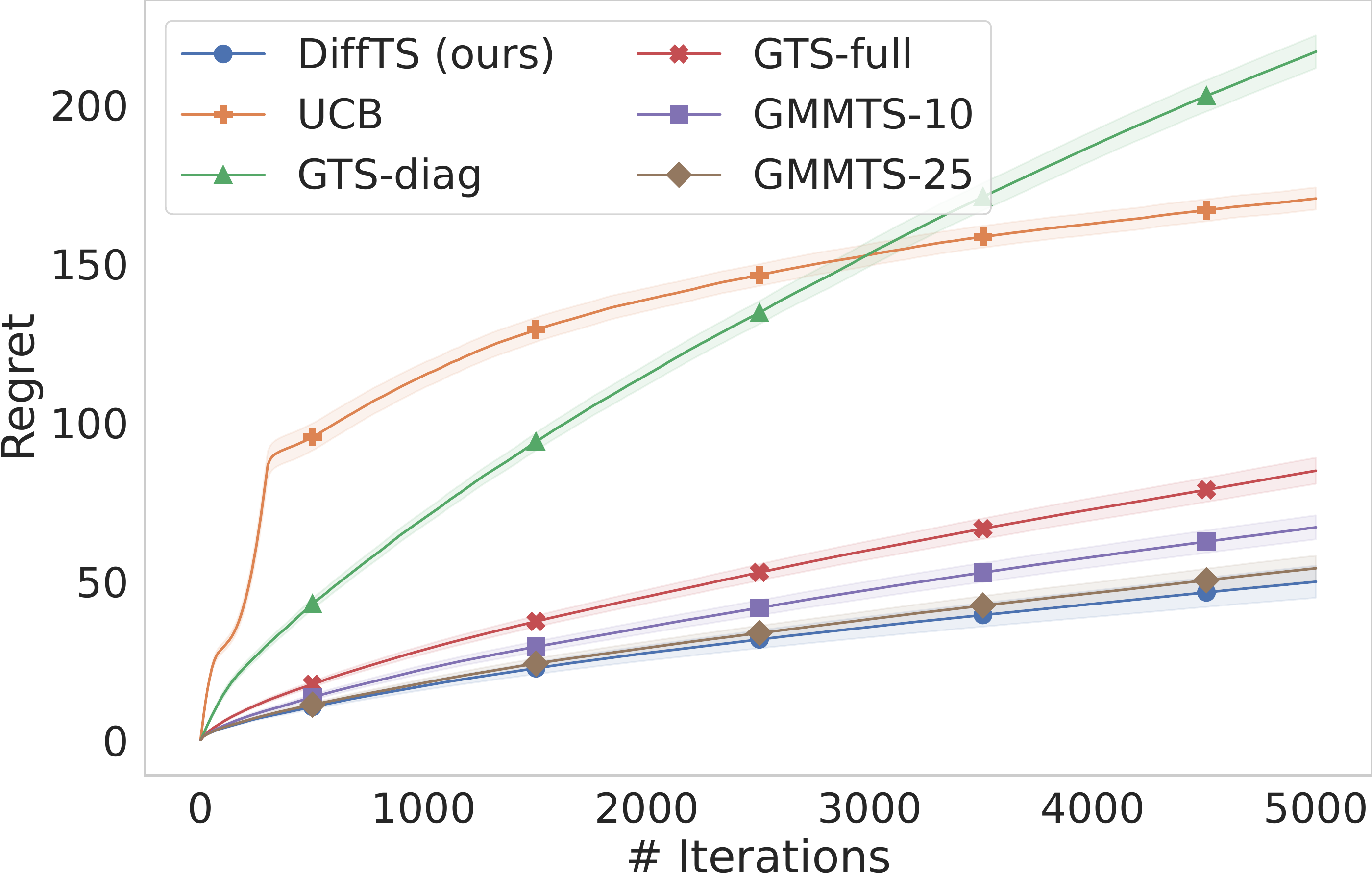}
        \vskip 0.2em
        \centering
        \footnotesize
        Perfect data $\est{\noisedev}=0.2$
    \end{minipage}
    \begin{minipage}{0.24\textwidth}
        \includegraphics[width=\textwidth]{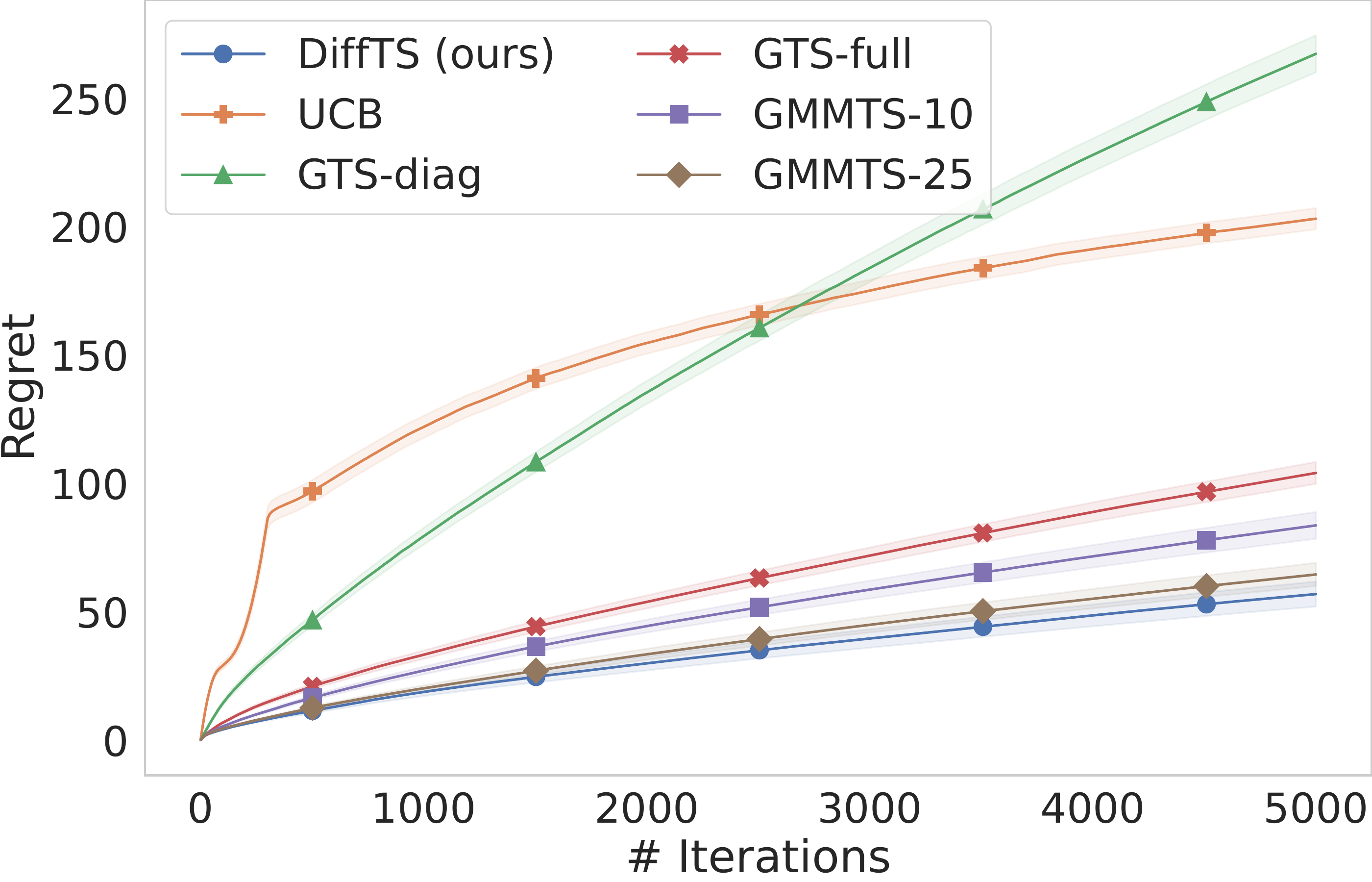}
        \vskip 0.2em
        \centering
        \footnotesize
        Perfect data $\est{\noisedev}=0.3$
    \end{minipage}
    \\[0.65em]
    \begin{minipage}{0.24\textwidth}
        \includegraphics[width=\textwidth]{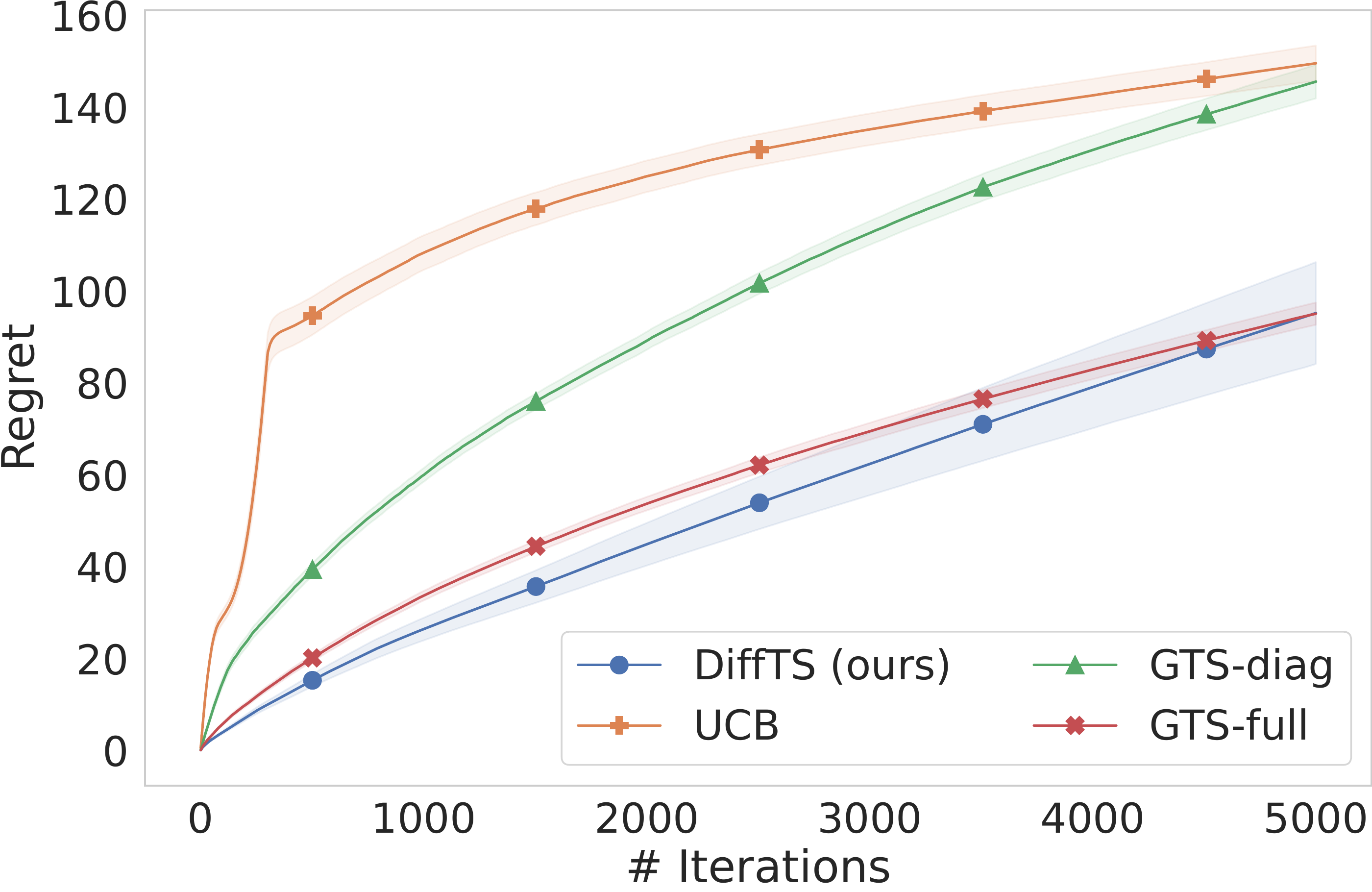}
        \vskip 0.2em
        \centering
        \footnotesize
        Imperfect data $\est{\noisedev}=0.1$
    \end{minipage}
    \begin{minipage}{0.24\textwidth}
        \includegraphics[width=\textwidth]{figures/ipinyou/ipinyou_regret_04_cor.pdf}
        \vskip 0.2em
        \centering
        \footnotesize
        Imperfect data $\est{\noisedev}=0.2$
    \end{minipage}
    \begin{minipage}{0.24\textwidth}
        \includegraphics[width=\textwidth]{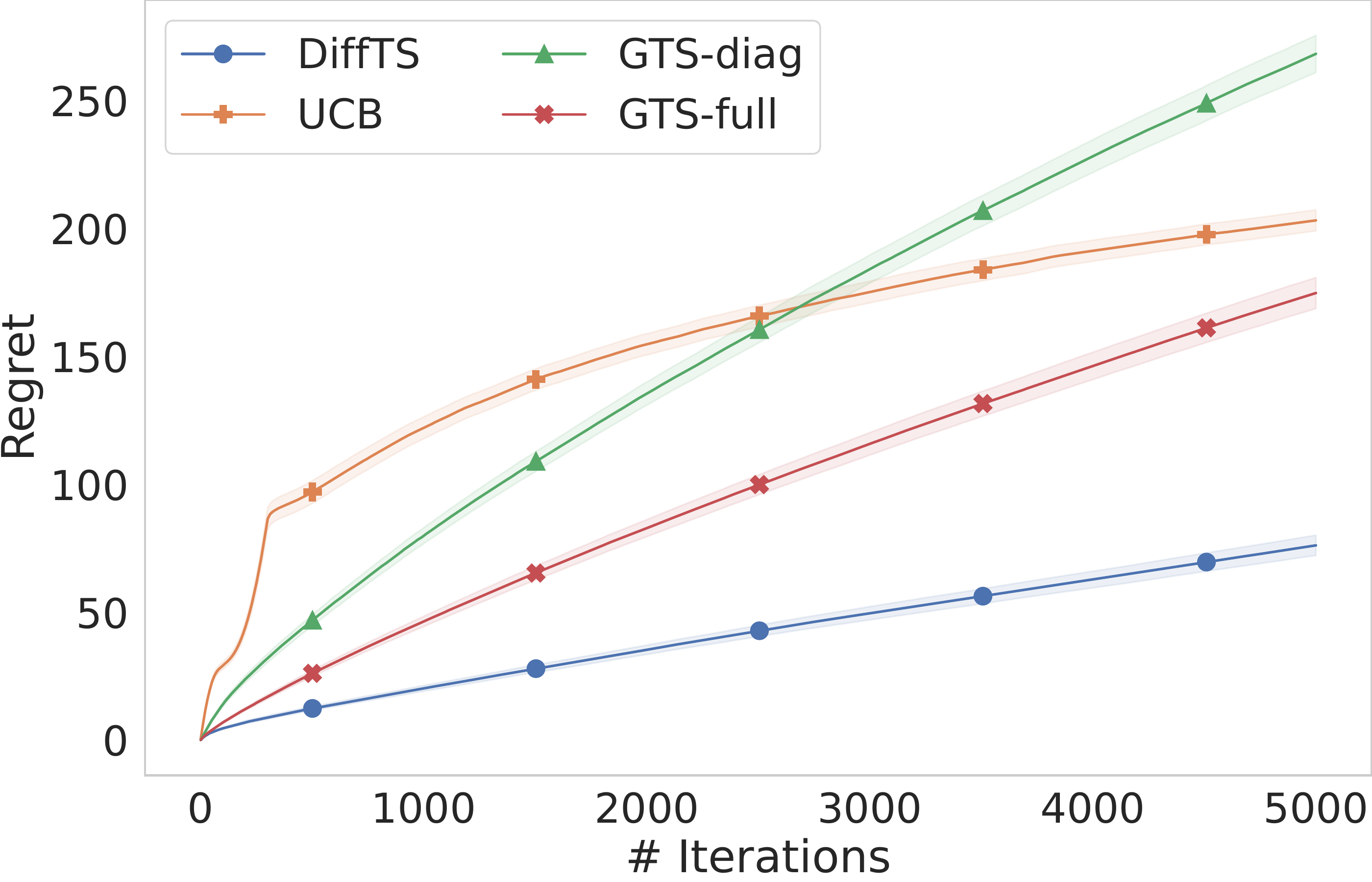}
        \vskip 0.2em
        \centering
        \footnotesize
        Imerfect data $\est{\noisedev}=0.3$
    \end{minipage}
    \vspace{0.1em}
    \caption{\texttt{iPinYou Bidding}}
    \end{subfigure}
    \\[1em]
    \begin{subfigure}{\textwidth}
    \centering
    \begin{minipage}{0.24\textwidth}
        \includegraphics[width=\textwidth]{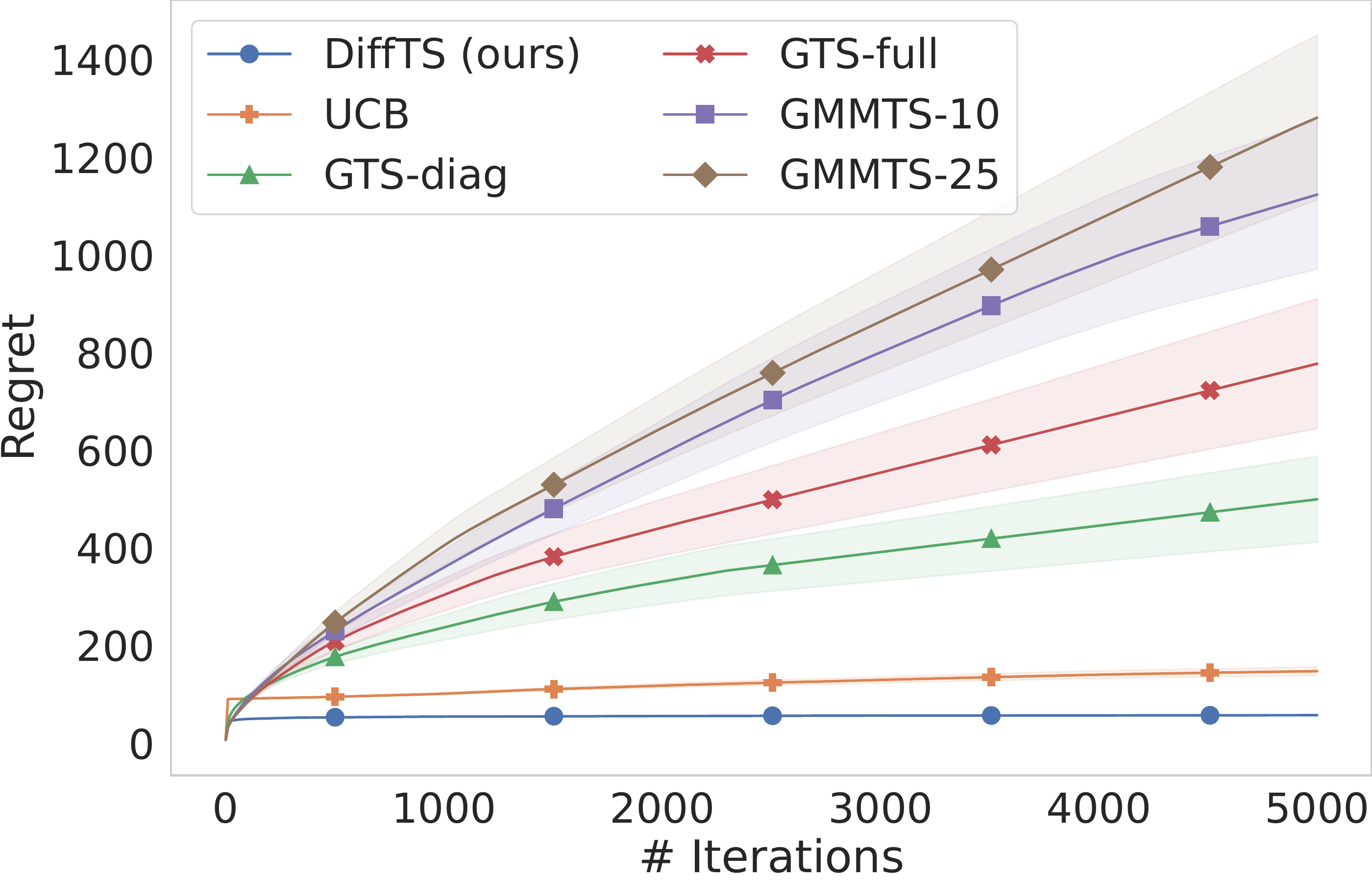}
        \vskip 0.2em
        \centering
        \footnotesize
        Perfect data $\est{\noisedev}=0.05$
    \end{minipage}
    \begin{minipage}{0.24\textwidth}
        \includegraphics[width=\textwidth]{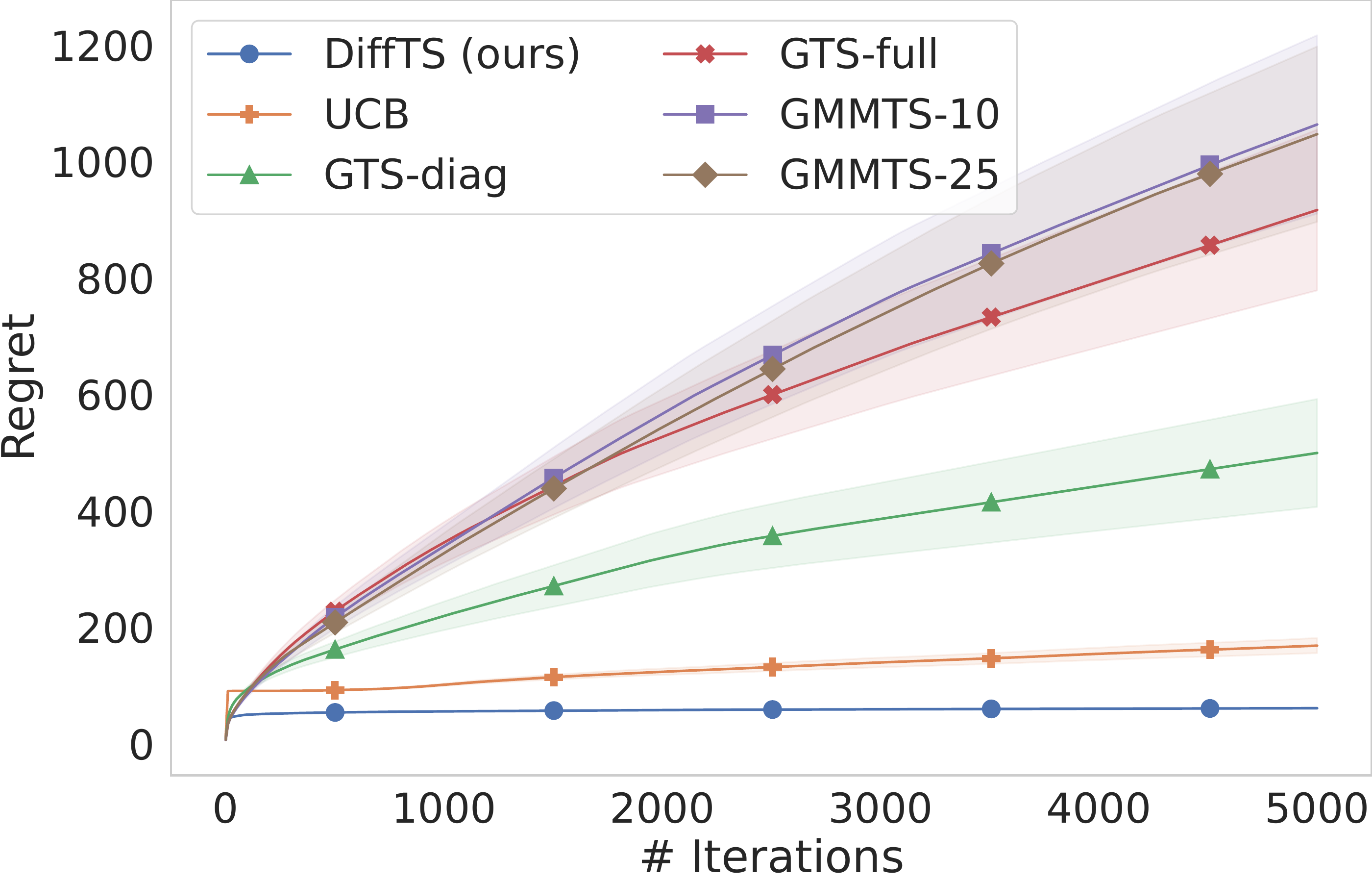}
        \vskip 0.2em
        \centering
        \footnotesize
        Perfect data $\est{\noisedev}=0.1$
    \end{minipage}
    \begin{minipage}{0.24\textwidth}
        \includegraphics[width=\textwidth]{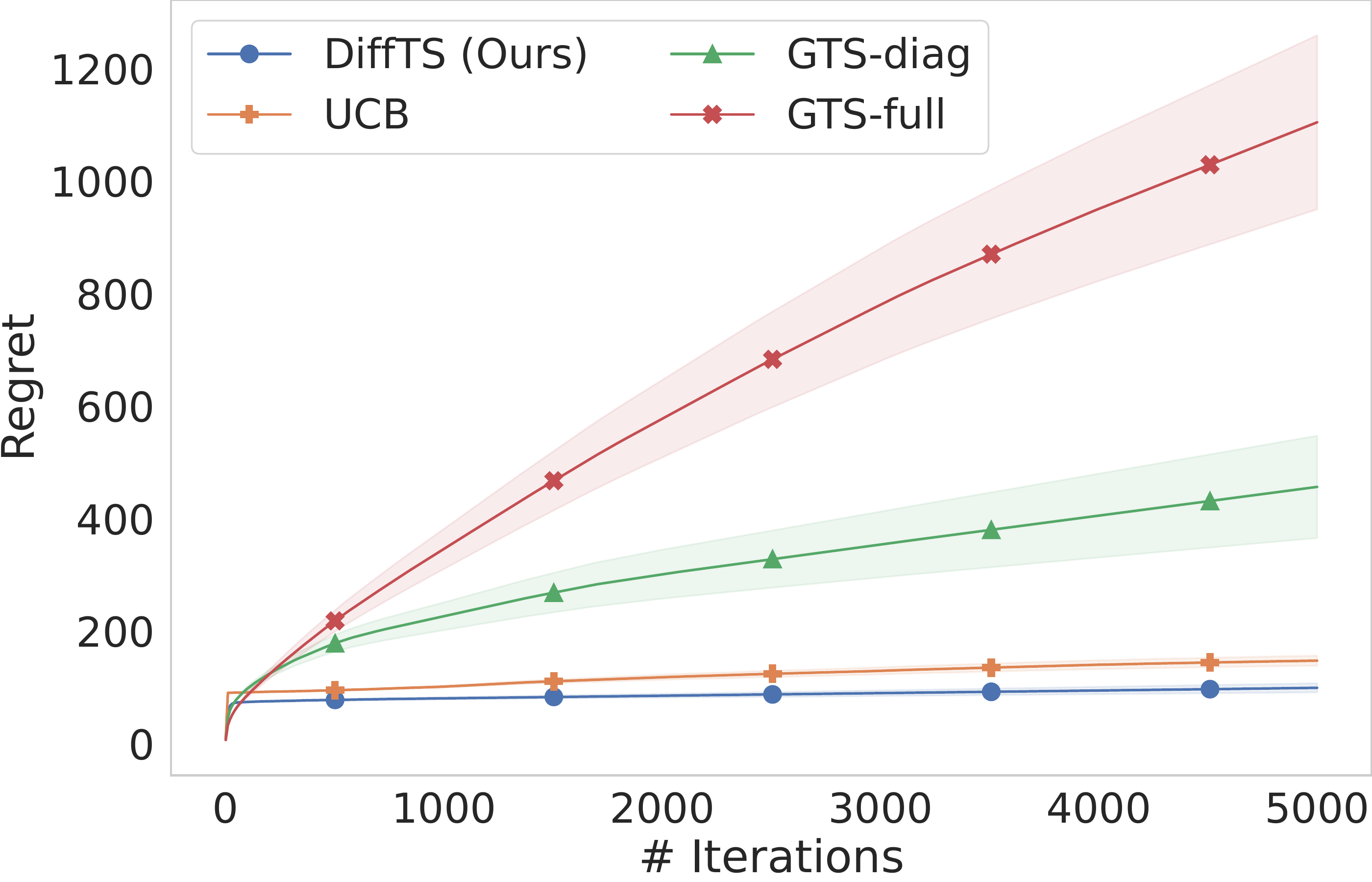}
        \vskip 0.2em
        \centering
        \footnotesize
        Imperfect data $\est{\noisedev}=0.05$
    \end{minipage}
    \begin{minipage}{0.24\textwidth}
        \includegraphics[width=\textwidth]{figures/maze/maze_regret_01_cor.pdf}
        \vskip 0.2em
        \centering
        \footnotesize
        Imperfect data $\est{\noisedev}=0.1$
    \end{minipage}
    \vspace{0.1em}
    \caption{\texttt{2D Maze}}
    \end{subfigure}
    \\[1em]
    \begin{subfigure}{\textwidth}
    \centering
    \begin{minipage}{0.24\textwidth}
        \includegraphics[width=\textwidth]{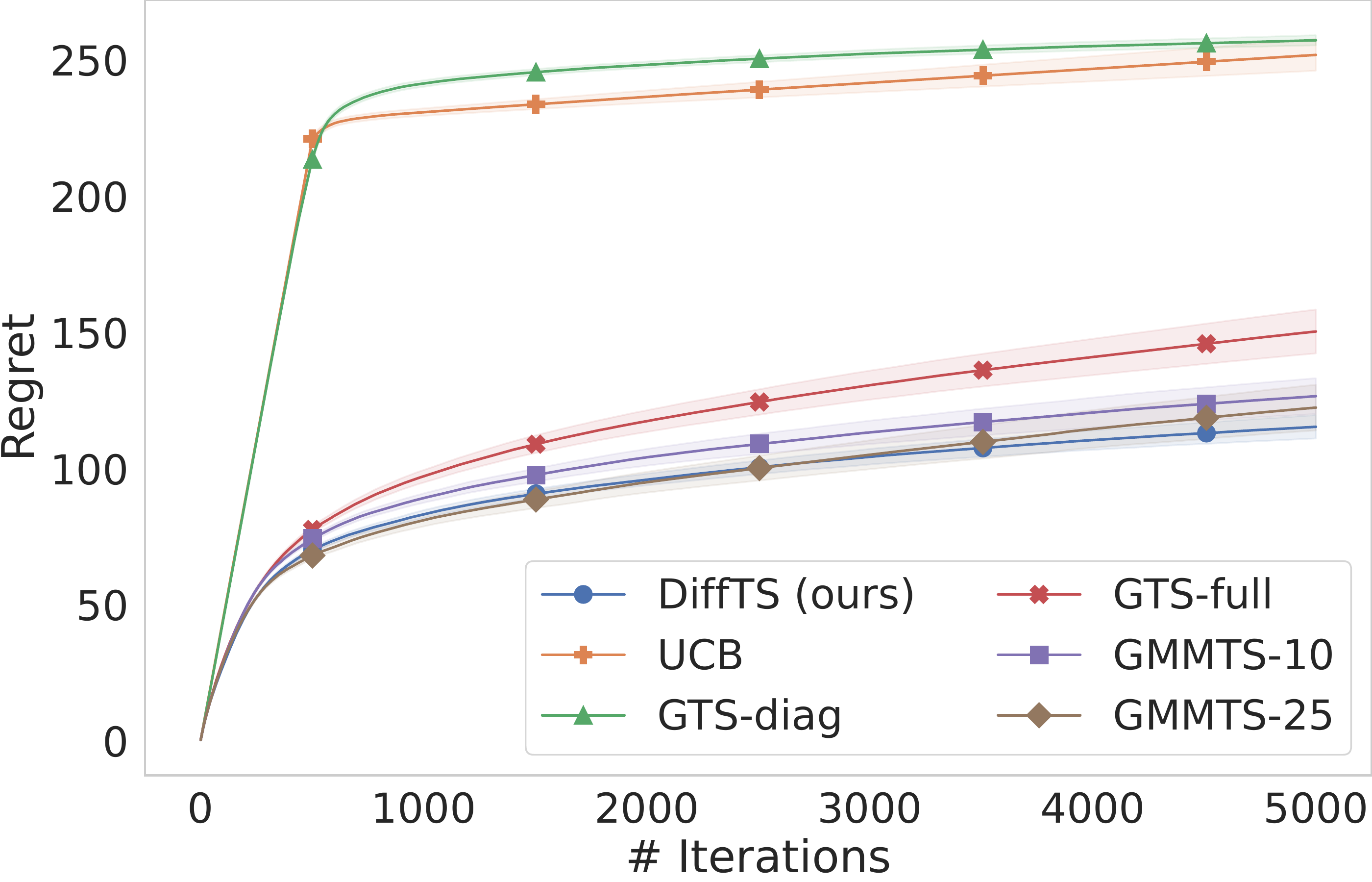}
        \vskip 0.2em
        \centering
        \footnotesize
        Perfect data $\est{\noisedev}=0.05$
    \end{minipage}
    \begin{minipage}{0.24\textwidth}
        \includegraphics[width=\textwidth]{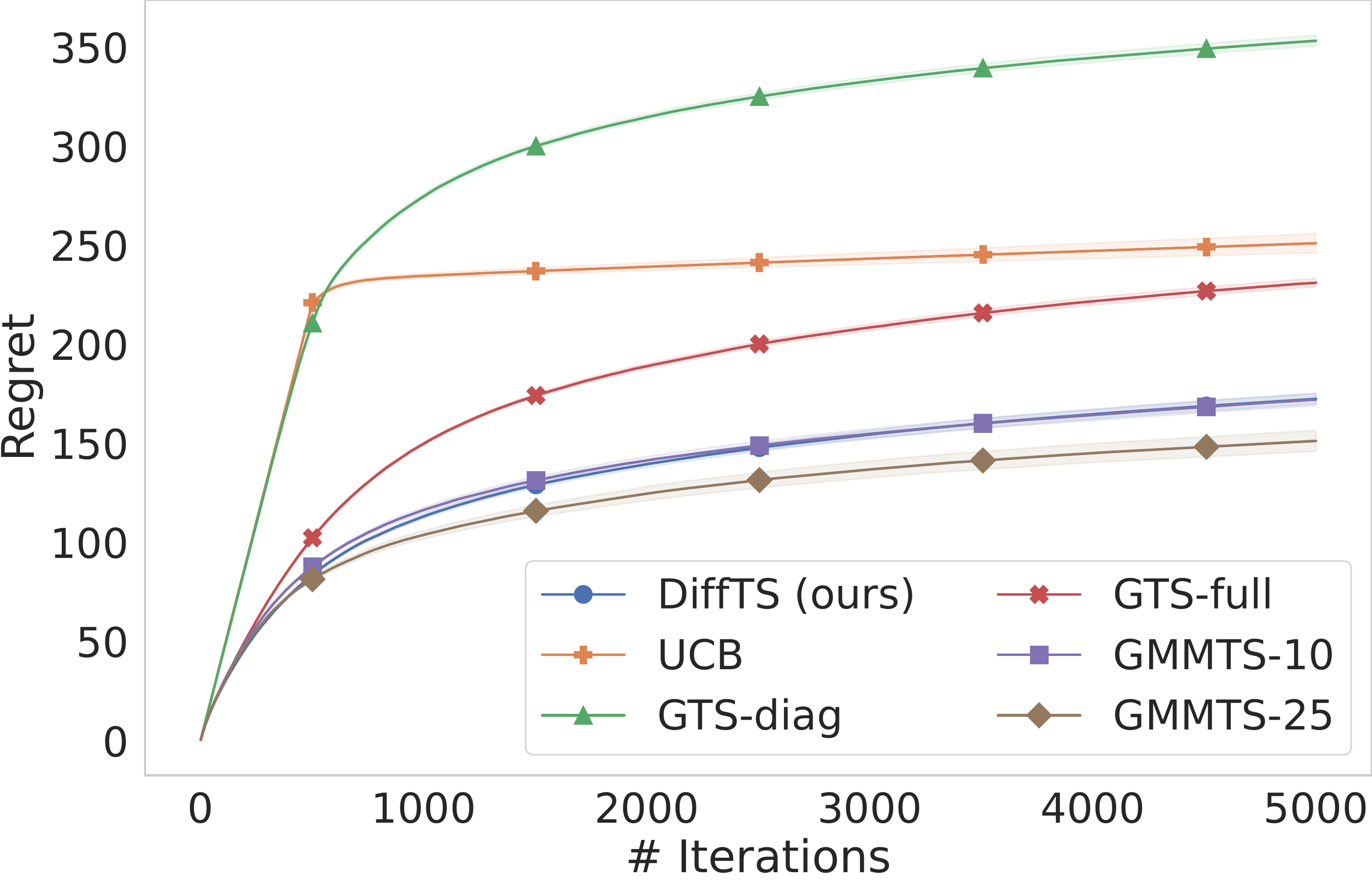}
        \vskip 0.2em
        \centering
        \footnotesize
        Perfect data $\est{\noisedev}=0.1$
    \end{minipage}
    \begin{minipage}{0.24\textwidth}
        \includegraphics[width=\textwidth]{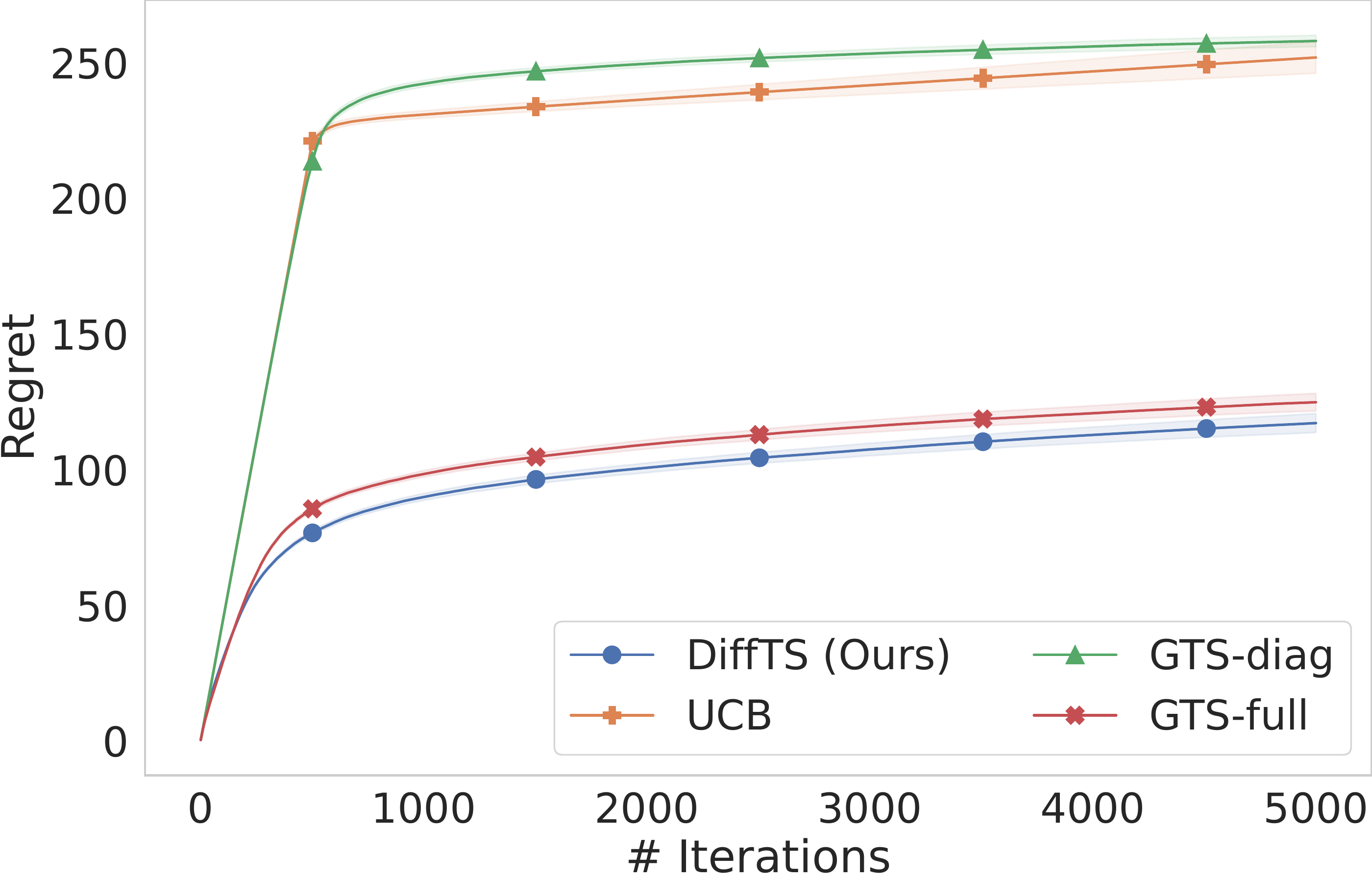}
        \vskip 0.2em
        \centering
        \footnotesize
        Imperfect data $\est{\noisedev}=0.05$
    \end{minipage}
    \begin{minipage}{0.24\textwidth}
        \includegraphics[width=\textwidth]{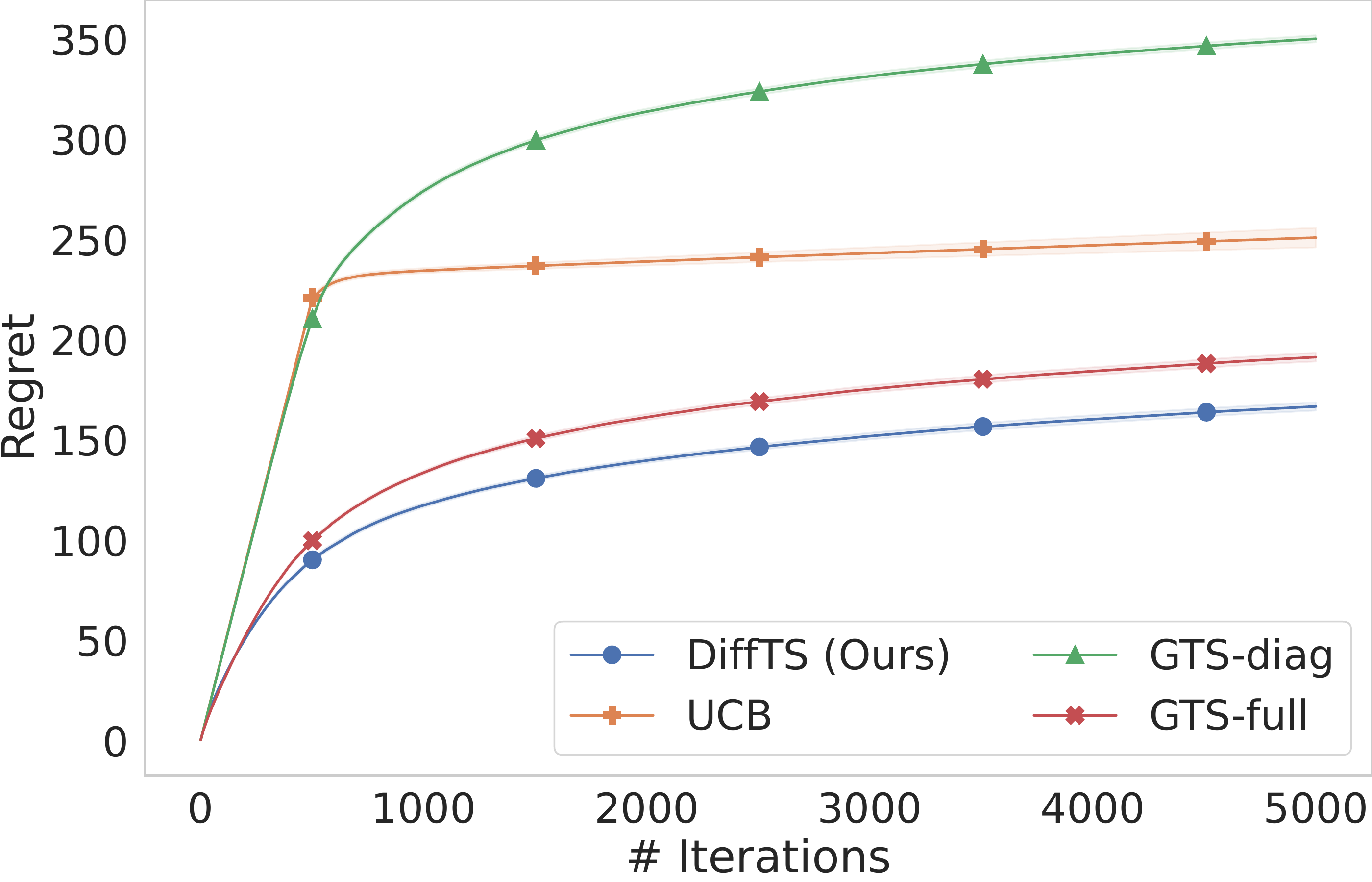}
        \vskip 0.2em
        \centering
        \footnotesize
        Imperfect data $\est{\noisedev}=0.1$
    \end{minipage}
    \vspace{0.1em}
    \caption{\texttt{Labeled Arms}}
    \end{subfigure}
    \caption{Regret performances on four different problems with priors fitted/trained on either exact expected rewards (perfect data) or partially observed noisy rewards (imperfect data) and with different assumed noise levels $\est{\noisedev}$. The results are averaged over tasks of a test set and shaded areas represent standard errors.}
    \label{fig:exp-complete}
\end{figure}

In this appendix, we first supplement our numerical section \cref{sec:exp} 
with results obtained under different assumed noise levels.
After that, we present additional experiments for the posterior sampling and the training algorithms.

\subsection{Experimental Results with Different Assumed Noise Levels}
\label{apx:exp-noise-levels}

To further validate the benefit of diffusion priors, we conduct experiments for the four problems introduced in \cref{apx:bandit-instances} under different assumed noise levels.
The results are shown in \cref{fig:exp-complete}.
We see that DiffTS achieves the smallest regret in $15$ out of the $18$ plots, confirming again the advantage of using diffusion priors.
Moreover, although DiffTS performs worse than either GMMTS or GTS-full in \texttt{iPinYou Bidding} and \texttt{Labled Arms} for a certain assumed noise level, the smallest regret is still achieved by DiffTS when taking all the noise levels that we have experimented with into account.

Finally, it is clear from \cref{fig:exp-complete} that the choice of the assumed noise level $\est{\noisedev}$ also has a great influence on the induced regret.
The problem of choosing an appropriate $\est{\noisedev}$ is however beyond the scope of our work.

\begin{figure}[!p]
    \centering
    \begin{subfigure}{\linewidth}
    \includegraphics[width=\textwidth]{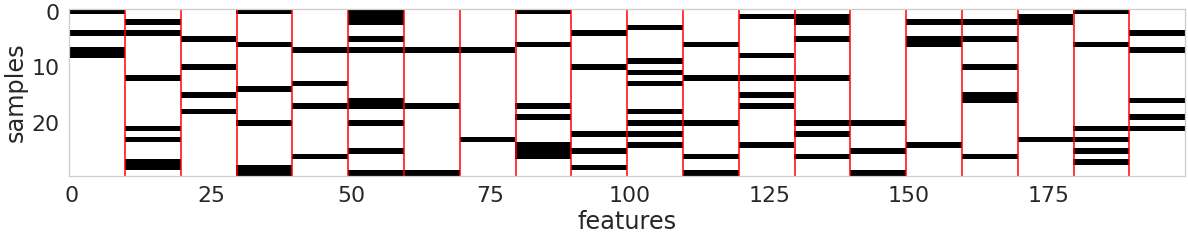}
    \caption{$30$ samples from the training set.}
    \label{subfig:20cats-data}
    \end{subfigure}
    \\[0.65em]
    \begin{subfigure}{\linewidth}
    \includegraphics[width=\textwidth]{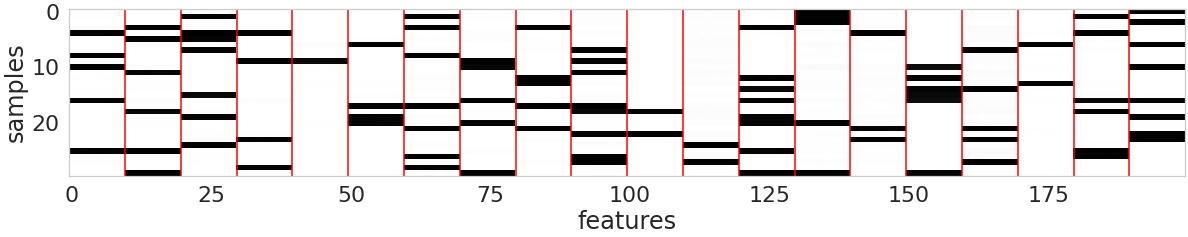}
    \caption{$30$ feature vectors generated by the learned diffusion model.}
    \label{subfig:20cats-generated}
    \end{subfigure}
    \\[0.65em]
    \begin{subfigure}{\linewidth}
    \includegraphics[width=\textwidth]{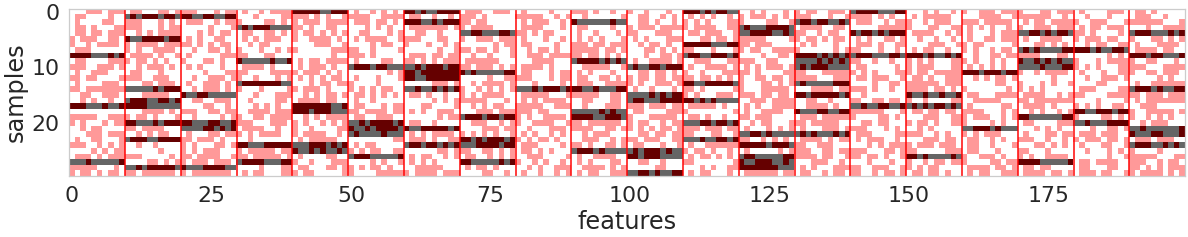}
    \caption{$30$ samples from the test set.
    Red squares indicate missing values.}
    \label{subfig:20cats-test}
    \end{subfigure}
    \\[0.65em]
    \begin{subfigure}{\linewidth}
    \includegraphics[width=\textwidth]{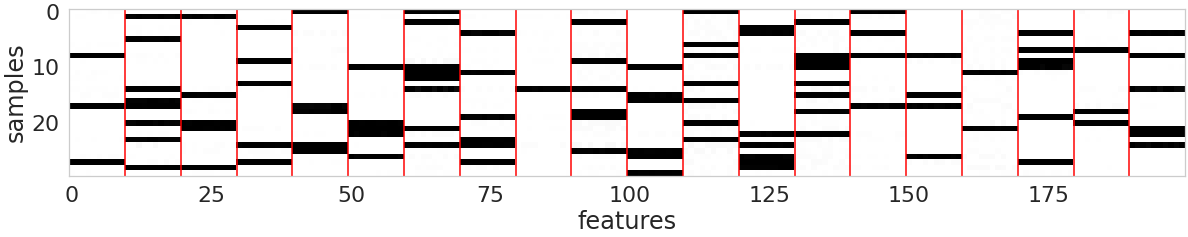}
    \caption{Feature vectors reconstructed with learned diffusion model and \cref{algo:post-sampling} using predicted noise vectors $\vdiff[\bar{\noise}]$.
    The inputs are the ones shown in \ref{subfig:20cats-test}.}
    \label{subfig:20cats-post-predicted}
    \end{subfigure}
    \\[0.65em]
    \begin{subfigure}{\linewidth}
    \includegraphics[width=\textwidth]{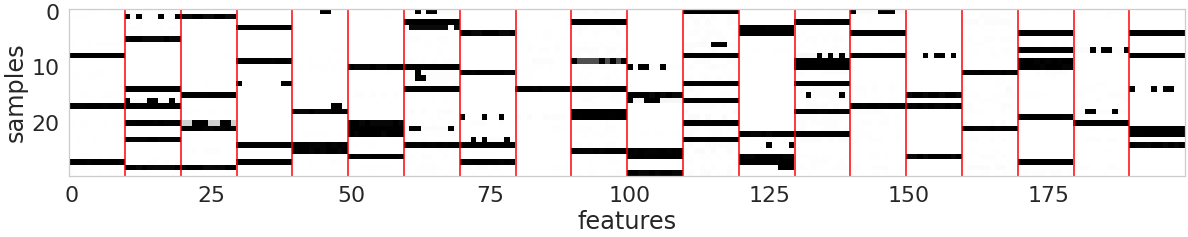}
    \caption{Feature vectors reconstructed with learned diffusion model and \cref{algo:post-sampling} using independently sampled noise vectors $\vdiff[\tilde{\noise}]$.
    The inputs are the ones shown in \ref{subfig:20cats-test}.}
    \label{subfig:20cats-post-sampled}
    \end{subfigure}
    \\[0.3em]
    \caption{
    Feature vectors of the toy problems presented in \cref{apx:exp-post-sampling}.
    Rows and columns correspond respectively to features and samples.
    For visualization purpose, the features are ordered in a way that those of the same group are put together.
    The darker the color the higher the value, with white and black representing respectively $0$ and $1$.}
    \label{fig:20cats-data}
\end{figure}

\subsection{Comparison of Posterior Sampling Strategies on a Toy Problem}
\label{apx:exp-post-sampling}

In this part, we demonstrate on a toy problem that using predicted noise
$\vdupdate[\bar{\noise}]$ to construct the diffused observation $\vdiff[\tilde{\obs}]$ leads to more consistent examples compared to using independently sampled noise vectors.

\paragraph{Data Set and Diffusion Model Training\afterhead}
We consider a simple data distribution over $\R^{200}$.
The $200$ features are grouped into $20$ groups.
For each sample, we randomly select up to $6$ groups and set the values of the corresponding features to $1$.
The remaining features take the value $0$.
Some samples from this distribution are illustrated in \cref{subfig:20cats-data}.
As for the diffusion model, the model architecture, hyper-parameters, and training procedure are taken to be the same as those for the 
\texttt{Popular and Niche} problem (\cref{apx:exp}).
In \cref{subfig:20cats-generated} we see that the data distribution is perfectly learned.

\paragraph{Posterior Sampling\afterhead}

We proceed to investigate the performance of our posterior sampling algorithm on this example.
For this, we form a test set of $100$ samples drawn from the same distribution and drop each single feature with probability $0.5$ as shown in \cref{subfig:20cats-test}.
We then conduct posterior sampling with the learned model using \cref{algo:post-sampling}.
To define the diffused observation $\vdiff[\tilde{\obs}]$, we either follow \eqref{eq:diff-obs} or replace the predicted noise $\vdupdate[\bar{\noise}]$ by the sampled noise $\vdupdate[\tilde{\noise}]$ in the formula.
The corresponding results are shown in \cref{subfig:20cats-post-predicted,subfig:20cats-post-sampled}.
As we can see, using predicted noise clearly leads to samples that are more consistent with both the observations and the learned prior.

To provide a quantitative measure, in the constructed samples we define a group to be `relevant' if the values of all its features are greater than $0.8$.
We then compute the recall and precision by comparing the ground-truth selected groups and the ones identified as relevant.
When predicted noise is used, the average recall and precision are both at $100\%$.
On the other hand, when independently sampled noise is used, the average recall falls to around $85\%$ (this value varies due to the randomness of the sampling procedure but never exceeds $90\%$) while the average precision remains at around $98\%$.

\subsection{Training from Imperfect Image Data}
\label{apx:mnist-fmnist}

\captionsetup[subfigure]{
    font=footnotesize
}

\begin{figure}[t]
    \centering
    \begin{subfigure}{0.24\textwidth}
    \includegraphics[width=\linewidth]{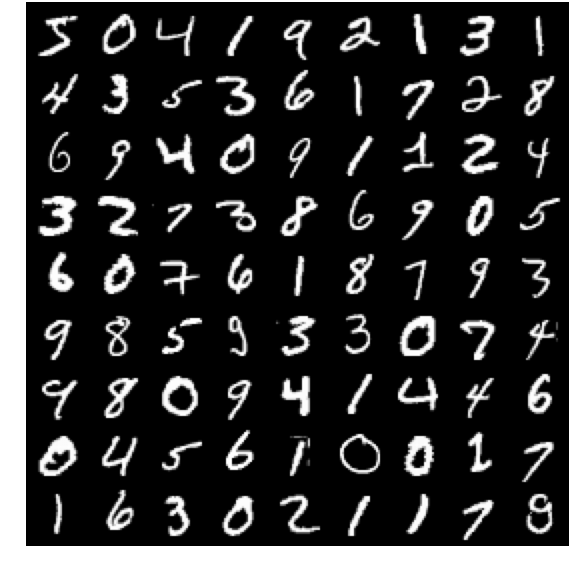}
    \caption{Original images}
    \label{subfig:mnist-original}
    \end{subfigure}
    \begin{subfigure}{0.24\textwidth}
    \includegraphics[width=\linewidth]{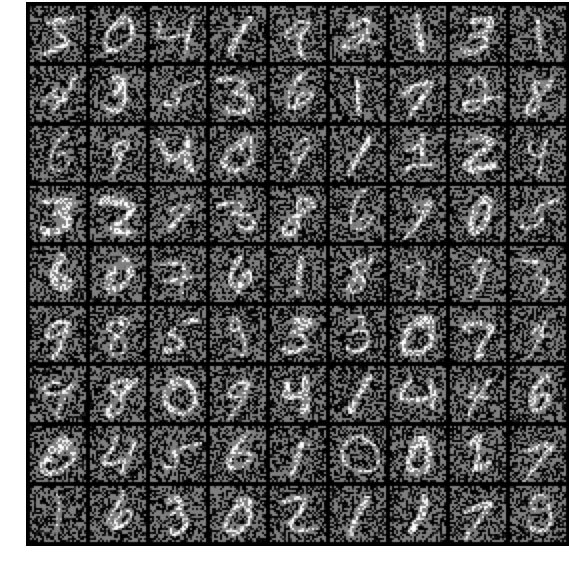}
    \caption{Corrupted images}
    \end{subfigure}
    \begin{subfigure}{0.24\textwidth}
    \includegraphics[width=\linewidth]{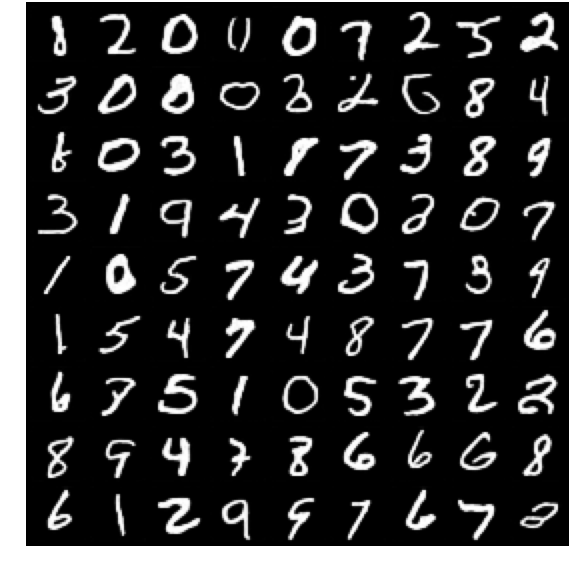}
    \caption{Model\textsubscript{orig} generated}
    \end{subfigure}
    \begin{subfigure}{0.24\textwidth}
    \includegraphics[width=\linewidth]{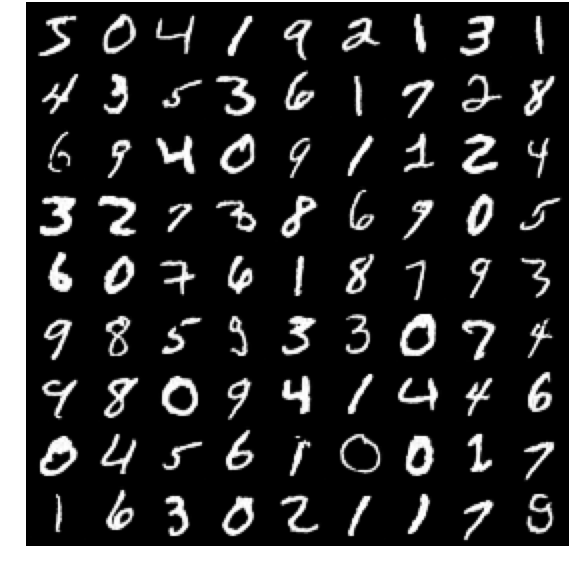}
    \caption{Model\textsubscript{orig} reconstructed}
    \end{subfigure}
    \\[0.5em]
    \begin{subfigure}{0.24\textwidth}
    \includegraphics[width=\linewidth]{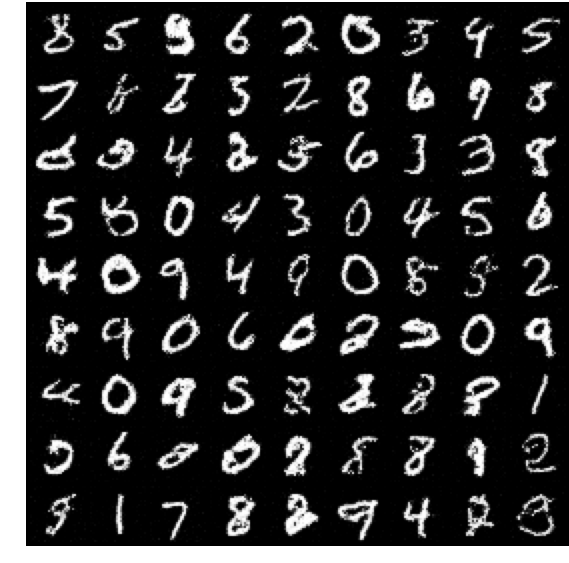}
    \caption{Model\textsubscript{cor14} generated}
    \end{subfigure}
    \begin{subfigure}{0.24\textwidth}
    \includegraphics[width=\linewidth]{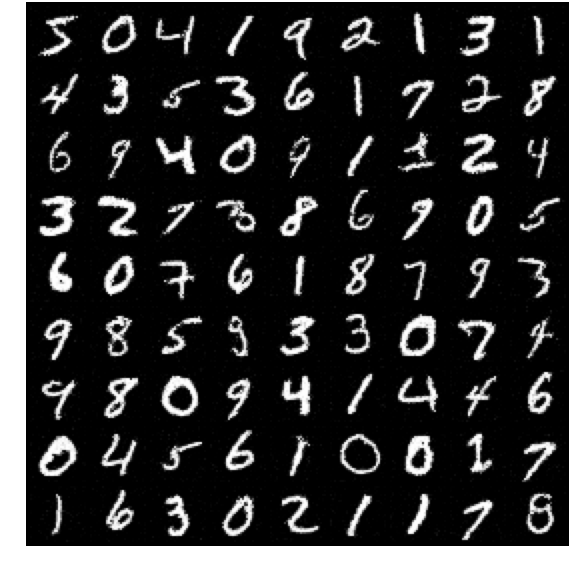}
    \caption{Model\textsubscript{cor14} reconstructed}
    \end{subfigure}
    \begin{subfigure}{0.24\textwidth}
    \includegraphics[width=\linewidth]{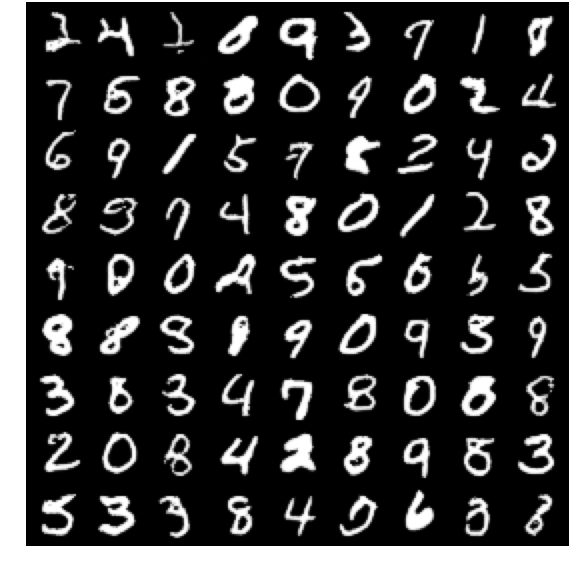}
    \caption{Model\textsubscript{cor16} generated}
    \end{subfigure}
    \begin{subfigure}{0.24\textwidth}
    \includegraphics[width=\linewidth]{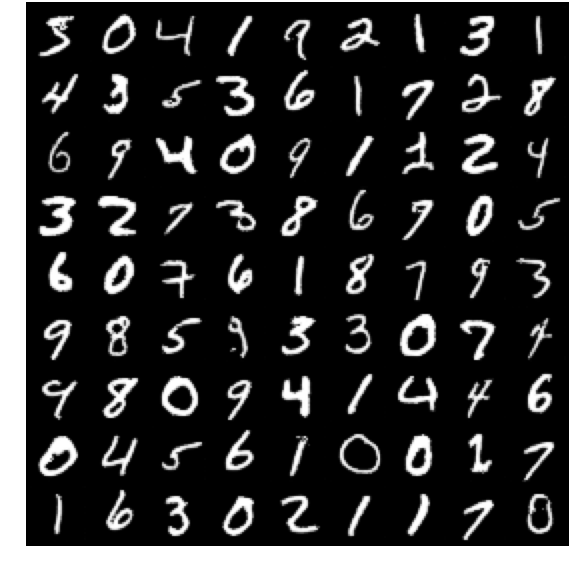}
    \caption{Model\textsubscript{cor16} reconstructed}
    \end{subfigure}
    \caption{Various images related to the MNIST data set. The three models Model\textsubscript{orig}, Model\textsubscript{cor14}, and Model\textsubscript{cor16} are respectively trained on the original data set, on the corrupted data set for $14000$ steps, and on the corrupted data set for $16000$ steps (Model\textsubscript{cor16} is trained on top of Model\textsubscript{cor14} for another $2000$ steps; see the text for more details).
    `Generated' means unconditional sampling while `reconstructed' means posterior sampling with \cref{algo:post-sampling} applied to the corrupted images shown in (b).}
    \label{fig:mnist}
\end{figure}

\captionsetup[subfigure]{
    font=small
}
\begin{figure}[t]
    \centering
    \begin{subfigure}{0.32\textwidth}
    \includegraphics[width=\linewidth]{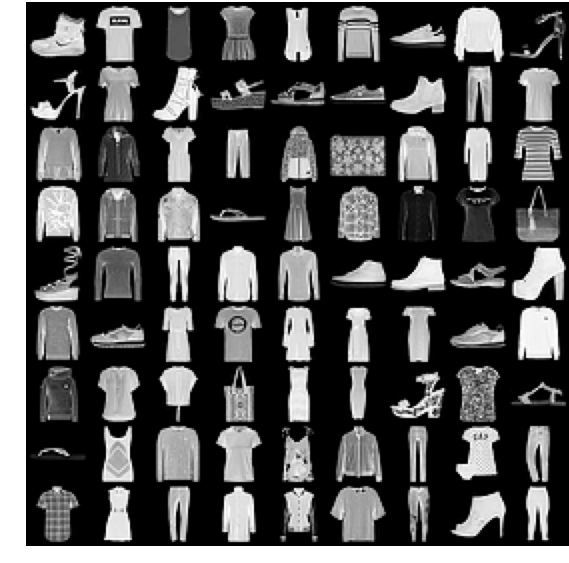}
    \caption{Original images}
    \label{subfig:fmnist-original}
    \end{subfigure}
    \begin{subfigure}{0.32\textwidth}
    \includegraphics[width=\linewidth]{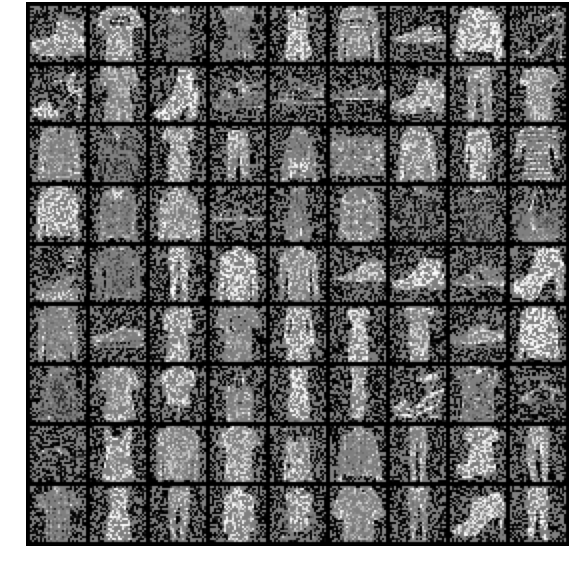}
    \caption{Corrupted images}
    \end{subfigure}
    \begin{subfigure}{0.32\textwidth}
    \includegraphics[width=\linewidth]{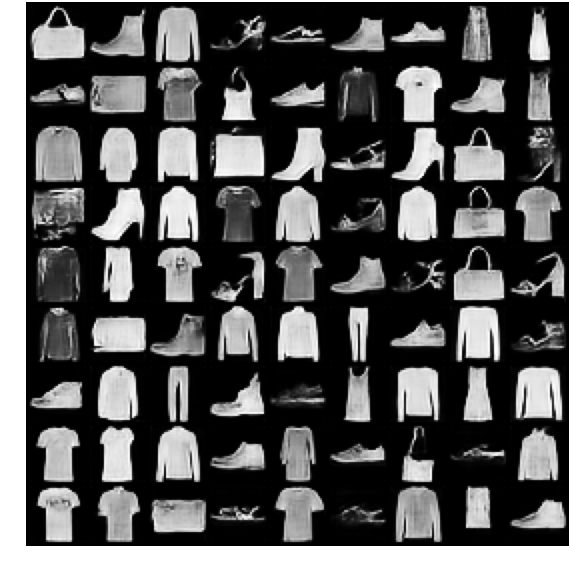}
    \caption{Model\textsubscript{orig} generated}
    \end{subfigure}
    \\[0.5em]
    \begin{subfigure}{0.32\textwidth}
    \includegraphics[width=\linewidth]{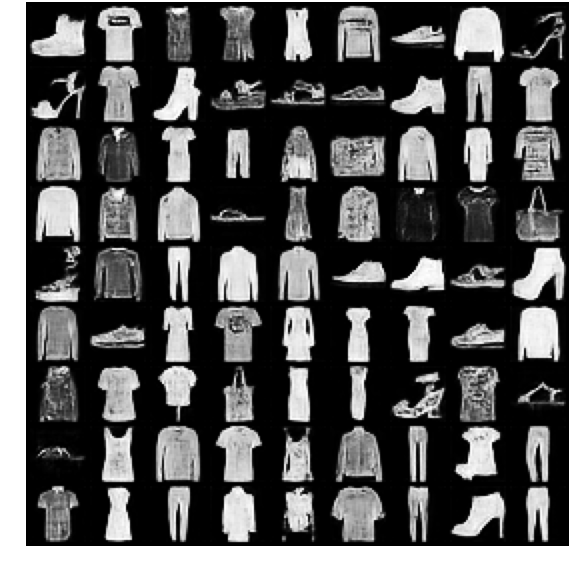}
    \caption{Model\textsubscript{orig} reconstructed}
    \end{subfigure}
    \begin{subfigure}{0.32\textwidth}
    \includegraphics[width=\linewidth]{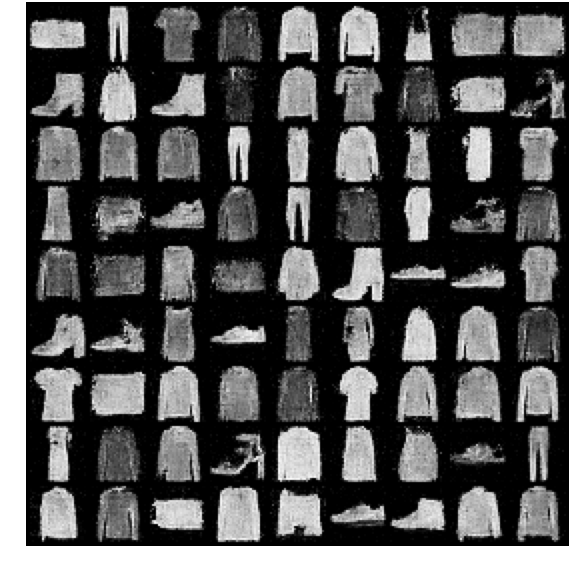}
    \caption{Model\textsubscript{cor} generated}
    \end{subfigure}
    \begin{subfigure}{0.32\textwidth}
    \includegraphics[width=\linewidth]{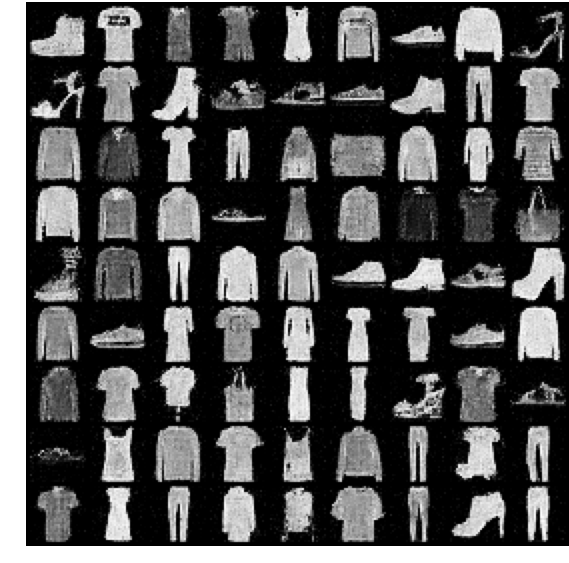}
    \caption{Model\textsubscript{cor} reconstructed}
    \end{subfigure}
    \caption{Various images related to the Fashion-MNIST data set. The two models Model\textsubscript{orig} and Model\textsubscript{cor} are respectively trained on the original data set and the corrupted data set.
    `Generated' means unconditional sampling while `reconstructed' means posterior sampling with \cref{algo:post-sampling} applied to the corrupted images shown in (b).}
    \label{fig:fmnist}
\end{figure}

To illustrate the potential of the training procedure introduced in \cref{subsec:training}, we further conduct experiments on the MNIST and Fashion-MNIST \citep{xiao2017fashion} data sets.
Both data sets are composed of gray-scale images of size $28\times 28$.
MNIST contains hand-written digits whereas Fashion-MNIST contain fashion items taken from Zalando shopping catalog.
Some images of the two data sets are shown in \cref{subfig:mnist-original,subfig:fmnist-original}.

\paragraph{Data Corruption and Experimental Setup\afterhead}

For our experiments, we scale the images to range $[0, 1]$ and corrupt the resulting data with missing rate $0.5$ (\ie each pixel is dropped with $50\%$) and noise of standard deviation $0.1$.
As we only use training images, this results in $60000$ corrupted images for each of the two data sets. We further separate $1000$ images from the $60000$ to form the calibration sets.
We then train the diffusion models from these corrupted images following \cref{algo:training-imperfect}, with $\nWarmup=5000$ warm-up steps, $\nOuter=3$ repeats of the EM procedure, and $\nInner=3000$ inner steps for each repeat (the total number of training steps is thus $14000$).
The learning rate and the batch size are respectively fixed at $10^{-4}$ and $128$.

For the regularization term, we take $\regpar=0.2$ for MNIST and $\regpar=0.1$ for Fashion-MNIST.
The constant $\sureeps$ is set to $10^{-5}$ as before.
As in \citet{ho2020denoising,song2021score}, we note that the use of exponential moving average (EMA) can lead to better performance.
Therefore, we use the EMA model for the posterior sampling step.
The EMA rate is $0.995$ with an update every $10$ training steps.
For comparison, we also train diffusion models on the original data sets with the aforementioned learning rate and batch size for $10000$ steps.
Finally, to examine the influence of the regularization weight $\regpar$ on the generated images, we consider a third model for MNIST trained on top of the $14000$-step model with corrupted data.
For this model, we perform an additional posterior sampling step and then train for another $2000$ steps with $\regpar=1$.
The remaining details, including the model architecture, 
are the same as those for the \texttt{2D Maze} experiment.


\paragraph{Results\afterhead}
In \cref{fig:mnist,fig:fmnist}, we show images from the original data set, from the corrupted data set, and produced by the trained models either by unconditional sampling or data reconstruction with \cref{algo:post-sampling}.
Overall, our models manage to generate images that resemble the ones from the original data set without overly sacrificing the diversity.

Nonetheless, looking at the samples for Fashion-MNIST we clearly see that a lot of details are lost in the images generated by or reconstructed with diffusion models.
In the case of training from perfect data, this can clearly be improved with various modifications to the model including change in model architecture, number of diffusion steps, and/or sampling algorithms~\citep{karras2022elucidating}.
This would become more challenging in the case of training from imperfect data as the image details can be heavily deteriorated by noise or missing pixels.

On the other hand, the effect of the regularization parameter $\regpar$ can be clearly seen in the MNIST experiment from \cref{fig:mnist}.
Larger $\regpar$ enables the model to produce digits that are more `connected' but could cause other artifacts.
As in any data generation task, the definition of a good model, and accordingly the appropriate choice of $\regpar$, varies according to the context.

To summarize, we believe that the proposed training procedure has a great potential to be applied in various areas, including training from noisy and/or incomplete image data, as demonstrated in \cref{fig:mnist,fig:fmnist}.
However, there is still some way to go in making the algorithm being capable of producing high-equality samples for complex data distribution.

\section{Expected Reward Visualization}
\label{apx:visualization}
In \cref{fig:popular-and-niche-data,fig:labeled-arms-data,fig:popuar-and-niche-imperfect,fig:labeled-arms-imperfect,fig:ipinyou-data,fig:maze-data} we provide various visualizations of the bandit mean reward vectors
either of the training sets or generated by the learned priors.

\begin{figure}[t]
    \centering
    \begin{subfigure}{0.95\linewidth}
    \includegraphics[width=\textwidth]{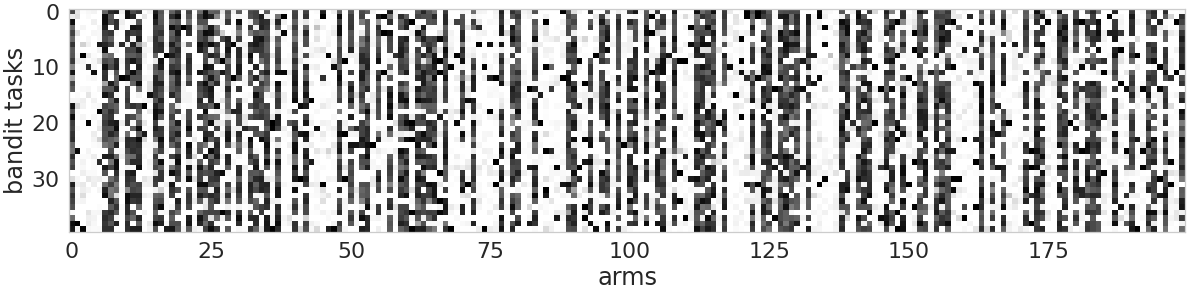}
    \caption{$40$ samples from the perfect training set $\trainset$.}
    \label{subfig:popular-and-niche-data}
    \end{subfigure}
    \\[1em]
    \begin{subfigure}{0.95\linewidth}
    \includegraphics[width=\textwidth]{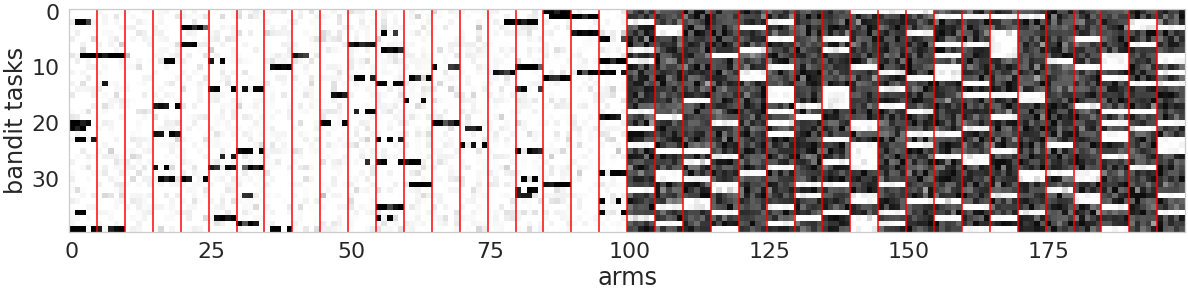}
    \caption{$40$ samples from the perfect training set $\trainset$,
    reordered to put the arms of the same group together. The popular arms are on the right side of the figure.}
    \label{subfig:popular-and-niche-data-vis}
    \end{subfigure}
    \\[1em]
    \begin{subfigure}{0.95\linewidth}
    \includegraphics[width=\textwidth]{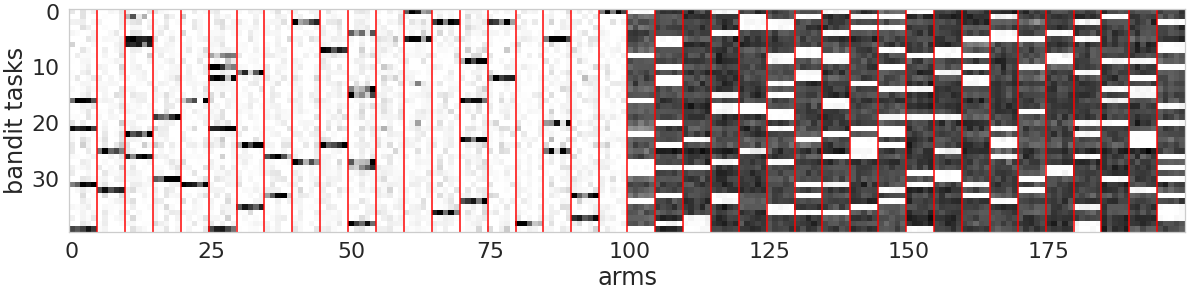}
    \caption{$40$ mean reward vectors generated the diffusion model trained on perfect data,
    reordered to put the arms of the same group together. The popular arms are on the right side of the figure.}
    \label{subfig:popular-and-niche-generated-vis}
    \end{subfigure}
    \\[1em]
    \begin{subfigure}{0.95\linewidth}
    \includegraphics[width=\textwidth]{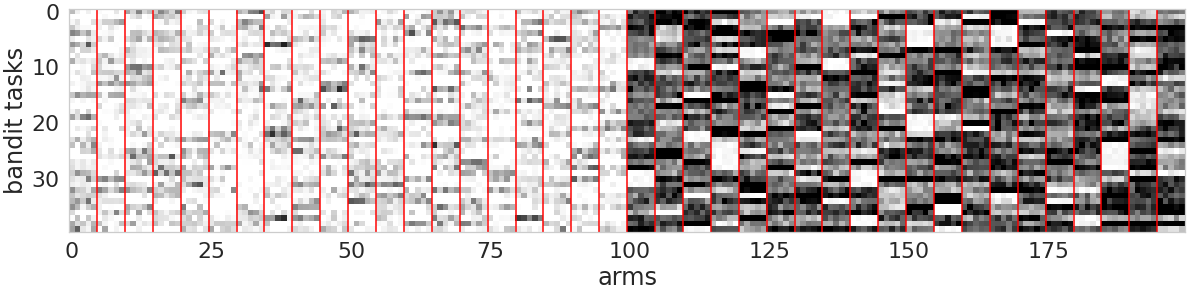}
    \caption{$50$ mean reward vectors generated by the $25$-component GMM fitted on perfect data,
    reordered to put the arms of the same group together. The popular arms are on the right side of the figure.}
    \end{subfigure}
    \\[0.3em]
    \caption{
    Visualization of the mean reward vectors of the \texttt{Popular and Niche} problem.
    Rows and columns correspond to tasks and arms.
    The darker the color the higher the value, with white and black representing respectively
    $0$ and $1$.
    Diffusion models manage to learn the underlying patterns that become recognizable by humans only when the arms are grouped in a specific way.}
    \label{fig:popular-and-niche-data}
\end{figure}
\begin{figure}[t]
    \centering
    \begin{subfigure}{0.95\linewidth}
    \includegraphics[width=\textwidth]{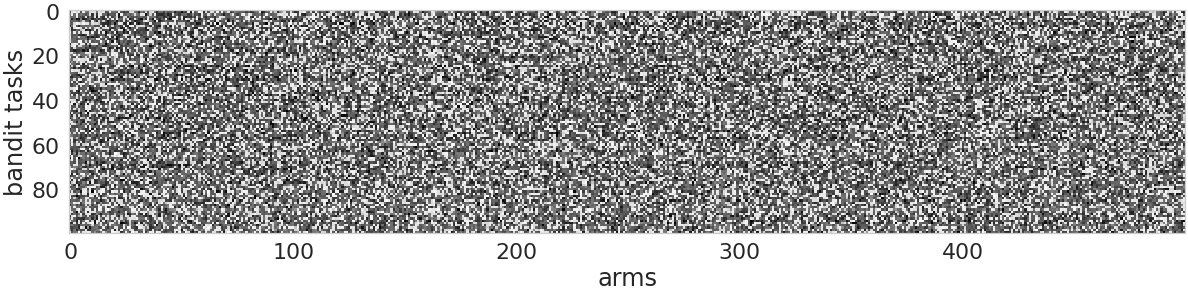}
    \caption{$100$ samples from the perfect training set $\trainset$.}
    \label{subfig:labeled-arms-data}
    \end{subfigure}
    \\[1em]
    \begin{subfigure}{0.95\linewidth}
    \includegraphics[width=\textwidth]{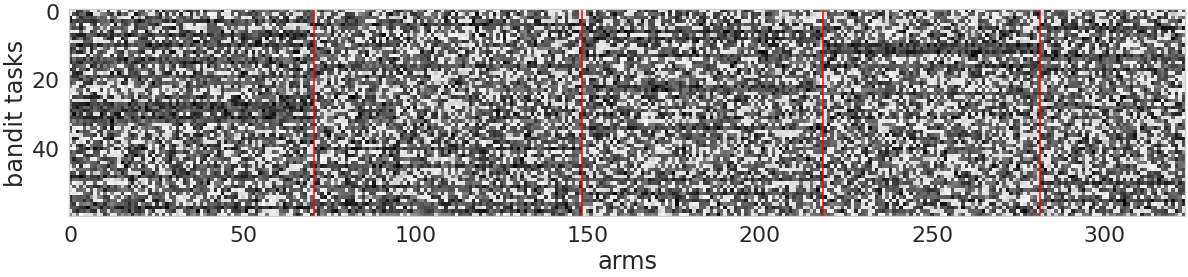}
    \caption{$60$ samples from the perfect training set $\trainset$,
    grouped by labels and showing only $5$ labels. 
    Note that each arm has multiple labels and thus appears in multiple groups.}
    \label{subfig:labeled-arms-data-vis}
    \end{subfigure}
    \\[1em]
    \begin{subfigure}{0.95\linewidth}
    \includegraphics[width=\textwidth]{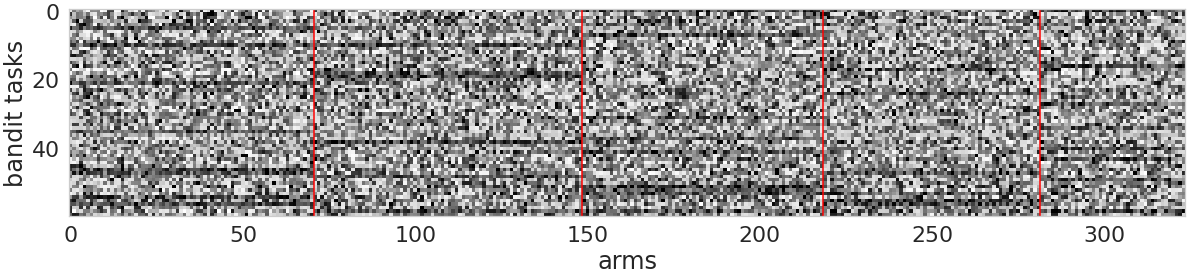}
    \caption{$60$ mean reward vectors generated by the diffusion model trained on perfect data,
    grouped by labels and showing only $5$ labels.
    Note that each arm has multiple labels and thus appears in multiple groups.}
    \label{subfig:labeled-arms-generated-vis}
    \end{subfigure}
    \\[1em]
    \begin{subfigure}{0.95\linewidth}
    \includegraphics[width=\textwidth]{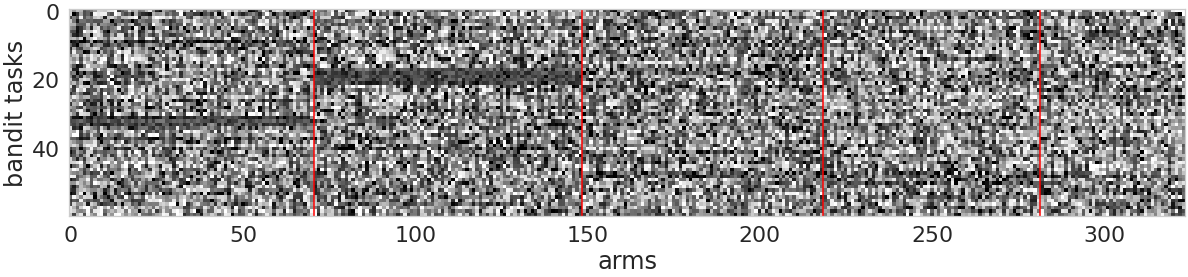}
    \caption{$60$ mean reward vectors generated by the $25$-component GMM fitted on perfect data,
    grouped by labels and showing only $5$ labels.
    Note that each arm has multiple labels and thus appears in multiple groups.}
    \end{subfigure}
    \\[0.3em]
    \caption{
    Visualization of the mean reward vectors of the \texttt{Labeled Arms} problem.
    Rows and columns correspond to tasks and arms.
    The darker the color the higher the value, with white and black representing respectively
    $0$ and $1$.
    While human eyes can barely recognize any pattern in the constructed vectors,
    diffusion models manage to learn the underlying patterns that become recognizable by humans only when the arms are grouped in a specific way.
    }
    \label{fig:labeled-arms-data}
\end{figure}
\begin{figure}[p!]
    \centering
    \begin{subfigure}{0.95\linewidth}
    \includegraphics[width=\textwidth]{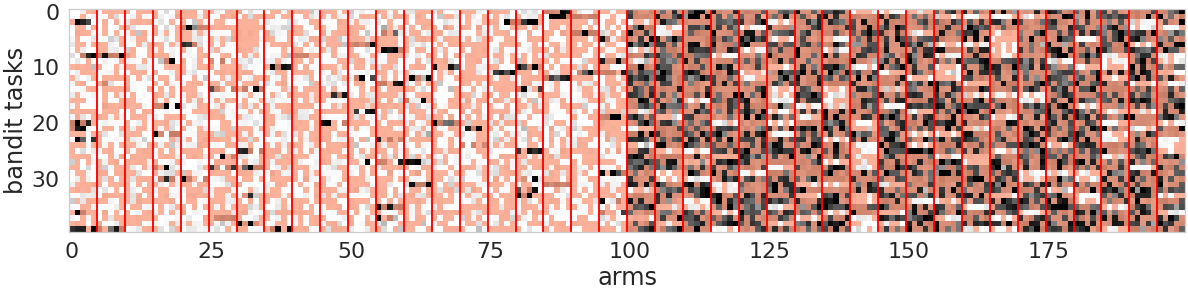}
    \caption{$40$ samples from the imperfect training set $\trainsetdeg$. Red squares indicate missing values.}
    \label{subfig:popuar-and-niche-corrupted}
    \end{subfigure}
    \\[1em]
    \begin{subfigure}{0.95\linewidth}
    \includegraphics[width=\textwidth]{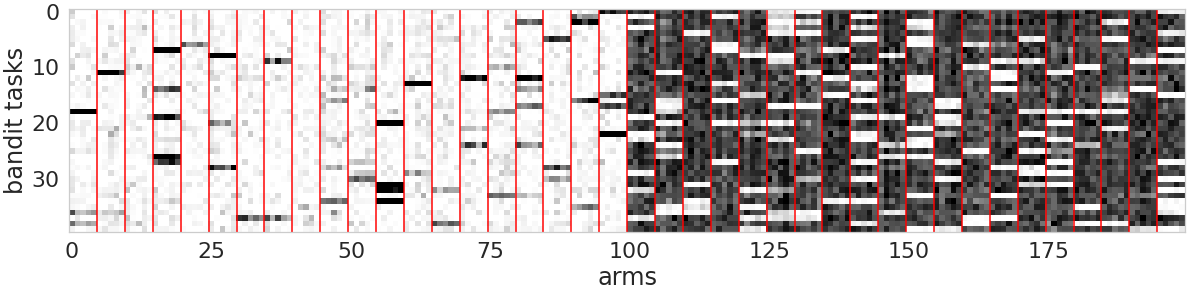}
    \caption{$40$ mean reward vectors generated by the diffusion model trained on imperfect data.}
    \label{subfig:popuar-and-niche-generated-corrupted}
    \end{subfigure}
    \\[0.3em]
    \caption{
    Mean reward vectors of the \texttt{Popular and Niche} problem.
    Rows and columns correspond to tasks and arms.
    For ease of visualization, the arms are reordered so that arms of the same group are put together and popular arms are on the right of the figures.
    The darker the color the higher the value, with white and black representing respectively
    $0$ and $1$.}
    \label{fig:popuar-and-niche-imperfect}
\end{figure}
\begin{figure}[p!]
    \centering
    \begin{subfigure}{0.95\linewidth}
    \includegraphics[width=\textwidth]{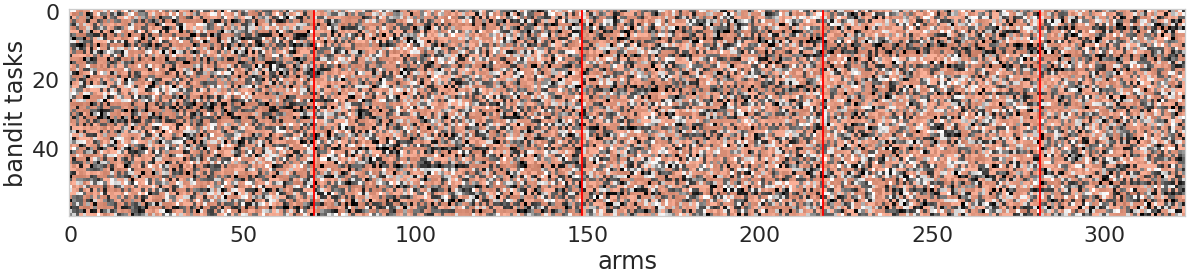}
    \caption{$60$ samples from the imperfect training set $\trainsetdeg$. Red squares indicate missing values.}
    \label{subfig:labeled-arms-corrupted}
    \end{subfigure}
    \\[1em]
    \begin{subfigure}{0.95\linewidth}
    \includegraphics[width=\textwidth]{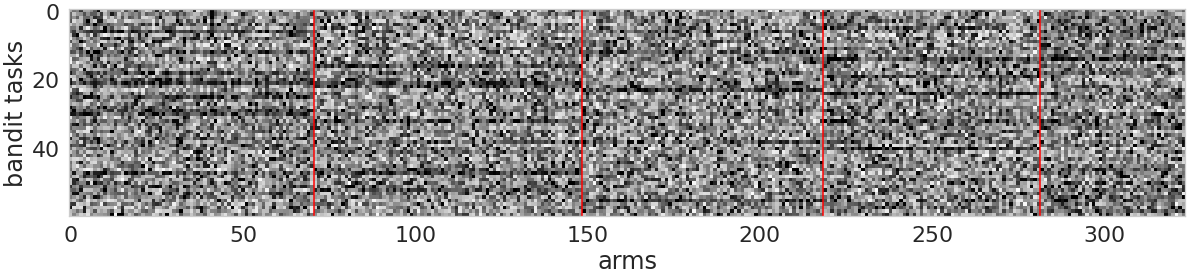}
    \caption{$60$ mean reward vectors generated by the diffusion model trained on imperfect data.}
    \label{subfig:labeled-arms-generated-corrupted}
    \end{subfigure}
    \\[0.3em]
    \caption{
    Mean reward vectors of the \texttt{Labeled Arms} problem.
    Rows and columns correspond to tasks and arms.
    For ease of visualization, the arms are grouped by labels and only arms that are associated to $5$ labels are shown.
    The darker the color the higher the value, with white and black representing respectively
    $0$ and $1$.
    }
    \label{fig:labeled-arms-imperfect}
\end{figure}
\begin{figure}[!p]
    \centering
    \begin{subfigure}{\linewidth}
    \includegraphics[width=0.95\textwidth]{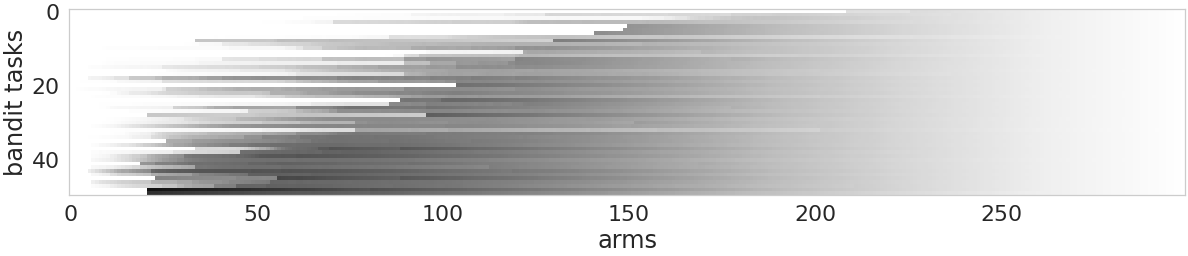}
    \caption{$50$ samples from the perfect training set $\trainset$.}
    \end{subfigure}
    \\[0.65em]
    \begin{subfigure}{\linewidth}
    \includegraphics[width=0.95\textwidth]{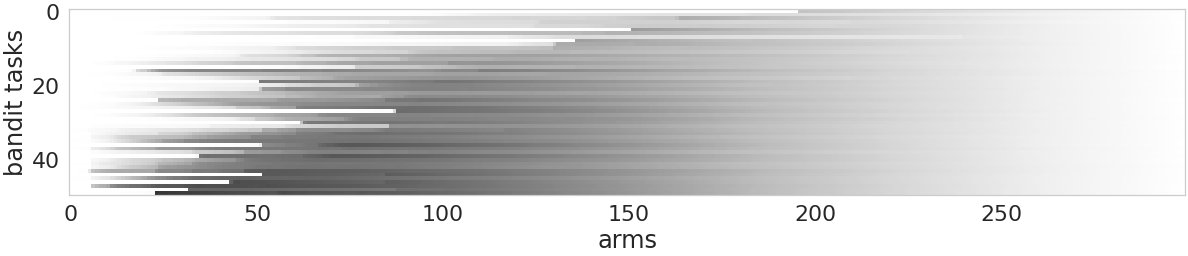}
    \caption{$50$ mean reward vectors generated by the diffusion model trained on perfect data.}
    \end{subfigure}
    \\[0.65em]
    \begin{subfigure}{\linewidth}
    \includegraphics[width=0.95\textwidth]{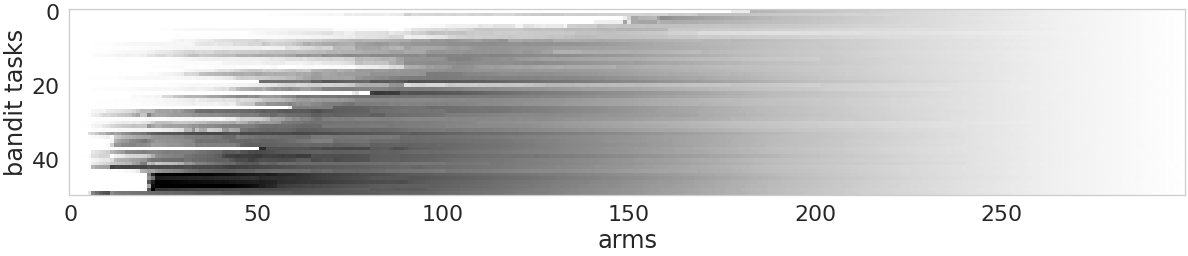}
    \caption{$50$ mean reward vectors generated by the 25-component GMM fitted on perfect data.}
    \end{subfigure}
    \\[0.65em]
    \begin{subfigure}{\linewidth}
    \includegraphics[width=0.95\textwidth]{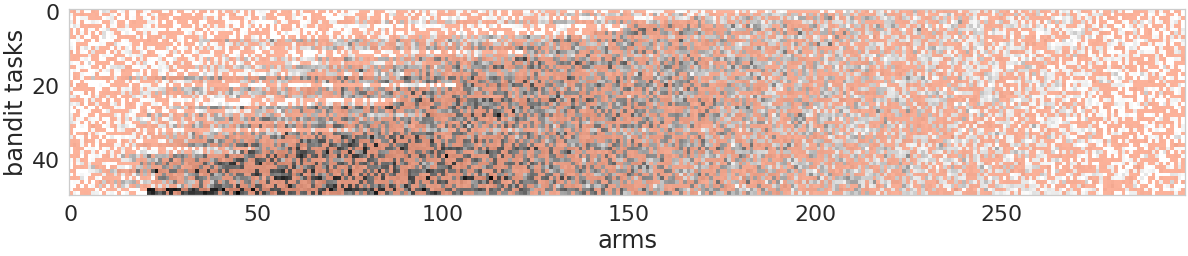}
    \caption{$50$ samples from the imperfect training set $\trainsetdeg$. Red squares indicate missing values.}
    \end{subfigure}
    \\[0.65em]
    \begin{subfigure}{\linewidth}
    \includegraphics[width=0.95\textwidth]{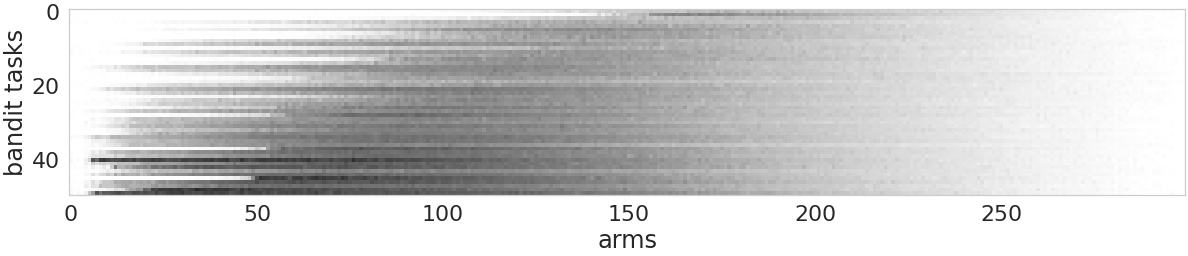}
    \caption{$50$ mean reward vectors generated by the diffusion model trained on imperfect data.}
    \end{subfigure}
    \\[0.3em]
    \caption{
    Mean reward vectors of the \texttt{iPinYou Bidding} problem.
    Rows and columns correspond respectively to tasks and arms.
    For visualization purpose, we order the tasks by the position of their optimal arm.
    The darker the color the higher the value, with white and black representing respectively $0$ and $1$.}
    \label{fig:ipinyou-data}
\end{figure}
\begin{figure}[!p]
    \centering
    \begin{subfigure}{\linewidth}
    \centering
    \includegraphics[width=0.25\textwidth]{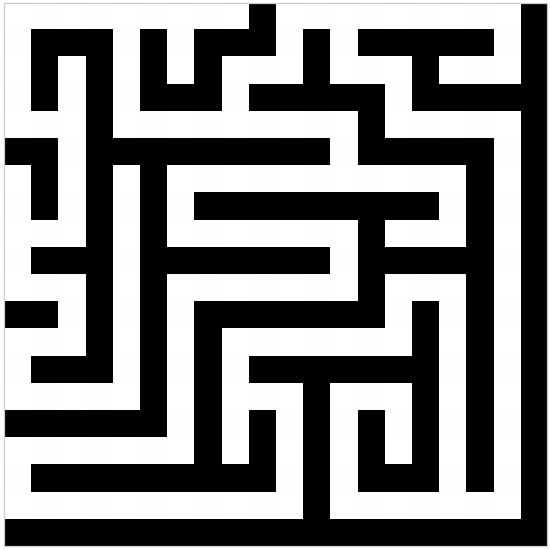}
    \hspace{1em}
    \includegraphics[width=0.25\textwidth]{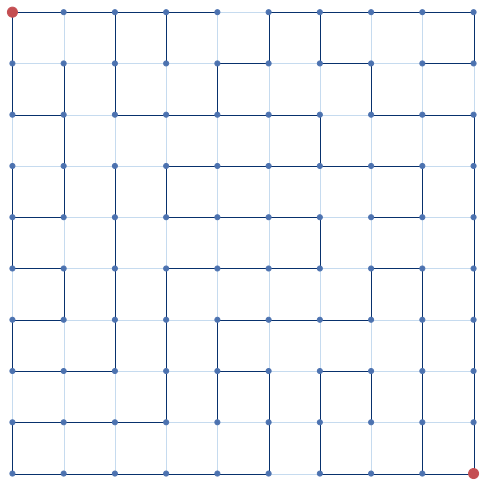}
    \hspace{1em}
    \includegraphics[width=0.25\textwidth]{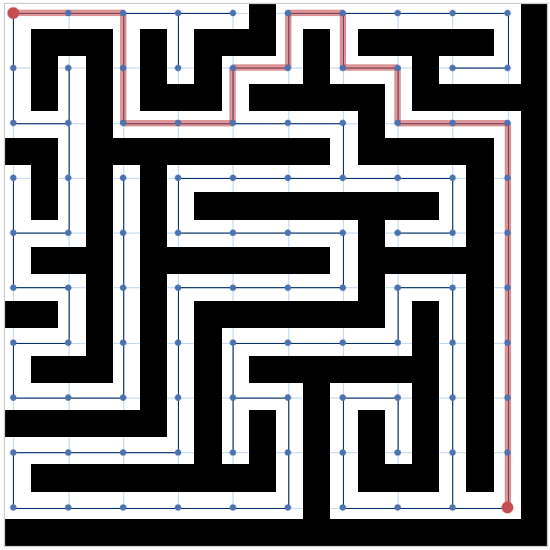}
    \caption{Sample from the perfect training set $\trainset$.}
    \end{subfigure}
    \\[0.65em]
    \begin{subfigure}{0.485\linewidth}
    \centering
    \includegraphics[width=0.49\textwidth]{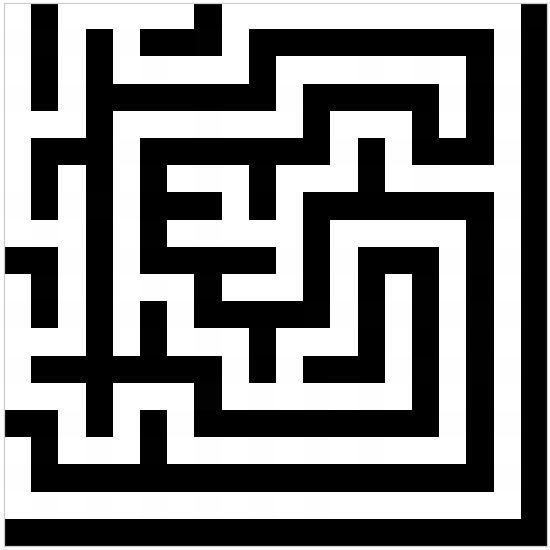}
    \hfill
    \includegraphics[width=0.49\textwidth]{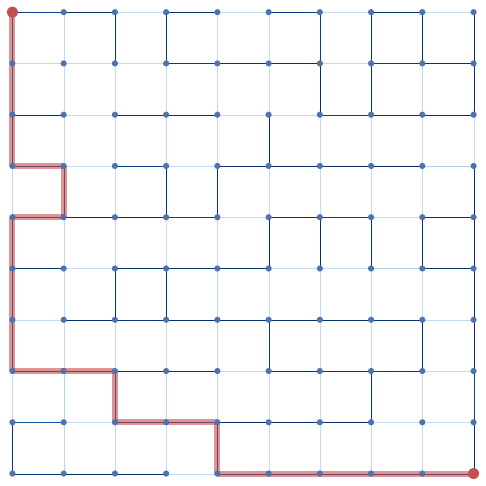}
    \caption{Sample generated by the diffusion model trained on perfect data.}
    \end{subfigure}
    \hfill
    \begin{subfigure}{0.485\linewidth}
    \centering
    \includegraphics[width=0.49\textwidth]{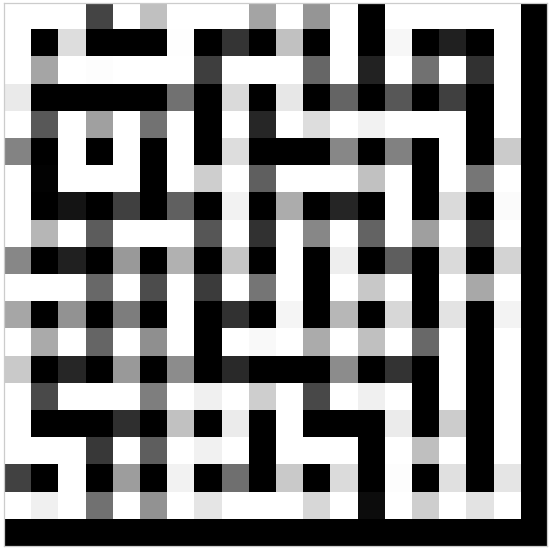}
    \hfill
    \includegraphics[width=0.49\textwidth]{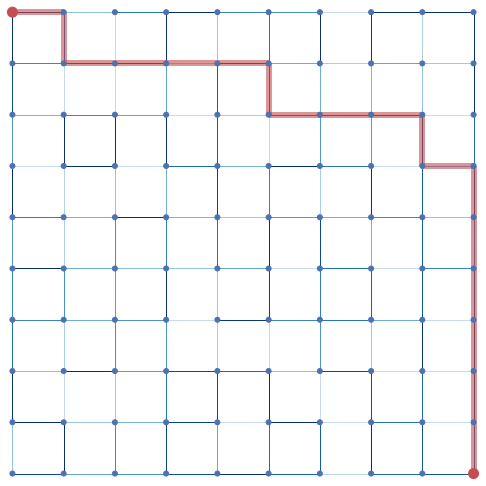}
    \caption{Sample generated by the $25$-component GMM fitted on prefect data.}
    \end{subfigure}
    \\[0.65em]
    \begin{subfigure}{0.485\linewidth}
    \centering
    \includegraphics[width=0.49\textwidth]{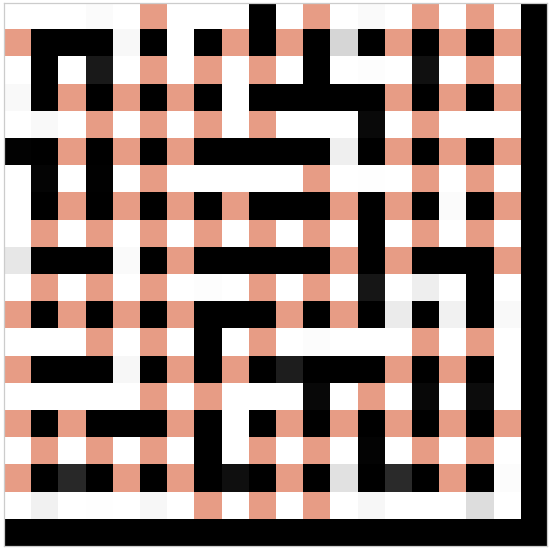}
    \hfill
    \includegraphics[width=0.49\textwidth]{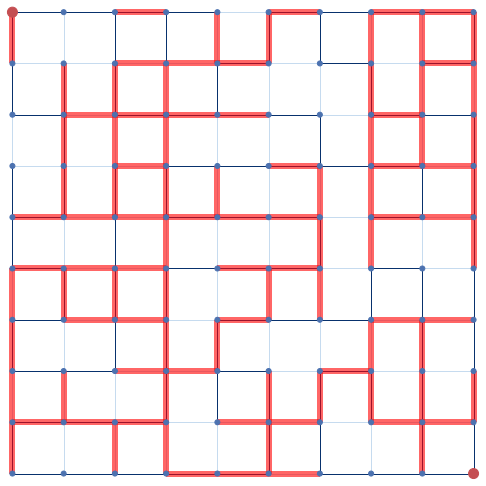}
    \caption{Sample from the imperfect training set $\trainsetdeg$. Red squares and edges indicate missing values.}
    \end{subfigure}
    \hfill
    \begin{subfigure}{0.485\linewidth}
    \centering
    \includegraphics[width=0.49\textwidth]{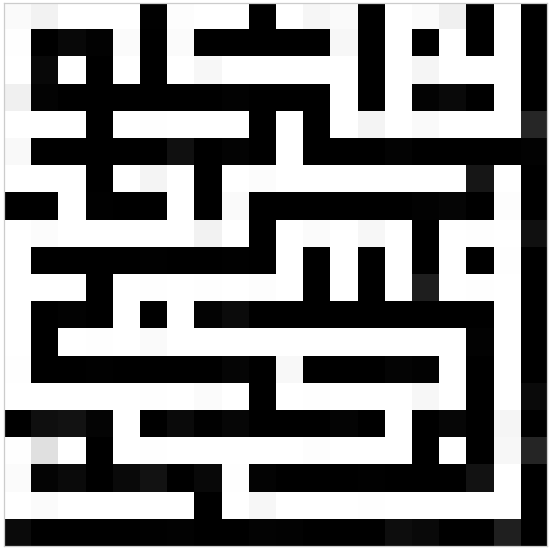}
    \hfill
    \includegraphics[width=0.49\textwidth]{figures/maze/maze_diffusion_cor_graph_shortest_path.png}
    \caption{Sample generated by the diffusion model trained on imperfect data.}
    \end{subfigure}
    \\[0.3em]
    \caption{
    The weighted grid graphs and the corresponding 2D maze representations of the \texttt{2D Maze} problem.
    For visualization, the weights (mean rewards) are first clipped to $[-1, 0]$.
    Then, for the grid graphs darker the color higher the mean reward (\ie closer to $0$) while for the maze representations it is the opposite.
    Also note that for the maze representations only a part of the pixels correspond the the edges of the grid graphs, while the remaining pixels are filled with default colors (black or white).
    The red paths indicate the optimal (super-)arms.}
    \label{fig:maze-data}
\end{figure}

\end{document}